\newtheorem{observation}{Observation}
\newtheorem{proposition}{Proposition}
\def\BibTeX{{\rm B\kern-.05em{\sc i\kern-.025em b}\kern-.08em
    T\kern-.1667em\lower.7ex\hbox{E}\kern-.125emX}}
\begin{document}

\title{Catastrophic forgetting and mode collapse in GANs}

\author{\IEEEauthorblockN{1\textsuperscript{st} Hoang Thanh-Tung}
\IEEEauthorblockA{\textit{Applied Artificial Intelligence Institute} \\
\textit{Deakin University}\\
hoangtha@deakin.edu.au}
\and
\IEEEauthorblockN{2\textsuperscript{nd} Truyen Tran}
\IEEEauthorblockA{\textit{Applied Artificial Intelligence Institute} \\
\textit{Deakin University}\\
truyen.tran@deakin.edu.au}
}

\maketitle

\begin{abstract}
In this paper, we show that Generative Adversarial Networks (GANs) suffer from catastrophic forgetting even when they are trained to approximate a single target distribution. 
We show that GAN training is a continual learning problem in which the sequence of changing model distributions is the sequence of tasks to the discriminator.
The level of mismatch between tasks in the sequence determines the level of forgetting.
Catastrophic forgetting is interrelated to mode collapse and can make the training of GANs non-convergent. 
We investigate the landscape of the discriminator's output in different variants of GANs and find that when a GAN converges to a good equilibrium, real training datapoints are wide local maxima of the discriminator.
We empirically show the relationship between the sharpness of local maxima and mode collapse and generalization in GANs.
We show how catastrophic forgetting prevents the discriminator from making real datapoints local maxima, and thus causes non-convergence.
Finally, we study methods for preventing catastrophic forgetting in GANs.
\end{abstract}

\begin{IEEEkeywords}
GANs, generative, catastrophic forgetting, mode collapse
\end{IEEEkeywords}

\section{Introduction}



GANs \cite{gan, predictabilityMinimization} are a powerful tool for modeling complex distributions. Training a GAN to approximate a single target distribution is often considered as a single task.
In this paper, we introduce a novel view of GAN training as a continual learning problem in which the sequence of changing model distributions are considered as the sequence of tasks. We discover a surprising result that GANs  suffer from catastrophic forgetting, a problem often observed in continual learning settings \cite{ewc}. 
Catastrophic forgetting (CF) in artificial neural networks  \cite{catastrophicInterference, ratcliff1990connectionist, catastrophicConnectionist} is the problem where the knowledge of previously learned tasks is abruptly destroyed by the learning of the current task.
When a GAN suffers from CF, it exhibits undesired behaviors such as mode collapse and non-convergence.

In section \ref{sec:catastrophic}, we show that GAN training is actually a continual learning problem and demonstrate the CF problem on a number of datasets. 
We show that catastrophic forgetting and mode collapse \cite{gan} are two different but interrelated problems and together, they can make the training of GANs non-convergent (section \ref{sec:cfGAN}, \ref{sec:ctEffect}).
To avoid mode collapse and improve convergence, it is important to address the CF problem.
We identify 2 factors that causes CF in GANs: 1) Information from previous tasks is not used in the current task, 2) Knowledge from previous tasks is not usable for the current task and vice versa.
Our findings shed light on how to avoid catastrophic forgetting to learn the target distribution properly (Section \ref{sec:methods}).

In section \ref{sec:landscape}, we investigate the effect of CF and mode collapse on the landscape of the discriminator's output. 
We find that when a GAN converge to a good local equilibrium without mode collapse, real datapoints are wide local maxima of the discriminator.
We show that the sharper the local maxima are, the more severe mode collapse is.
Section \ref{sec:ctEffect} shows that when CF happen, the discriminator is directionally monotonic. A GAN with a directionally monotonic discriminator does not converge to an equilibrium. The fact confirms that CF is a cause of non-convergence.

Section \ref{sec:methods} explains how state-of-the-art methods for stabilizing GANs such as Wasserstein GAN \cite{wgan, wgangp}, zero-centered gradient penalty on training examples (GAN-R1) \cite{whichGANConverge}, zero-centered gradient penalty on interpolated samples (GAN-0GP) \cite{improveGeneralization}, and optimizers with momentum, can prevent CF and mode collapse. Finally, we introduce a new loss function that helps preventing CF while adding zero computational overhead.

\subsection*{Contributions:}
\begin{enumerate}
\item We detect the CF problem in GANs. 
\item We show the relationship between CF, mode collapse, and non-convergence.
\item We study the relationship between the sharpness of local maxima and mode collapse.
\item We show that CF tends to make the discriminator directionally monotonic around real datapoints.
\item We identify the causes of CF and explain the effectiveness of methods for preventing CF in GANs. 
\end{enumerate}


\section{Related works}
\textbf{Convergence.} Prior works on the convergence of GANs usually consider the convergence in parameter space \cite{ganStable, ttur, numericGAN, whichGANConverge}.
However, convergence in parameter space tells little about the quality of the equilibrium that a GAN converge to. For example,
\citeauthor{improveGeneralization} demonstrated that TTUR \cite{ttur} can make GAN converge to collapsed equilibrium.
Consensus Optimization \cite{numericGAN} can introduce spurious local equilibria with unknown properties to the game.

We directly study the behaviors of GANs in the data space. 
By analyzing the discriminator's output landscape, we find that when a GAN converges, real datapoints are local maxima of the discriminator. We discover the relationship between the sharpness of local maxima and mode collapse, generalization.

\textbf{Catastrophic forgetting.} \citeauthor{continualGAN} \cite{continualGAN} studied the standard continual learning setting in which a GAN is trained to generate samples from a set of distributions introduced sequentially. The problem is solved by the direct application of continual learning algorithms such as Elastic Weight Consolidation (EWC) \cite{ewc} to GANs. \citeauthor{ganIsCont} \cite{ganIsCont} independently came up with a similar intuition that GAN training is a continual learning problem.\footnote{Liang et al. came up with the idea a few months after us. They agreed that we are the first to consider the catastrophic forgetting problem in a single GAN. Their preprint has not been published at any conferences or journals.} The paper, however, did not study the causes and effects of the problem and focused on applying continual learning algorithms to address catastrophic forgetting in GANs. We focus on explaining the causes and effect of the problem and its relationship to mode collapse and non-convergence.

\section{Catastrophic forgetting problem in GANs}
\label{sec:catastrophic}

\begin{table*}
\centering
\begin{tabulary}{\linewidth}{|L|L|L|}
\hline
& $\mathcal{L}_D$ & $\mathcal{L}_G$ \\
\hline
WGANGP & $-\mathbb{E}_{\bm x \sim p_r}[D(\bm x)] $ $+ \mathbb{E}_{\bm z \sim p_z}[D(G(\bm z))] + \lambda \mathbb{E}_{\bm u}[(\norm{(\nabla D)_{\bm u}} - 1)^2]$ & $-\mathbb{E}_{\bm z \sim p_z}[D(G(\bm z))]$ \\
 & where $\bm u = \alpha \bm x + (1 - \alpha) \bm y; \bm x \sim p_x, \bm y \sim p_g, \alpha \sim \mathcal{U}(0, 1)$ & \\
\hline
GAN-NS & $\mathbb{E}_{\bm x \sim p_r}[-\log(D(\bm x))] + \mathbb{E}_{\bm z \sim p_z}[-\log(1 - D(G(\bm z)))]$ & $\mathbb{E}_{\bm z \sim p_z} [-\log(D(G(\bm z))) ] $ \\
\hline
GAN-R1 & $\mathbb{E}_{\bm x \sim p_r}[-\log(D(\bm x))] + \mathbb{E}_{\bm z \sim p_z}[-\log(1 - D(G(\bm z)))] + \lambda \mathbb{E}_{\bm x \sim p_r}[\norm{(\nabla D)_{\bm x}}^2]$
 & $\mathbb{E}_{\bm z \sim p_z} [-\log(D(G(\bm z))) ] $ \\
\hline
GAN-0GP & $\mathbb{E}_{\bm x \sim p_r}[-\log(D(\bm x))] + \mathbb{E}_{\bm z \sim p_z}[-\log(1 - D(G(\bm z)))] + \lambda \mathbb{E}_{\bm u}[\norm{(\nabla D)_{\bm u}}^2]$
 & $\mathbb{E}_{\bm z \sim p_z} [-\log(D(G(\bm z))) ] $  \\
 & where $\bm u = \alpha \bm x + (1 - \alpha) \bm y; \bm x \sim p_x, \bm y \sim p_g, \alpha \sim \mathcal{U}(0, 1)$ & \\
%
%
\hline
\end{tabulary}
\caption{Loss functions of GAN variants considered in this paper.}
\label{tab:loss}
\end{table*}

\subsection{GANs training as continual learning problems}
Let us consider a GAN with generator $G(\cdot; \bm \theta): \mathbb{R}^{d_z} \rightarrow \mathbb{R}^d$, a continuous function with parameter $\bm \theta \in \mathbb{R}^m$; and discriminator $D(\cdot; \bm \psi): \mathbb{R}^d \rightarrow \mathbb{R}$, a continuous function with parameter $\bm \psi \in \mathbb{R}^n$. 
$G$ transforms a $d_z$-dimensional noise distribution $p_z$ to a $d$-dimensional model distribution $p_g$ that approximates a $d$-dimensional target distribution $p_r$.
$D$ maps $d$-dimensional inputs to $1$-dimensional outputs.
Let $\mathcal{L}_D$ be the loss function for $D$, $\mathcal{L}_G$ be the loss function for $G$ (Table \ref{tab:loss}).
In practice, $G$ and $D$ are neural networks trained by alternating SGD \cite{gan}.

At each iteration of the training process, $G$ is updated to better fool $D$. 
$p_g^t$, the model distribution at iteration $t$, is different from the model distribution at the previous iteration $p_g^{t - 1}$ and the next iteration $p_g^{t + 1}$. 
The knowledge required to separate $p_g^t$ from $p_r$ is different from that for the pair $\{p_g^{t - 1}, p_r\}$.
$\{p_g^{t - 1}, p_r\}$ and $\{p_g^{t}, p_r\}$ are two different classification tasks to the discriminator.\footnote{In the original theoretical formulation of GAN, at every GAN iteration, the discriminator and the generator are trained until convergence \cite{gan}. That means $p_g^t$ can be arbitrarily different from $p_g^{t-1}$. In practice, at each iteration, only a limited number of gradient updates are applied to the players. We can consider a chunk of consecutive model distributions as a task to the discriminator.}
The sequence of changing model distributions $\left\lbrace p_g^i \right\rbrace_{i = 1}^T$ and the target distribution $p_r$ form a sequence of tasks $\left\lbrace \mathcal{T}^i = \{ p_g^i, p_r \} \right\rbrace_{i = 1}^T$ to the discriminator.
Because the generator at iteration $t$, $G^t$, can only generate samples from $p_g^t$, $D^t$, the discriminator at iteration $t$, cannot access samples from previous model distributions $p_g^{<t}$. 
That makes the learning process of $D$ a continual learning problem.
Similarly, the generator has to fool a sequence of changing discriminators $\{ D^i \}_{i = 1}^T$. The training process of a GAN poses a different continual learning problem to each of the players. In this paper, we focus on the continual learning problem in the discriminator as many prior works have showed that the quality of a GAN mainly depends on its discriminator \cite{equiAndGeneralization, doGANLearnDist, disGenTradeoff}.

If the sequence $p_g^t$ converges to a distribution $p_g^*$, then the sequence of tasks $\left\lbrace \mathcal{T}^i \right\rbrace_{i = 1}^T$ converges to a single task of separating 2 distributions $p_g^*$ and $p_r$. 
In practice, however, the sequence of model distributions does not always converge. 
\citeauthor{ganStable} \cite{ganStable} formally proved that the players in Wasserstein GAN \cite{wgan} do not converge to an equilibrium but oscillate in a small cycle around the equilibrium. 
Although non-saturating GAN (GAN-NS) \cite{gan} was proven to be convergent under strong assumptions \cite{ganStable, ttur}, 
\citeauthor{manypatheq} \cite{manypatheq} observed that on many real world datasets, the distance between $p_g^t$ and $p_r$ (measured in  KL-divergence and Jensen-Shannon divergence) does not decrease as $t$ increases.
The authors suggested that $p_g$ can approach $p_r$ in many different and unpredictable ways. 
These results imply that in the most common variants of GANs, $p_g^t$ can be arbitrarily different from $p_g^{t - n}$ for large $n$. 
If the knowledge used for separating $p_g^t$ and $p_r$ cannot be used for separating $p_g^{t - n}$ and $p_r$, a discriminator trained on $\mathcal{T}^t$ could forget $\mathcal{T}^{t - n}$, i.e. it classifies samples in $\mathcal{T}^{t - n}$ wrongly (Fig. \ref{fig:gannsScore}).
When this happens, we say that the discriminator exhibits catastrophic forgetting behaviors. 



\subsection{Catastrophic forgetting in GANs}
\label{sec:cfGAN}


\begin{figure*}[ht!]
\centering
\subfloat[Iteration 3000]{\includegraphics[width=0.25\textwidth]{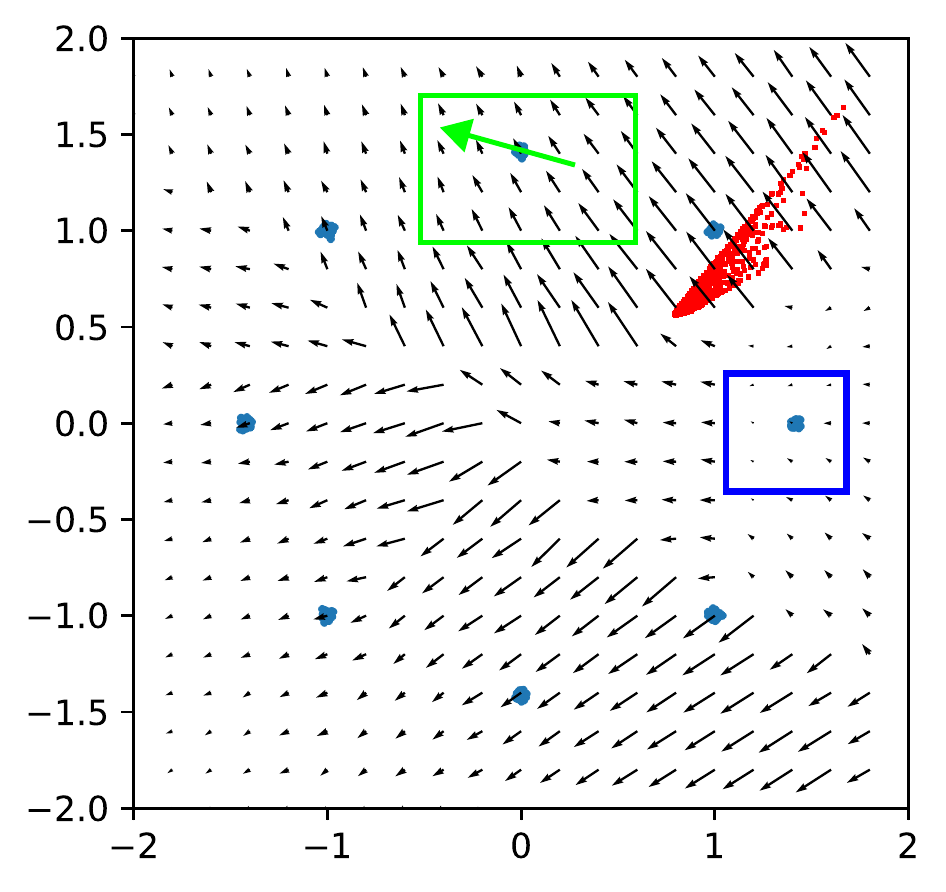}{\label{fig:8Gauss3000}}}
\subfloat[Iteration 3500]{\includegraphics[width=0.25\textwidth]{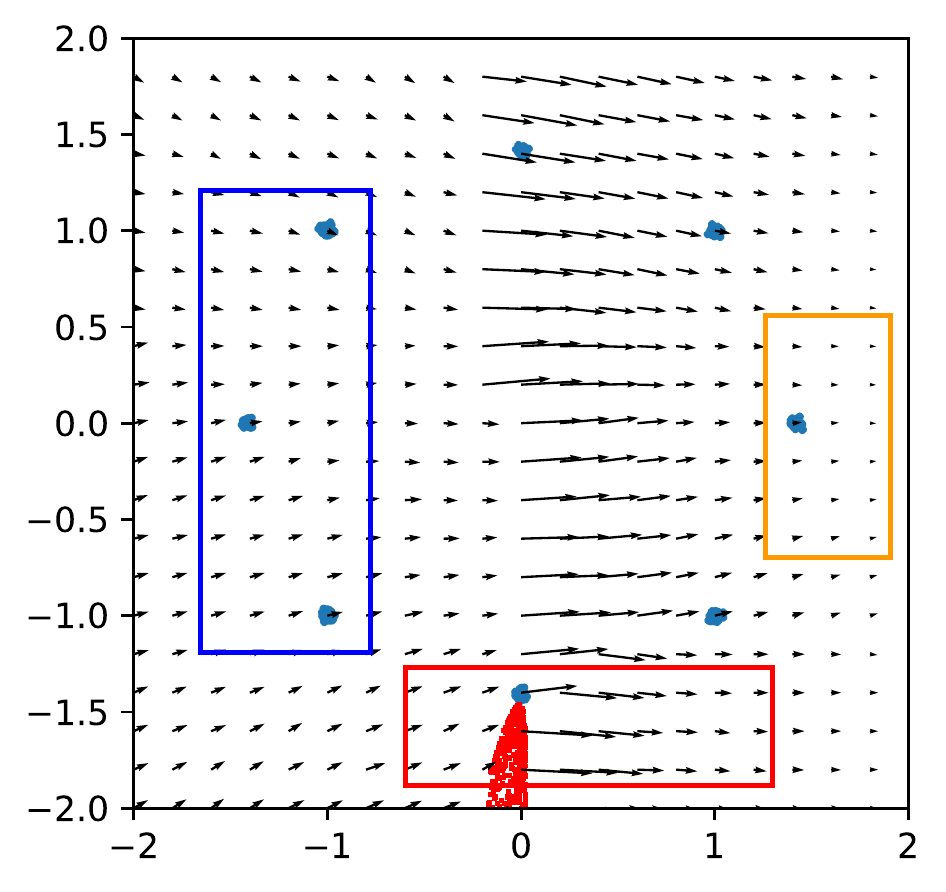}\label{fig:8Gauss3500}}
\subfloat[Iteration 3600]{\includegraphics[width=0.25\textwidth]{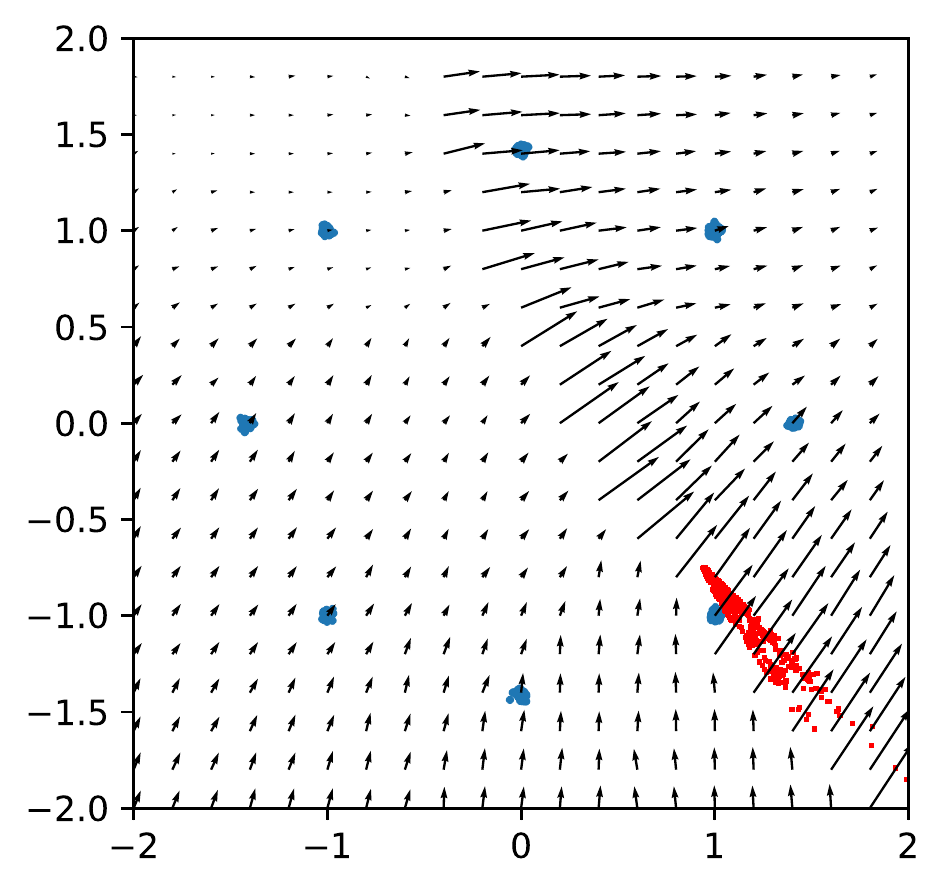}\label{fig:8Gauss3600}}
\subfloat[Iteration 20000]{\includegraphics[width=0.25\textwidth]{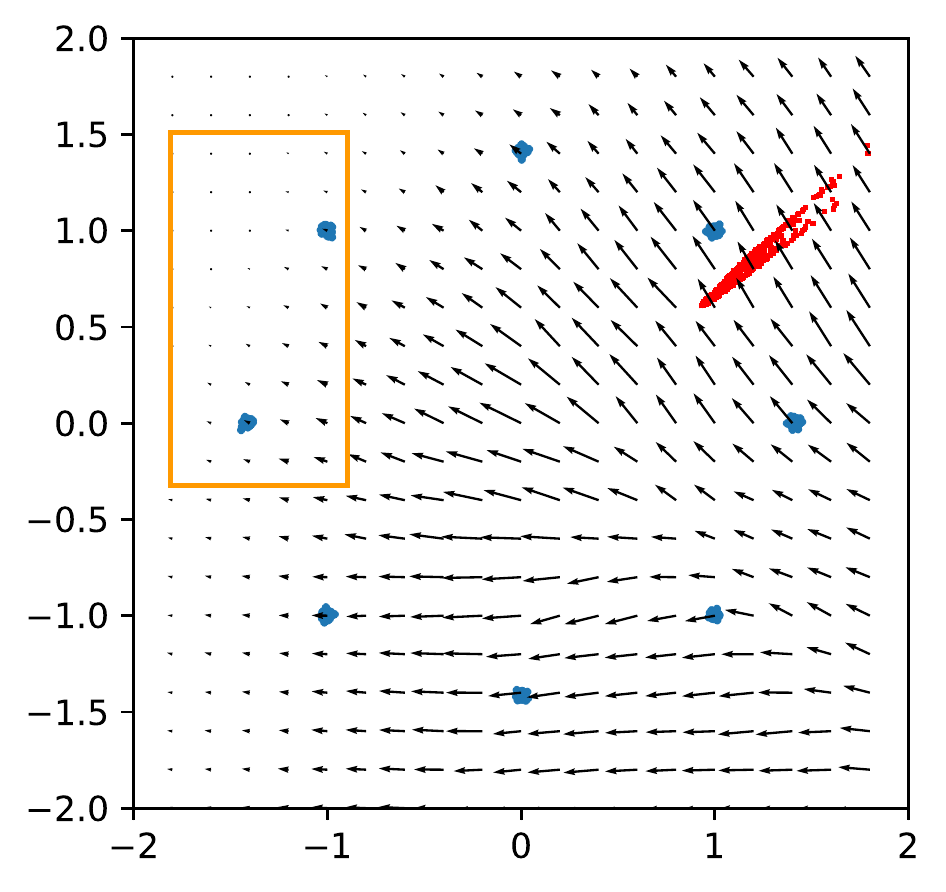}{\label{fig:8Gauss20000}}} 


\subfloat[Iteration 1000]{\includegraphics[width=0.25\textwidth]{/continual-gan-8Gaussians-gradfield-center-0.00-alpha-1.0-lambda-10.00-lrg-0.00300-lrd-0.00300-nhidden-512-scale-2.00-optim-SGD-gnlayers-2-dnlayers-2-gradweight-0.0100-discount-0.9900-batch-1-ewcweight-0.00-ncritic-1/fig-01000.pdf}{\label{fig:8GaussR11000}}}
\subfloat[Iteration 2500]{\includegraphics[width=0.25\textwidth]{/continual-gan-8Gaussians-gradfield-center-0.00-alpha-1.0-lambda-10.00-lrg-0.00300-lrd-0.00300-nhidden-512-scale-2.00-optim-SGD-gnlayers-2-dnlayers-2-gradweight-0.0100-discount-0.9900-batch-1-ewcweight-0.00-ncritic-1/fig-02500.pdf}{\label{fig:8GaussR12500}}}
\subfloat[Iteration 5000]{\includegraphics[width=0.25\textwidth]{/continual-gan-8Gaussians-gradfield-center-0.00-alpha-1.0-lambda-10.00-lrg-0.00300-lrd-0.00300-nhidden-512-scale-2.00-optim-SGD-gnlayers-2-dnlayers-2-gradweight-0.0100-discount-0.9900-batch-1-ewcweight-0.00-ncritic-1/fig-05000.pdf}{\label{fig:8GaussR15000}}}
\subfloat[Adam. Iteration 1500]{\includegraphics[width=0.25\textwidth]{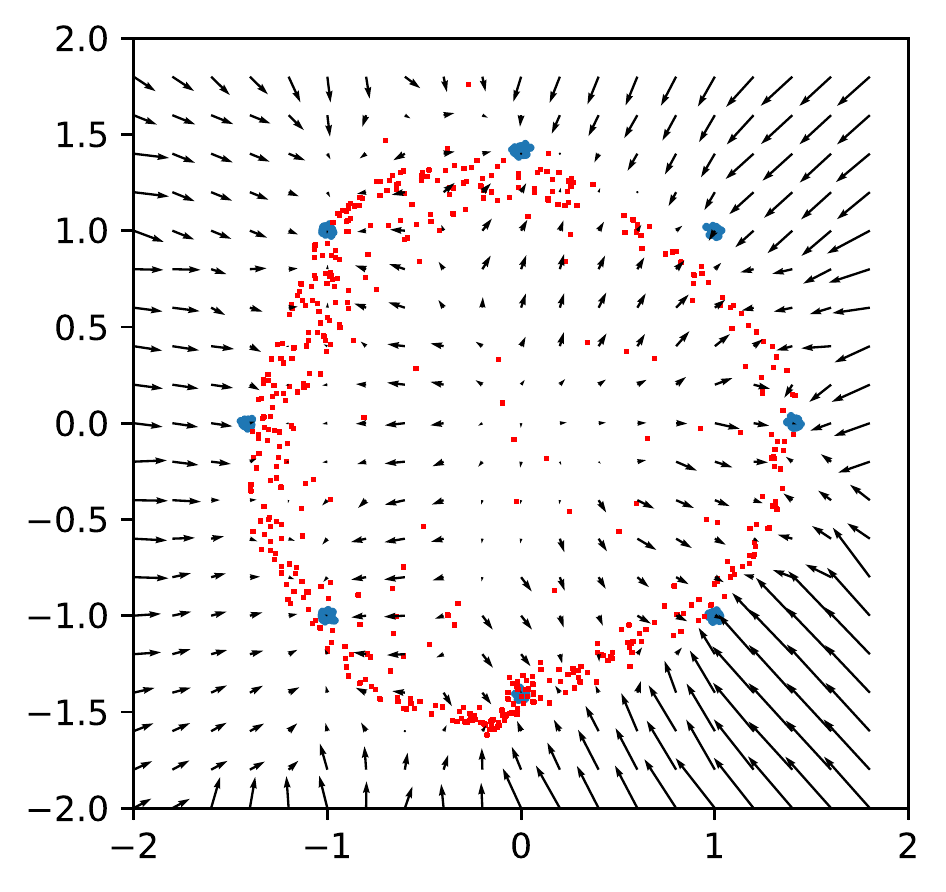}{\label{fig:8GaussAdam1500}}}
\caption{\protect\subref{fig:8Gauss3000} - \protect\subref{fig:8Gauss20000} catastrophic forgetting in GAN-NS trained on the 8 Gaussian dataset. 
\protect\subref{fig:8GaussR11000} - \protect\subref{fig:8GaussR15000} GAN-R1 with $\lambda = 10$. GAN-0GP and WGAN-GP exhibit similar behaviors on this dataset. \protect\subref{fig:8GaussAdam1500} GAN-NS trained with Adam. 
Viewing on computer is recommended.}
\label{fig:8Gauss}
\end{figure*}

%


\subsubsection{Catastrophic forgetting on synthetic dataset}
\label{sec:ct8Gauss}

We begin by analyzing the problem on the 8 Gaussian dataset, a dataset generated by a mixture of 8 Gaussians placed on a circle.
In Fig. \ref{fig:8Gauss}, red datapoints are generated samples, blue datapoints are real samples. 
The discriminator and generator are 2 hidden layer MLP with 64 hidden neurons.
ReLU activation function was used.
$p_z$ is a 2-dimensional standard normal distribution.
SGD with constant learning rate of $\alpha = 3 \times 10^{-3}$ was used for both networks.
The vector at a datapoint $\bm x$ shows the negative gradient $-\nicefrac{\partial\mathcal{L}_G}{\partial \bm x}$. 
The vector shows the direction in which $\mathcal{L}_G$ decreases the fastest. 
The length of the vector corresponds to the speed of change in $\mathcal{L}_G$. Because the gradient field is conservative, the the difference between the loss of two datapoints $\bm x_0$ and $\bm x_1$ is:
\begin{equation}
\mathcal{L}_G (\bm x_0) - \mathcal{L}_G (\bm x_1) = \int_{\mathcal{C}} \bm v \cdot d\bm s
\label{eqn:lineint}
\end{equation}
where $\bm v = -\nicefrac{\partial \mathcal{L}_G}{\partial \bm x}$ and $\mathcal{C}$ is a path from $\bm x_0$ to $\bm x_1$.
For the variants in Table \ref{tab:loss}, $\nicefrac{\partial\mathcal{L}_G}{\partial \bm x}$ only depends on $\bm x$ and $D$.
Because decreasing $\mathcal{L}_G$ in these GANs corresponds to increasing $D(\bm x)$, going in the direction of $-\nicefrac{\partial\mathcal{L}_G}{\partial \bm x}$ increases the score $D(\bm x)$.
Let $\bm y_0 = G(\bm z_0),\ \bm z_0 \sim p_z$ be a fake datapoint. 
Updating $\bm y_0$ with SGD with a small enough learning rate will move $\bm y_0$ in the direction of $-\nicefrac{\partial\mathcal{L}_G}{\partial \bm y_0}$ by a distance proportional to $\norm{-\nicefrac{\partial\mathcal{L}_G}{\partial \bm y_0}}$. 
If the discriminator is fixed, then SGD updates will move $\bm y_0$ along its integral curve, in the direction of increasing $D(\bm y_0)$.\footnote{
In practice, gradient updates are not applied to $\bm y_0$ but to the generator's parameters. Because the generator also minimizes $\mathcal{L}_G$, gradient updates to the generator move $\bm y_0$ in a direction that approximates $-\nicefrac{\partial\mathcal{L}_G}{\partial \bm y_0}$. $-\nicefrac{\partial\mathcal{L}_G}{\partial \bm y_0}$ is a good approximation of the direction that $\bm y_0$ will move in the next iteration.}

Fig. \ref{fig:8Gauss3000} - \ref{fig:8Gauss20000} show the evolution of a GAN-NS on 8 Gaussian dataset. In Fig. \ref{fig:8Gauss3000} - \ref{fig:8Gauss3600}, the discriminator assigns higher score to datapoints that are further away from the fake datapoints, regardless of the true labels of these points.
This is shown by the gradient vectors pointing away from the fake datapoints.
The integral curves do not converge to any real datapoints.
If $D$ is fixed, updating $G$ with gradient descent makes $p_g$ diverges.
Because gradients w.r.t. different fake datapoints have the same direction, almost all of fake datapoints move in the same direction and do not spread out over the space. 
\textit{Because of CF, the generator is unable to break out of mode collapse.} 

Inside the green box (Fig. \ref{fig:8Gauss3000}), gradients at all datapoints have {approximately the same direction}. 
The loss $\mathcal{L}_G$ decreases (the score $D(\cdot)$ increases) monotonically along the direction of the green vector $\bm u$, a random vector that points away from the fake datapoints.\footnote{
Graphically, we see that the angles between the green vector $\bm u$ and $\bm v = -\nicefrac{\partial \mathcal{L}_D}{\partial \bm x}$ are less than \ang{90} for all $\bm x$ in the box. Thus, the dot product $\bm v \cdot d\bm u$ is positive. The line integral in Eqn. \ref{eqn:lineint} is positive for $\bm x_0, \bm x_1$ in the box that satisfy $\bm x_1 = \bm x_0 + k\bm u,\ k > 0$. $\mathcal{L}_G$ monotonically decreases along the direction of $\bm u$. We say that $\mathcal{L}_G$ is monotonic in direction $\bm u$.}
We have the following observation: 
\begin{observation}
In a large neighborhood around a real datapoint, $\mathcal{L}_G$ (and therefore, $D(\cdot)$) is directionally monotonic.
\label{ob:monotonic}
\end{observation}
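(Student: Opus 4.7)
The plan is to treat this as a semi-formal claim supported by the line integral identity in Eqn.~\ref{eqn:lineint} together with empirical evidence from the gradient-field plots in Fig.~\ref{fig:8Gauss}. First I would fix notation: given a real datapoint $\bm x_r$ and a unit direction $\bm u$, call $\mathcal{L}_G$ \emph{directionally monotonic at $\bm x_r$ along $\bm u$ with radius $R$} if for every $\bm x_0, \bm x_1$ in the ball of radius $R$ around $\bm x_r$ with $\bm x_1 = \bm x_0 + k\bm u$, $k>0$, the line integral $\int_{\mathcal{C}} \bm v \cdot d\bm s$ along the straight segment $\mathcal{C}$ has a definite sign. By Eqn.~\ref{eqn:lineint}, this is equivalent to $\mathcal{L}_G(\bm x_0) - \mathcal{L}_G(\bm x_1)$ having a definite sign, i.e.\ $D(\cdot)$ being monotonic along $\bm u$.

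Next I would reduce the claim to a pointwise condition on the gradient field. A sufficient condition for directional monotonicity along $\bm u$ inside the ball is that the angle between $\bm v(\bm x) = -\nicefrac{\partial \mathcal{L}_G}{\partial \bm x}$ and $\bm u$ stays strictly less than $\ang{90}$ for all $\bm x$ in the ball, since then $\bm v \cdot d\bm u > 0$ pointwise along any straight segment parallel to $\bm u$, making the line integral strictly positive. This is exactly the geometric criterion already invoked informally in the footnote attached to the green box in Fig.~\ref{fig:8Gauss3000}. So the observation reduces to: when catastrophic forgetting occurs, the gradient field of $D$ is approximately direction-aligned on a large neighborhood around $\bm x_r$.

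I would then justify the direction-alignment condition in two complementary ways. The first is empirical: display the gradient field $\bm v(\bm x)$ over a dense grid around several real modes in Fig.~\ref{fig:8Gauss3000}--\ref{fig:8Gauss3600} and verify numerically that there exists a vector $\bm u$ (the green vector) for which $\bm v(\bm x) \cdot \bm u > 0$ throughout the visualized region. The second is structural: when CF has wiped out $D$'s knowledge of earlier tasks $\mathcal{T}^{t-n}$ and $D$ is being trained only to separate a tightly clustered $p_g^t$ from $p_r$, the simplest solution is for $D$ to behave like an almost-linear discriminant whose level sets are roughly parallel hyperplanes on the scale of the inter-mode distance; the gradient of such a function is approximately constant, and hence the angle condition above holds over a neighborhood whose size is controlled by how close $D$ is to this linear regime.

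The main obstacle is that ``large neighborhood'' is inherently informal: the radius $R$ depends on the network architecture, the optimization state, and the degree of forgetting, so a fully quantitative bound is out of reach. I would therefore not attempt a universal radius, but instead phrase the observation as a conditional statement (``whenever the angle condition holds in a region $\Omega$, $D$ is directionally monotonic on $\Omega$''), and rely on the figures to demonstrate that the region $\Omega$ is empirically large relative to the inter-mode spacing whenever CF is active. A secondary subtlety is that the gradient is taken w.r.t.\ $\bm x$ rather than parameters; the footnote already addresses why the $\bm x$-gradient is the right object to track for the generator's next update, so I would just cite that paragraph rather than redevelop it.
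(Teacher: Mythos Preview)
Your proposal is correct and closely mirrors the paper's own treatment: the paper also justifies Observation~\ref{ob:monotonic} via the line-integral identity of Eqn.~\ref{eqn:lineint}, the angle criterion in the footnote attached to Fig.~\ref{fig:8Gauss3000}, and empirical inspection of the gradient-field plots. Your reduction to the pointwise condition $\bm v(\bm x)\cdot\bm u>0$ on a region $\Omega$ is exactly the footnote argument made explicit.

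The one substantive difference is in the theoretical backing. Where you offer a heuristic that under CF the discriminator behaves like an ``almost-linear discriminant'' with near-constant gradient, the paper instead defers to Section~\ref{sec:ctEffect} and Proposition~\ref{prop:diracMonotonic}: it reduces a high-dimensional GAN to a Dirac GAN along the line through a real/fake pair and proves that the \emph{optimal} discriminator for a one-hidden-layer network with standard activations is exactly monotonic. Your heuristic and the paper's proposition point in the same direction, but the Dirac-GAN argument is sharper---it gives a concrete architecture and shows monotonicity is the actual optimum rather than merely ``simplest,'' which better explains why training drives $D$ there. Conversely, your framing as a conditional statement (``whenever the angle condition holds on $\Omega$'') is cleaner about what is rigorous versus empirical, and your acknowledgment that the radius $R$ cannot be bounded universally is an honest caveat the paper leaves implicit. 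Either route is adequate for an Observation-level claim; citing the Dirac-GAN proposition would strengthen your structural step.
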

A theoretical explanation to this phenomenon is given in Sec. \ref{sec:ctEffect}.
Because fake samples in Fig. \ref{fig:8Gauss3000}-\ref{fig:8Gauss20000} are concentrated in a small region (i.e. mode collapse), $D$ can easily separate them from distant real samples and does not learn useful features of the real data. We say that $D$ \textit{catastrophically forgets} real samples that are far away from the current fake samples. 
\textit{Mode collapse and CF are interrelated, one problem makes the other more severe.}


In Fig. \ref{fig:8Gauss3500}, fake datapoints on the right of the red box have higher scores than real datapoints on the left, although in Fig. \ref{fig:8Gauss3000}, these real datapoints have higher scores than these fake datapoints.
Going from Fig. \ref{fig:8Gauss3000} to \ref{fig:8Gauss20000}, we observe that the vectors' directions change as soon as fake datapoints move.
The phenomenon suggests that \textit{information about previous model distributions is not preserved in the discriminator}. 
As $D^t$ tries to separate $p_g^t$ from $p_r$, it assigns low scores to regions with fake samples and higher scores to other regions. Because $D^t$ does not 'remember' $p_g^{<t}$, it could assign high scores to regions previously occupied by $p_g^t$, i.e. $D^t$ could classify old fake samples as real. Fake samples at iteration 3000 (Fig. \ref{fig:8Gauss3000}) are classified as real by $D^{3500}$ (Fig. \ref{fig:8Gauss3500}). Similar behaviors are observed on MNIST (Fig. \ref{fig:gannsScore}). Because of forgetting, $D$ could direct $G$ to move to a region which $G$ has visited before. That could cause $G$ and $D$ to fall in a learning loop and do not converge to an equilibrium. In Fig. \ref{fig:8Gauss3000} - \ref{fig:8Gauss20000}, the model distribution rotates around the circle indefinitely. \textit{CF is a cause of non-convergence.}

\subsubsection{Catastrophic forgetting on image datasets}
\label{sec:ctMNIST}
We performed experiments on real world datasets to confirm the existence of CF in GANs. We visualize the landscape around a real datapoint $\bm x$ by plotting the output of the discriminator along a random line through $\bm x$. We choose a random unit vector $\hat{\bm u} \in \mathbb{R}^d, \norm{\hat{\bm u}} = 1$ and plot the value of the function 
\begin{equation}
f(k) = D(\bm x + k\hat{\bm u})
\label{eqn:fk}
\end{equation}
for $k \in [-100, 100]$. 
We use the same $\hat{\bm u}$ for all images in Fig. \ref{fig:gannsSGD}.
We choose to visualize $D(\cdot)$ instead of $\mathcal{L}_G$ because $\mathcal{L}_G$ explodes if $D(\cdot) \ll 1$.
The quality of the image $\bm x + k \hat{\bm u}$ decreases as $\abs{k}$ increases. 
A good discriminator $D^*$ should assign lower scores to samples with lower quality. $D^*(\bm x)$ should be higher than $D^*(\bm x + k \hat{\bm u}),\ k > 0$, i.e. $\bm x$ is a local maximum of $D^*$.
If $\bm x$ is a local maximum of $D^*$, $f^*(k)$ must have a local maximum at $k = 0$ (the center of each subplot).
The result reported below was observed in all 10 different runs of the experiment. 

Fig. \ref{fig:gannsSGD} demonstrates the problem on MNIST. The generator and discriminator are 3 hidden layer MLPs with 512 hidden neurons. SGD with constant learning rate $\alpha = 3\times 10^{-4}$ was ued in training.

As shown in Fig. \ref{fig:gannsSGD}, the generated images keep changing from one shape to another, implying that the game does not converge to an equilibrium. 
In a large neighborhood around every real image, the discriminator's output is monotonic in the sampled direction. 
At iteration 100000, for every image, $f$ is a decreasing function (Fig. \ref{fig:gannsSGDFake100000}), while at iteration 200000, $f$ is an increasing function (Fig. \ref{fig:gannsSGDFake200000}).
More conretely, let $\nabla_{\hat{\bm u}}D^t(\bm x_0)$ be the discriminator's directional derivative along direction $\hat{\bm u}$ at $\bm x_0$ at iteration $t$. Then Fig. \ref{fig:gannsSGDFake100000} and \ref{fig:gannsSGDFake200000} shows that $\nabla_{\hat{\bm u}}D^{100000}(\bm x_0)$ and $\nabla_{\hat{\bm u}}D^{200000}(\bm x_0)$ for some $\bm x_0$ near the real datapoint $\bm x$, have opposite directions.
The knowledge of $D^{200000}$ (what $D^{200000}$ learned on $\{ p_g^{200000}, p_r \}$)  is not usable for $\{p_g^{100000}, p_r\}$.

We trained DCGAN \cite{dcgan} on CelebA \cite{celeba} and CIFAR-10 \cite{cifar10} to study the effect of network architecture and dataset complexity on the level of forgetting. 
Network architecture and hyper parameters are given in Table \ref{tab:dcganArch}. 

On CelebA, Fig. \ref{fig:celebaNSReal} - \ref{fig:celebaNSFake20k} show that CNN suffers less from CF than MLP. The discriminator in DCGAN-NS is not directional monotonic and it successfully makes many real datapoints its local maxima (see Sec. \ref{sec:landscape} for more). 
The discriminator can effectively discriminate real images from neighboring noisy images. The generator moves fake datapoints toward these local maxima and produces recognizable faces.

On CIFAR-10 (Fig. \ref{fig:dcgannsReal} - \ref{fig:dcgannsFake20k}), the discriminator cannot discriminate real images from noisy images. The function $f(k)$ in Fig. \ref{fig:dcgannsLand5k} is almost an increasing function while in Fig. \ref{fig:dcgannsLand20k} it is almost a decreasing function. 
The training does not converge as fake images change significantly as the learning progresses.


\textit{Conclusion}: GAN-NS trained on high dimensional datasets exhibits the same catastrophic forgetting behaviors as on toy datasets: 
(1) real datapoints are not local maxima of the discriminator or in more extreme cases, the discriminator is directionally monotonic in the neighborhoods of real datapoints; (2) the gradients w.r.t. datapoints in the neighborhood of a real datapoint change their directions significantly as fake datapoints move.


\begin{figure*}
\begin{flushright}

\subfloat[Real]{
\adjincludegraphics[width=0.22\textwidth, trim={0 {0.875\width} {0.5\width} {0\width}}, clip]{gan-mlp-mnist-nn512-nr3-ld0.0003-lg0.0003-nd1-ng1-gt0.0-gr0.0-grNone-orsgd-mm0.0-b10.5-b20.99-dsNone-ntGaussian-nm50-ns200001-be64-sv0.02-se1-ip0.01-idslerp-ne100.0-np1.0-ntTrue-ieFalse-/2019-10-06-00:33:33.098380/real}\label{fig:gannsSGDReal}}
\subfloat[Landscape 50000]{
\adjincludegraphics[width=0.22\textwidth, trim={0 {0.875\width} {0.5\width} {0\width}}, clip]{gan-mlp-mnist-nn512-nr3-ld0.0003-lg0.0003-nd1-ng1-gt0.0-gr0.0-grNone-orsgd-mm0.0-b10.5-b20.99-dsNone-ntGaussian-nm50-ns200001-be64-sv0.02-se1-ip0.01-idslerp-ne100.0-np1.0-ntTrue-ieFalse-/2019-10-06-00:33:33.098380/extrema-50000}
\label{fig:gannsSGDLandscape50000}} 
\subfloat[Landscape 100000]{
\adjincludegraphics[width=0.22\textwidth, trim={0 {0.875\width} {0.5\width} {0\width}}, clip]{gan-mlp-mnist-nn512-nr3-ld0.0003-lg0.0003-nd1-ng1-gt0.0-gr0.0-grNone-orsgd-mm0.0-b10.5-b20.99-dsNone-ntGaussian-nm50-ns200001-be64-sv0.02-se1-ip0.01-idslerp-ne100.0-np1.0-ntTrue-ieFalse-/2019-10-06-00:33:33.098380/extrema-100000}
\label{fig:gannsSGDLandscape100000}}
\subfloat[Landscape 200000]{
\adjincludegraphics[width=0.22\textwidth, trim={0 {0.875\width} {0.5\width} {0\width}}, clip]{gan-mlp-mnist-nn512-nr3-ld0.0003-lg0.0003-nd1-ng1-gt0.0-gr0.0-grNone-orsgd-mm0.0-b10.5-b20.99-dsNone-ntGaussian-nm50-ns200001-be64-sv0.02-se1-ip0.01-idslerp-ne100.0-np1.0-ntTrue-ieFalse-/2019-10-06-00:33:33.098380/extrema-200000}
\label{fig:gannsSGDLandscape200000}} \\

\subfloat[Generated 50000]{
\adjincludegraphics[width=0.22\textwidth, trim={0 {0.875\width} {0.5\width} {0\width}}, clip]{gan-mlp-mnist-nn512-nr3-ld0.0003-lg0.0003-nd1-ng1-gt0.0-gr0.0-grNone-orsgd-mm0.0-b10.5-b20.99-dsNone-ntGaussian-nm50-ns200001-be64-sv0.02-se1-ip0.01-idslerp-ne100.0-np1.0-ntTrue-ieFalse-/2019-10-06-00:33:33.098380/fake-50000}
\label{fig:gannsSGDFake50000}}
\subfloat[Generated 100000]{
\adjincludegraphics[width=0.22\textwidth, trim={0 {0.875\width} {0.5\width} {0\width}}, clip]{gan-mlp-mnist-nn512-nr3-ld0.0003-lg0.0003-nd1-ng1-gt0.0-gr0.0-grNone-orsgd-mm0.0-b10.5-b20.99-dsNone-ntGaussian-nm50-ns200001-be64-sv0.02-se1-ip0.01-idslerp-ne100.0-np1.0-ntTrue-ieFalse-/2019-10-06-00:33:33.098380/fake-100000}
\label{fig:gannsSGDFake100000}}
\subfloat[Generated 200000]{
\adjincludegraphics[width=0.22\textwidth, trim={0 {0.875\width} {0.5\width} {0\width}}, clip]{gan-mlp-mnist-nn512-nr3-ld0.0003-lg0.0003-nd1-ng1-gt0.0-gr0.0-grNone-orsgd-mm0.0-b10.5-b20.99-dsNone-ntGaussian-nm50-ns200001-be64-sv0.02-se1-ip0.01-idslerp-ne100.0-np1.0-ntTrue-ieFalse-/2019-10-06-00:33:33.098380/fake-200000}
\label{fig:gannsSGDFake200000}}
\end{flushright}
\centering
\caption{Catastrophic forgetting problem in GAN-NS trained with SGD.  \protect\subref{fig:gannsSGDReal} real datapoints from MNIST dataset. \protect\subref{fig:gannsSGDLandscape50000} - \protect\subref{fig:gannsSGDLandscape200000} the landscape around these real datapoints at different training iterations. In each subplot, the $X$-axis represent $k$, the $Y$-axis represent $D(\cdot)$. \protect\subref{fig:gannsSGDFake50000} - \protect\subref{fig:gannsSGDFake200000} generated data at different iterations. The same noise inputs were used for all iterations.}
\label{fig:gannsSGD}
\end{figure*}

\begin{figure*}
\begin{flushright}

\subfloat[Real]{
\adjincludegraphics[width=0.22\textwidth, trim={0 {0.875\width} {0.5\width} {0\width}}, clip]{gan-mlp-mnist-nn512-nr3-ld0.0003-lg0.0003-nd1-ng1-gt0.0-gr0.0-grNone-oradam-mm0.0-b10.5-b20.99-dsNone-ntGaussian-nm50-ns200001-be64-sv0.02-se1-ip0.01-idslerp-ne100.0-np1.0-ntTrue-ieFalse-/2019-10-06-00:11:15.651012/real}\label{fig:gannsAdamReal}}
\subfloat[Landscape 50000]{
\adjincludegraphics[width=0.22\textwidth, trim={0 {0.875\width} {0.5\width} {0\width}}, clip]{gan-mlp-mnist-nn512-nr3-ld0.0003-lg0.0003-nd1-ng1-gt0.0-gr0.0-grNone-oradam-mm0.0-b10.5-b20.99-dsNone-ntGaussian-nm50-ns200001-be64-sv0.02-se1-ip0.01-idslerp-ne100.0-np1.0-ntTrue-ieFalse-/2019-10-06-00:11:15.651012/extrema-50000}
\label{fig:gannsAdamLandscape50000}} 
\subfloat[Landscape 100000]{
\adjincludegraphics[width=0.22\textwidth, trim={0 {0.875\width} {0.5\width} {0\width}}, clip]{gan-mlp-mnist-nn512-nr3-ld0.0003-lg0.0003-nd1-ng1-gt0.0-gr0.0-grNone-oradam-mm0.0-b10.5-b20.99-dsNone-ntGaussian-nm50-ns200001-be64-sv0.02-se1-ip0.01-idslerp-ne100.0-np1.0-ntTrue-ieFalse-/2019-10-06-00:11:15.651012/extrema-100000}
\label{fig:gannsAdamLandscape100000}}
\subfloat[Landscape 200000]{
\adjincludegraphics[width=0.22\textwidth, trim={0 {0.875\width} {0.5\width} {0\width}}, clip]{gan-mlp-mnist-nn512-nr3-ld0.0003-lg0.0003-nd1-ng1-gt0.0-gr0.0-grNone-oradam-mm0.0-b10.5-b20.99-dsNone-ntGaussian-nm50-ns200001-be64-sv0.02-se1-ip0.01-idslerp-ne100.0-np1.0-ntTrue-ieFalse-/2019-10-06-00:11:15.651012/extrema-200000}
\label{fig:gannsAdamLandscape200000}} \\

\subfloat[Generated 50000]{
\adjincludegraphics[width=0.22\textwidth, trim={0 {0.875\width} {0.5\width} {0\width}}, clip]{gan-mlp-mnist-nn512-nr3-ld0.0003-lg0.0003-nd1-ng1-gt0.0-gr0.0-grNone-oradam-mm0.0-b10.5-b20.99-dsNone-ntGaussian-nm50-ns200001-be64-sv0.02-se1-ip0.01-idslerp-ne100.0-np1.0-ntTrue-ieFalse-/2019-10-06-00:11:15.651012/fake-50000}
\label{fig:gannsAdamFake50000}}
\subfloat[Generated 100000]{
\adjincludegraphics[width=0.22\textwidth, trim={0 {0.875\width} {0.5\width} {0\width}}, clip]{gan-mlp-mnist-nn512-nr3-ld0.0003-lg0.0003-nd1-ng1-gt0.0-gr0.0-grNone-oradam-mm0.0-b10.5-b20.99-dsNone-ntGaussian-nm50-ns200001-be64-sv0.02-se1-ip0.01-idslerp-ne100.0-np1.0-ntTrue-ieFalse-/2019-10-06-00:11:15.651012/fake-100000}
\label{fig:gannsAdamFake100000}}
\subfloat[Generated 200000]{
\adjincludegraphics[width=0.22\textwidth, trim={0 {0.875\width} {0.5\width} {0\width}}, clip]{gan-mlp-mnist-nn512-nr3-ld0.0003-lg0.0003-nd1-ng1-gt0.0-gr0.0-grNone-oradam-mm0.0-b10.5-b20.99-dsNone-ntGaussian-nm50-ns200001-be64-sv0.02-se1-ip0.01-idslerp-ne100.0-np1.0-ntTrue-ieFalse-/2019-10-06-00:11:15.651012/fake-200000}
\label{fig:gannsAdamFake200000}}
\end{flushright}
\centering
\caption{Output landscape and generated samples from GAN-NS + Adam.}
\label{fig:gannsAdam}
\end{figure*}

\begin{figure*}
\begin{flushright}

\subfloat[Real]{
\adjincludegraphics[width=0.22\textwidth, trim={0 {0.875\width} {0.5\width} {0\width}}, clip]{gan-mlp-mnist-nn512-nr3-ld0.0003-lg0.0003-nd1-ng1-gt100.0-gr0.0-grNone-oradam-mm0.0-b10.5-b20.99-dsNone-ntGaussian-nm50-ns200001-be64-sv0.02-se1-ip0.01-idslerp-ne100.0-np1.0-ntTrue-ieFalse-/2019-10-06-10:56:30.392841/real}\label{fig:gan0gpReal}}
\subfloat[Landscape 50000]{
\adjincludegraphics[width=0.22\textwidth, trim={0 {0.875\width} {0.5\width} {0\width}}, clip]{gan-mlp-mnist-nn512-nr3-ld0.0003-lg0.0003-nd1-ng1-gt100.0-gr0.0-grNone-oradam-mm0.0-b10.5-b20.99-dsNone-ntGaussian-nm50-ns200001-be64-sv0.02-se1-ip0.01-idslerp-ne100.0-np1.0-ntTrue-ieFalse-/2019-10-06-10:56:30.392841/extrema-50000}
\label{fig:gan0gpLandscape50000}} 
\subfloat[Landscape 100000]{
\adjincludegraphics[width=0.22\textwidth, trim={0 {0.875\width} {0.5\width} {0\width}}, clip]{gan-mlp-mnist-nn512-nr3-ld0.0003-lg0.0003-nd1-ng1-gt100.0-gr0.0-grNone-oradam-mm0.0-b10.5-b20.99-dsNone-ntGaussian-nm50-ns200001-be64-sv0.02-se1-ip0.01-idslerp-ne100.0-np1.0-ntTrue-ieFalse-/2019-10-06-10:56:30.392841/extrema-100000}
\label{fig:gan0gpLandscape100000}}
\subfloat[Landscape 200000]{
\adjincludegraphics[width=0.22\textwidth, trim={0 {0.875\width} {0.5\width} {0\width}}, clip]{gan-mlp-mnist-nn512-nr3-ld0.0003-lg0.0003-nd1-ng1-gt100.0-gr0.0-grNone-oradam-mm0.0-b10.5-b20.99-dsNone-ntGaussian-nm50-ns200001-be64-sv0.02-se1-ip0.01-idslerp-ne100.0-np1.0-ntTrue-ieFalse-/2019-10-06-10:56:30.392841/extrema-200000}
\label{fig:gan0gpLandscape200000}} \\

\subfloat[Generated 50000]{
\adjincludegraphics[width=0.22\textwidth, trim={0 {0.5\width} {0.5\width} {0\width}}, clip]{gan-mlp-mnist-nn512-nr3-ld0.0003-lg0.0003-nd1-ng1-gt100.0-gr0.0-grNone-oradam-mm0.0-b10.5-b20.99-dsNone-ntGaussian-nm50-ns200001-be64-sv0.02-se1-ip0.01-idslerp-ne100.0-np1.0-ntTrue-ieFalse-/2019-10-06-10:56:30.392841/fake-50000}
\label{fig:gan0gpFake50000}}
\subfloat[Generated 100000]{
\adjincludegraphics[width=0.22\textwidth, trim={0 {0.5\width} {0.5\width} {0\width}}, clip]{gan-mlp-mnist-nn512-nr3-ld0.0003-lg0.0003-nd1-ng1-gt100.0-gr0.0-grNone-oradam-mm0.0-b10.5-b20.99-dsNone-ntGaussian-nm50-ns200001-be64-sv0.02-se1-ip0.01-idslerp-ne100.0-np1.0-ntTrue-ieFalse-/2019-10-06-10:56:30.392841/fake-100000}
\label{fig:gan0gpFake100000}}
\subfloat[Generated 200000]{
\adjincludegraphics[width=0.22\textwidth, trim={0 {0.5\width} {0.5\width} {0\width}}, clip]{gan-mlp-mnist-nn512-nr3-ld0.0003-lg0.0003-nd1-ng1-gt100.0-gr0.0-grNone-oradam-mm0.0-b10.5-b20.99-dsNone-ntGaussian-nm50-ns200001-be64-sv0.02-se1-ip0.01-idslerp-ne100.0-np1.0-ntTrue-ieFalse-/2019-10-06-10:56:30.392841/fake-200000}
\label{fig:gan0gpFake200000}}
\end{flushright}
\centering
\caption{Output landscape and generated samples from GAN-0GP with $\lambda = 100$.}
\label{fig:gan0gp}
\end{figure*}

\begin{figure*}
\begin{flushright}
\subfloat[Real]{
\adjincludegraphics[width=0.22\textwidth, trim={0 {0.875\width} {0.5\width} {0\width}}, clip]{gan-mlp-mnist-nn512-nr3-ld0.0003-lg0.0003-nd1-ng1-gt100.0-gr0.0-gr1.0-oradam-mm0.0-b10.5-b20.99-dsNone-ntGaussian-nm50-ns200001-be64-sv0.02-se1-ip0.01-idslerp-ne100.0-np1.0-ntTrue-ieFalse-/2019-10-05-23:14:33.524138/real}
}
\subfloat[Landscape 50000]{
\adjincludegraphics[width=0.22\textwidth, trim={0 {0.874\width} {0.5\width} {0\width}}, clip]{gan-mlp-mnist-nn512-nr3-ld0.0003-lg0.0003-nd1-ng1-gt100.0-gr0.0-gr1.0-oradam-mm0.0-b10.5-b20.99-dsNone-ntGaussian-nm50-ns200001-be64-sv0.02-se1-ip0.01-idslerp-ne100.0-np1.0-ntTrue-ieFalse-/2019-10-05-23:14:33.524138/extrema-50000}
}
\subfloat[Landscape 100000]{
\adjincludegraphics[width=0.22\textwidth, trim={0 {0.874\width} {0.5\width} {0\width}}, clip]{gan-mlp-mnist-nn512-nr3-ld0.0003-lg0.0003-nd1-ng1-gt100.0-gr0.0-gr1.0-oradam-mm0.0-b10.5-b20.99-dsNone-ntGaussian-nm50-ns200001-be64-sv0.02-se1-ip0.01-idslerp-ne100.0-np1.0-ntTrue-ieFalse-/2019-10-05-23:14:33.524138/extrema-100000}
}
\subfloat[Landscape 200000]{
\adjincludegraphics[width=0.22\textwidth, trim={0 {0.874\width} {0.5\width} {0\width}}, clip]{gan-mlp-mnist-nn512-nr3-ld0.0003-lg0.0003-nd1-ng1-gt100.0-gr0.0-gr1.0-oradam-mm0.0-b10.5-b20.99-dsNone-ntGaussian-nm50-ns200001-be64-sv0.02-se1-ip0.01-idslerp-ne100.0-np1.0-ntTrue-ieFalse-/2019-10-05-23:14:33.524138/extrema-200000}
}

\subfloat[Generated 50000]{
\adjincludegraphics[width=0.22\textwidth, trim={0 {0.875\width} {0.5\width} {0\width}}, clip]{gan-mlp-mnist-nn512-nr3-ld0.0003-lg0.0003-nd1-ng1-gt100.0-gr0.0-gr1.0-oradam-mm0.0-b10.5-b20.99-dsNone-ntGaussian-nm50-ns200001-be64-sv0.02-se1-ip0.01-idslerp-ne100.0-np1.0-ntTrue-ieFalse-/2019-10-05-23:14:33.524138/fake-50000}
}
\subfloat[Generated 100000]{
\adjincludegraphics[width=0.22\textwidth, trim={0 {0.875\width} {0.5\width} {0\width}}, clip]{gan-mlp-mnist-nn512-nr3-ld0.0003-lg0.0003-nd1-ng1-gt100.0-gr0.0-gr1.0-oradam-mm0.0-b10.5-b20.99-dsNone-ntGaussian-nm50-ns200001-be64-sv0.02-se1-ip0.01-idslerp-ne100.0-np1.0-ntTrue-ieFalse-/2019-10-05-23:14:33.524138/fake-100000}
}
\subfloat[Generated 200000]{
\adjincludegraphics[width=0.22\textwidth, trim={0 {0.875\width} {0.5\width} {0\width}}, clip]{gan-mlp-mnist-nn512-nr3-ld0.0003-lg0.0003-nd1-ng1-gt100.0-gr0.0-gr1.0-oradam-mm0.0-b10.5-b20.99-dsNone-ntGaussian-nm50-ns200001-be64-sv0.02-se1-ip0.01-idslerp-ne100.0-np1.0-ntTrue-ieFalse-/2019-10-05-23:14:33.524138/fake-200000}
}
\end{flushright}

\centering
\caption{Output landscape and generated samples from GAN-R1, $\lambda=100$.}
\label{fig:ganr1}
\end{figure*}

\begin{figure*}
\centering
\begin{flushright}
\subfloat[Real]{
\adjincludegraphics[width=0.22\textwidth, trim={0 {0.875\width} {0.5\width} {0\width}}, clip]{wgan-mlp-mnist-nn512-nr3-ld0.0003-lg0.0003-nd5-ng1-gt10.0-gr1.0-grNone-oradam-mm0.0-b10.5-b20.99-dsNone-ntGaussian-nm50-ns200001-be64-sv0.02-se1-ip0.01-idslerp-ne100.0-np1.0-ntTrue-ieFalse-/2019-10-06-00:52:29.276259/real}
}
\subfloat[Landscape 50000]{
\adjincludegraphics[width=0.22\textwidth, trim={0 {0.874\width} {0.5\width} {0\width}}, clip]{wgan-mlp-mnist-nn512-nr3-ld0.0003-lg0.0003-nd5-ng1-gt10.0-gr1.0-grNone-oradam-mm0.0-b10.5-b20.99-dsNone-ntGaussian-nm50-ns200001-be64-sv0.02-se1-ip0.01-idslerp-ne100.0-np1.0-ntTrue-ieFalse-/2019-10-06-00:52:29.276259/extrema-50000.pdf}}
\subfloat[Landscape 100000]{
\adjincludegraphics[width=0.22\textwidth, trim={0 {0.874\width} {0.5\width} {0\width}}, clip]{wgan-mlp-mnist-nn512-nr3-ld0.0003-lg0.0003-nd5-ng1-gt10.0-gr1.0-grNone-oradam-mm0.0-b10.5-b20.99-dsNone-ntGaussian-nm50-ns200001-be64-sv0.02-se1-ip0.01-idslerp-ne100.0-np1.0-ntTrue-ieFalse-/2019-10-06-00:52:29.276259/extrema-100000.pdf}}
\subfloat[Landscape 200000]{
\adjincludegraphics[width=0.22\textwidth, trim={0 {0.874\width} {0.5\width} {0\width}}, clip]{wgan-mlp-mnist-nn512-nr3-ld0.0003-lg0.0003-nd5-ng1-gt10.0-gr1.0-grNone-oradam-mm0.0-b10.5-b20.99-dsNone-ntGaussian-nm50-ns200001-be64-sv0.02-se1-ip0.01-idslerp-ne100.0-np1.0-ntTrue-ieFalse-/2019-10-06-00:52:29.276259/extrema-200000.pdf}}

\subfloat[Generated 50000]{
\adjincludegraphics[width=0.22\textwidth, trim={0 {0.875\width} {0.5\width} {0\width}}, clip]{wgan-mlp-mnist-nn512-nr3-ld0.0003-lg0.0003-nd5-ng1-gt10.0-gr1.0-grNone-oradam-mm0.0-b10.5-b20.99-dsNone-ntGaussian-nm50-ns200001-be64-sv0.02-se1-ip0.01-idslerp-ne100.0-np1.0-ntTrue-ieFalse-/2019-10-06-00:52:29.276259/fake-50000}
}
\subfloat[Generated 100000]{
\adjincludegraphics[width=0.22\textwidth, trim={0 {0.875\width} {0.5\width} {0\width}}, clip]{wgan-mlp-mnist-nn512-nr3-ld0.0003-lg0.0003-nd5-ng1-gt10.0-gr1.0-grNone-oradam-mm0.0-b10.5-b20.99-dsNone-ntGaussian-nm50-ns200001-be64-sv0.02-se1-ip0.01-idslerp-ne100.0-np1.0-ntTrue-ieFalse-/2019-10-06-00:52:29.276259/fake-100000}
}
\subfloat[Generated 200000]{
\adjincludegraphics[width=0.22\textwidth, trim={0 {0.875\width} {0.5\width} {0\width}}, clip]{wgan-mlp-mnist-nn512-nr3-ld0.0003-lg0.0003-nd5-ng1-gt10.0-gr1.0-grNone-oradam-mm0.0-b10.5-b20.99-dsNone-ntGaussian-nm50-ns200001-be64-sv0.02-se1-ip0.01-idslerp-ne100.0-np1.0-ntTrue-ieFalse-/2019-10-06-00:52:29.276259/fake-200000}}
\end{flushright}
\caption{Output landscape and generated samples from WGAN-GP, $\lambda=10$, 5 discriminator updates per 1 generator update.}
\label{fig:wgangp}
\end{figure*}

\subsubsection{The causes of Catastrophic Forgetting}
Based on the above experiments, we identified two reasons for CF:
\begin{enumerate}
\item \textit{Information from previous tasks is not carried to/used for the current task.} SGD does not use information from previous model distributions, $p_g^{<t}$. At iteration $t$, SGD update for the discriminator is computed from samples from $p_g^t$ and $p_r$ only. Because information from $p_g^{<t}$ is not used in training, the discriminator forgets $p_g^{< t}$, i.e. it does not assign low score to samples from $p_g^{<t}$. 
\item \textit{The current task is significantly different from previous tasks so the knowledge of the current task cannot be used for previous tasks and vice versa.} 
As old knowledge is overwritten by new knowledge, optimizing the discriminator on the current task will degrade its performance on older tasks. 
\end{enumerate}
Methods for preventing CF is studied in Section \ref{sec:methods}.

\section{The output landscape}
\label{sec:landscape}

\subsection{The evolution of the landscape}

We apply the visualization technique in Section \ref{sec:ctMNIST} to other variants of GAN. 
We reuse the network architecture and learning rate from the experiment in Fig. \ref{fig:gannsSGD}. We replace SGD with Adam with $\beta_1 = 0.5, \beta_2 = 0.99$. 
We run each experiment 10 times with different random seeds and report results that are consistent between different runs. 
The evolution of the landscape and generated samples of GAN-NS, GAN-0GP with $\lambda = 100$, GAN-R1 with $\lambda = 100$, and WGAN-GP with $\lambda = 10$ are shown in Fig. \ref{fig:gannsAdam}, \ref{fig:gan0gp}, \ref{fig:ganr1}, and \ref{fig:wgangp} respectively. 

GAN-0GP, GAN-R1, and WGAN-GP have significantly better sample quality and diversity than GAN-NS. GAN-NS does not exhibit good convergence behavior: the digit in a image changes from one digit to another as the training progresses (Fig. \ref{fig:gannsAdam}).\footnote{Note that this does not contradict the statement in \cite{ganStable} that GAN-NS converge to an equilibrium. Many of the assumptions in that paper is not satisfied in practice, e.g. the learning rate is not decayed toward 0.}  
GAN-0GP, GAN-R1, and WGAN-GP exhibit better convergence behaviors: for many images, the digits stay the same during training.  

We observe that throughout the training process of GAN-0GP, GAN-R1, and WGAN-GP, for every real datapoint, the function $f(k)$ always has a local maximum at $k = 0$, implying that real datapoints are local maxima of the discriminator. 
This can also be seen in GAN-R1 trained on the 8 Gaussian dataset (Fig. \ref{fig:8GaussR11000} - \ref{fig:8GaussR15000}): the gradients w.r.t. datapoints in the neighborhood of a real datapoint point toward that real datapoint  (GAN-0GP and WGANGP exhibit the same behaviors).  
If a fake datapoint is in the basin of attraction of a real datapoint and gradient updates are  applied directly on the fake datapoint, it will be attracted toward the real datapoint.
Different attractors (local maxima) at different regions of the data space attract different fake datapoints toward different directions, spreading fake datapoints over the space, effectively reducing mode collapse. 

Fig. \ref{fig:gan0gp10} shows that GAN-0GP with $\lambda = 10$ suffers from mild mode collapse.\footnote{This is consistent with the analysis by the authors of GAN-0GP. \citeauthor{improveGeneralization} claimed that larger $\lambda$ leads to better generalization but may slow down the training.} 
The maxima in Fig. \ref{fig:gan0gp10} are much sharper than those in Fig. \ref{fig:wgangp}.
The discriminator overfits to the real training datapoints and forces the scores of near by datapoints to be close to 0. 
That creates many flat regions where the gradients of the discriminator w.r.t. datapoints in these regions are vanishingly small. 
A fake datapoint located in a flat region cannot move toward the real datapoint because the gradient is vanishingly small.
Real datapoints in Fig. \ref{fig:gan0gp10} have small basin of attraction and cannot effectively spread fake samples over the space.
The diversity of generated samples is thus reduced, making mode collapse visible. 
In order to attract fake datapoints toward different directions, \textit{local maxima should be wide}, i.e. they should have large basin of attraction.

The landscapes of GAN-NS  in Fig. \ref{fig:gannsSGD} and \ref{fig:gannsAdam} contain many flat regions where the scores $D(\cdot)$ are very close to 1 or 0. 
The same problem is seen on the 8 Gaussian dataset (datapoints in the orange and blue boxes in Fig. \ref{fig:8Gauss3000}-\ref{fig:8Gauss20000} have scores close to 1 and 0, respectively). 
However, unlike Fig. \ref{fig:gan0gp10}, the real datapoints in Fig. \ref{fig:8Gauss3000} - \ref{fig:8Gauss20000}, \ref{fig:gannsSGD}, and \ref{fig:gannsAdam} are not local maxima.
The discriminator in GAN-NS underfits the data.

CNN based discriminators do not create flat regions in the output landscape (Fig. \ref{fig:celebaNSLand5k}-\ref{fig:celebaNSLand20k} and  \ref{fig:dcgannsLand5k}-\ref{fig:dcgannsLand20k}). 
However, when the dataset is more complicated, DCGAN-NS discriminator fails to make real datapoints local maxima and the training does not converge (Fig. \ref{fig:dcgannsReal}-\ref{fig:dcgannsFake20k}).
The discriminator underfits the data because it is not powerful enough to learn features that separate real and fake/noisy samples. 
More powerful discriminators based on ResNet \cite{resnet} significantly improve the quality of GANs (e.g. \cite{progressiveGAN}).
We make the following observation: 
\begin{observation}
For a GAN to converge to a good local equilibrium, real datapoints should be wide local maxima of the discriminator.
\label{ob:wideMaxima}
\end{observation}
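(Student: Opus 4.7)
The plan is to establish Observation~\ref{ob:wideMaxima} by analyzing the fixed-point condition for the generator dynamics under a converged discriminator, and then showing that any violation of the ``wide local maximum'' property forces either non-convergence or mode collapse. I would work in data space rather than parameter space, treating $D$ and $G$ as continuous functions as in Section~\ref{sec:catastrophic}, and leverage the gradient-field representation from Eqn.~\ref{eqn:lineint}.

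First I would make precise what ``good local equilibrium'' means: (i) the joint dynamics $(G^t, D^t)$ has stopped moving, so in particular the generator update direction $-\partial \mathcal{L}_G / \partial \bm y$ vanishes at almost every fake sample $\bm y \sim p_g$; and (ii) the model distribution covers all modes of $p_r$, i.e.\ $p_g \approx p_r$ in total variation. For every variant in Table~\ref{tab:loss}, $\partial \mathcal{L}_G / \partial \bm y$ is a monotone function of $\nabla D(\bm y)$, so condition (i) forces $\nabla D(\bm y) = \bm 0$ at every $\bm y$ in the support of $p_g$, and hence by (ii) at every real datapoint. Stability under small perturbations of $\bm y$ then rules out saddles and local minima: if a real datapoint $\bm x$ were not a local maximum, an infinitesimally perturbed fake copy would be driven away along the ascent direction of $D$, breaking the fixed-point condition and contradicting convergence.

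Next I would argue the ``wide'' half by contradiction. Suppose $\bm x$ is a sharp local maximum of $D$, so that outside a small neighborhood $B_\varepsilon(\bm x)$ the gradient $\nabla D$ is vanishingly small (this is exactly the flat-region picture described for GAN-0GP with $\lambda = 10$ in Sec.~\ref{sec:landscape}). A fake sample $\bm y \not\in B_\varepsilon(\bm x)$ produces, via the line integral in Eqn.~\ref{eqn:lineint}, a change in $\mathcal{L}_G$ bounded by the total variation of $D$ outside the basin; when this is near zero, the generator cannot move $\bm y$ toward $\bm x$ in any bounded number of steps. Consequently each real mode $\bm x$ only attracts the fake samples already inside $B_\varepsilon(\bm x)$, and unless the initial $p_g$ happens to place mass in every such tiny basin, at least one mode is abandoned. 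This yields mode collapse, contradicting (ii), so the local maxima must be wide enough for their basins of attraction to tile the support of $p_r$.

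The main obstacle I anticipate is making the stability part of the argument rigorous in the joint two-player setting rather than the frozen-discriminator setting. Alternating SGD produces a non-gradient vector field whose Jacobian has off-diagonal blocks coupling $G$ and $D$, so the implication ``fake samples stop moving $\Rightarrow$ $\nabla D = 0$ there'' is immediate but the converse (that a wide local maximum implies local attractiveness of the joint dynamics) requires bounding the sensitivity of the discriminator update to small generator perturbations, which in turn is exactly where catastrophic forgetting re-enters the picture. I would sidestep a fully rigorous dynamical-systems proof by restricting to the data-space analysis above and appealing to the empirical evidence in Figs.~\ref{fig:gan0gp}, \ref{fig:ganr1}, and \ref{fig:wgangp} to complete the picture, in line with the observational character of the statement.
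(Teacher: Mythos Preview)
The paper does not prove this statement at all: it is stated as an \emph{Observation}, and the surrounding text in Section~\ref{sec:landscape} simply records the pattern seen across the experiments. The justification consists of (a) noting that the well-behaved variants (GAN-0GP, GAN-R1, WGAN-GP) empirically make real datapoints local maxima while GAN-NS does not, (b) noting that GAN-0GP with small $\lambda$ produces sharp maxima and mild mode collapse, and (c) giving the informal basin-of-attraction intuition that wide maxima pull fake samples in from a larger region and therefore spread $p_g$ over the modes. No fixed-point or stability analysis is attempted.

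Your proposal is therefore considerably more ambitious than what the paper does. The informal ``wide'' half of your argument (sharp maxima $\Rightarrow$ tiny basins $\Rightarrow$ most fake mass cannot reach the mode $\Rightarrow$ mode collapse) is essentially the same intuition the paper records, just phrased more carefully via Eqn.~\ref{eqn:lineint}. The first half, however, overreaches. From the generator fixed-point condition you deduce $\nabla D(\bm y)=\bm 0$ on $\mathrm{supp}(p_g)$ and then, via $p_g\approx p_r$, on real datapoints; but at the textbook GAN equilibrium $D\equiv\text{const}$ (e.g.\ $D\equiv 1/2$ for GAN-NS when $p_g=p_r$), so \emph{every} point is a critical point and your stability step cannot single out local maxima. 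The landscape structure the paper is pointing at is a property of the \emph{training trajectory} and of regularized discriminators that do not collapse to a constant, not a consequence of the equilibrium equations alone. You correctly anticipate this difficulty in your last paragraph and fall back to the empirical figures, which is exactly where the paper sits; just be aware that the formal part of your argument does not actually deliver the ``local maximum'' conclusion without that appeal.
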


\begin{figure}[ht!]
\centering
\subfloat[Generated 100000]{\adjincludegraphics[width=0.21\textwidth, trim={{0.5\width} {0.5\width} {0\width} {0\width}}, clip]{gan-mlp-mnist-nn512-nr3-ld0.0003-lg0.0003-nd1-ng1-gt10.0-gr0.0-grNone-oradam-mm0.0-b10.5-b20.99-dsNone-ntGaussian-nm50-ns200001-be64-sv0.02-se1-ip0.01-idslerp-ne100.0-np1.0-ntTrue-ieFalse-/2019-10-06-12:02:13.865053/fake-100000}}
\subfloat[Landscape 100000]{\adjincludegraphics[width=0.21\textwidth, trim={{0.5\width} {0.5\width} {0\width} {0\width}}, clip]{gan-mlp-mnist-nn512-nr3-ld0.0003-lg0.0003-nd1-ng1-gt10.0-gr0.0-grNone-oradam-mm0.0-b10.5-b20.99-dsNone-ntGaussian-nm50-ns200001-be64-sv0.02-se1-ip0.01-idslerp-ne100.0-np1.0-ntTrue-ieFalse-/2019-10-06-12:02:13.865053/extrema-100000.pdf}}
\caption{Mode collapse without CF in GAN-0GP, $\lambda = 10$.}
\label{fig:gan0gp10}
\end{figure}

\subsection{The effect of catastrophic forgetting on the landscape}
\label{sec:ctEffect}
\begin{figure}[ht!]
\centering
\subfloat[Iter. 0]{\includegraphics[width=0.08\textwidth]{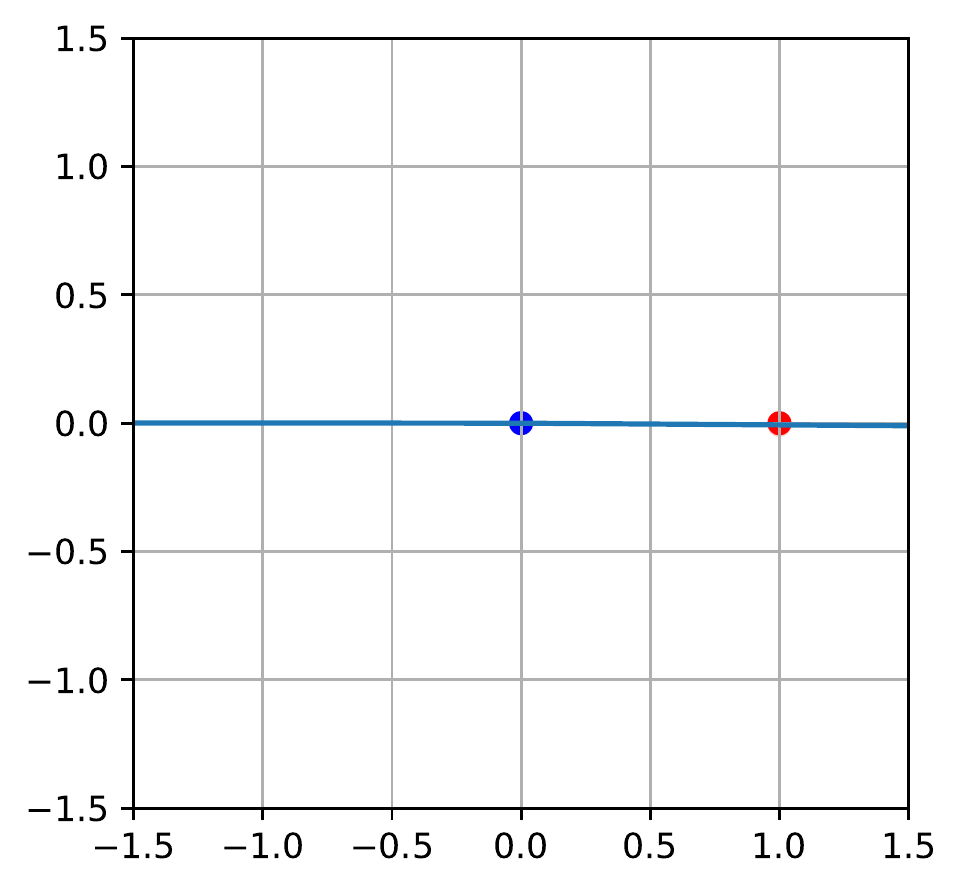}\label{fig:dirac1SampleLow0}}
\subfloat[Iter. 10]{\includegraphics[width=0.08\textwidth]{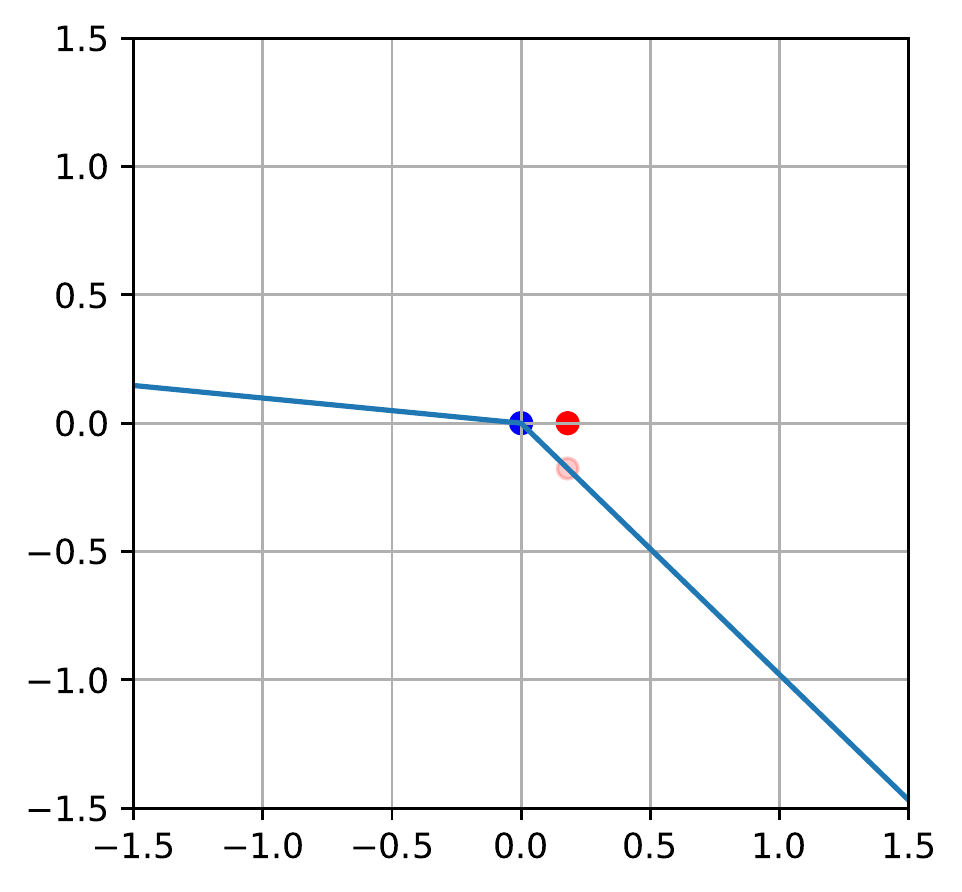}\label{fig:dirac1SampleLow10}}
\subfloat[Iter. 100]{\includegraphics[width=0.08\textwidth]{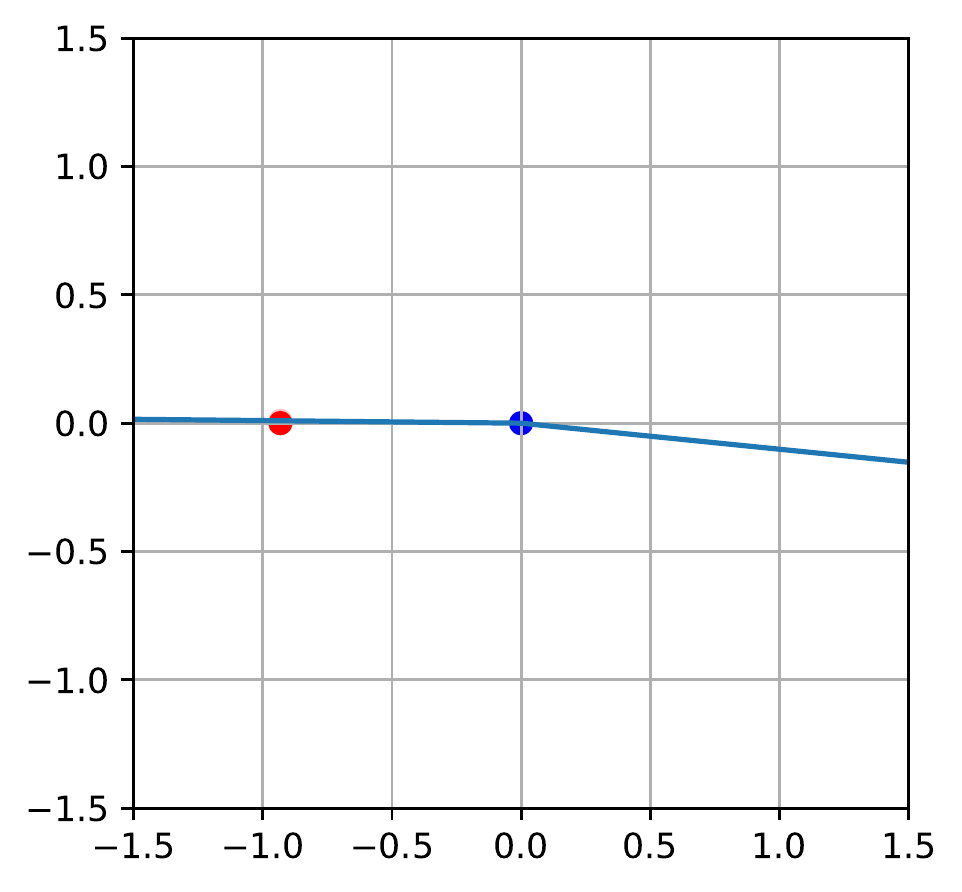}\label{fig:dirac1SampleLow100}}
\subfloat[Iter. 200]{\includegraphics[width=0.08\textwidth]{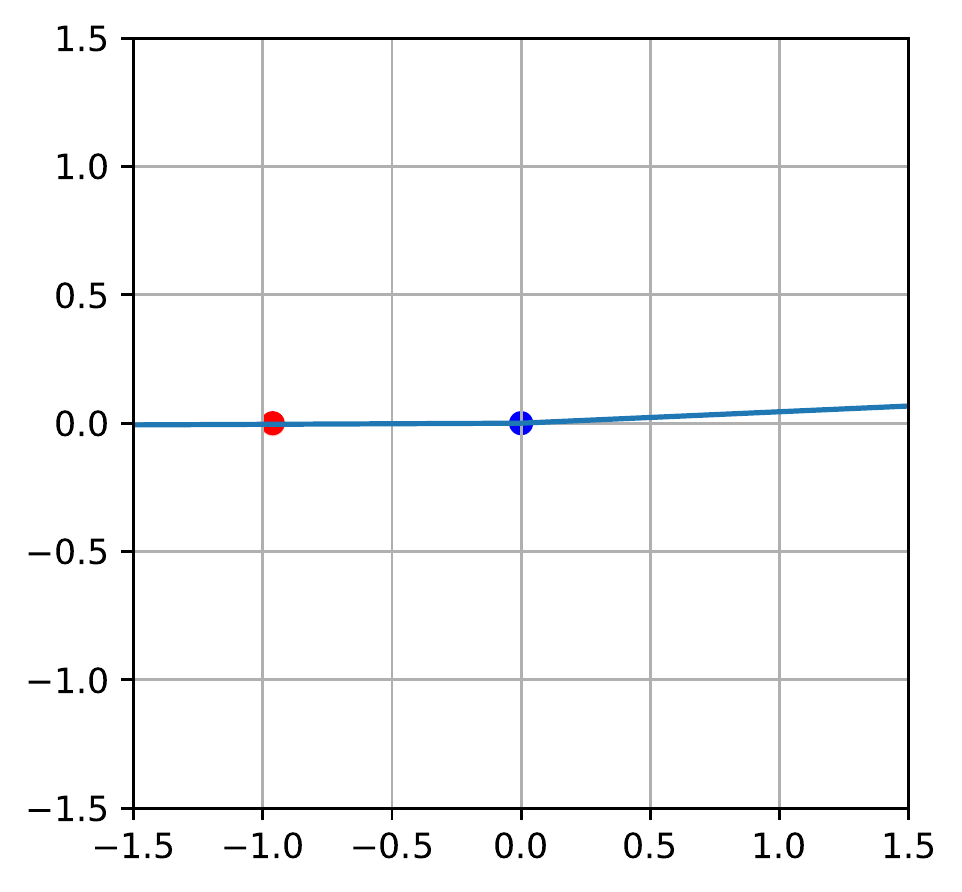}\label{fig:dirac1SampleLow200}}
\subfloat[Iter. 300]{\includegraphics[width=0.08\textwidth]{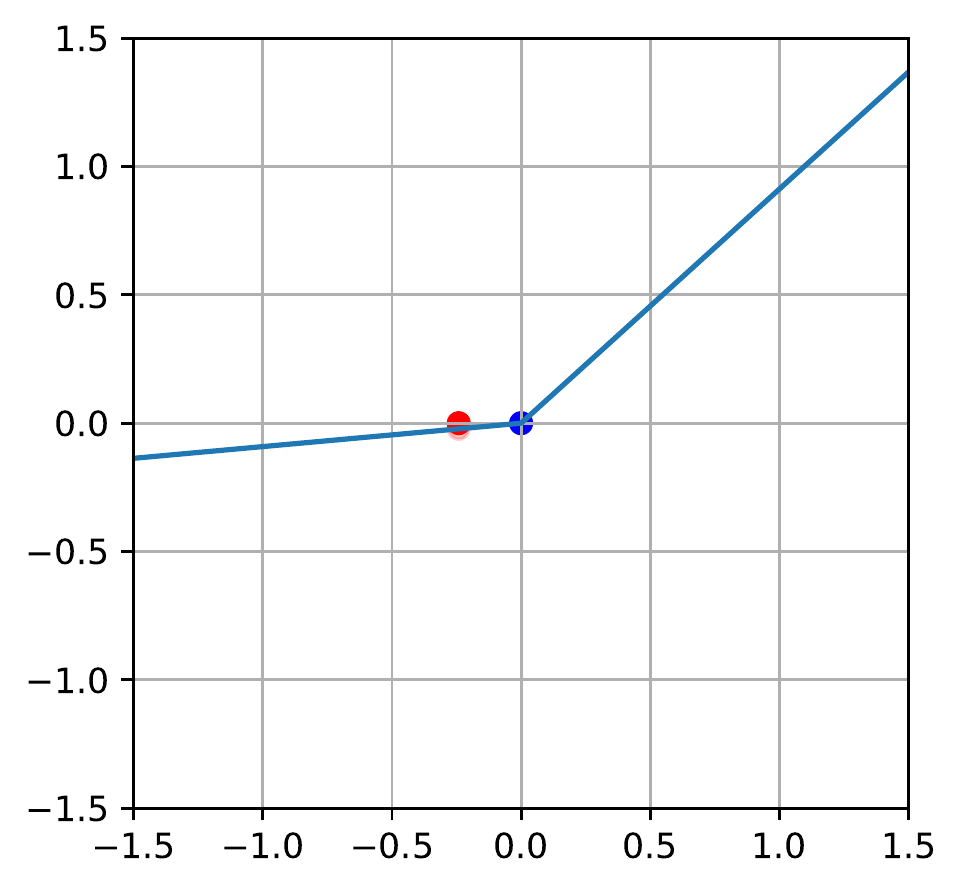}\label{fig:dirac1SampleLow300}} \\

%

\subfloat[Optimal]{\includegraphics[width=0.08\textwidth]{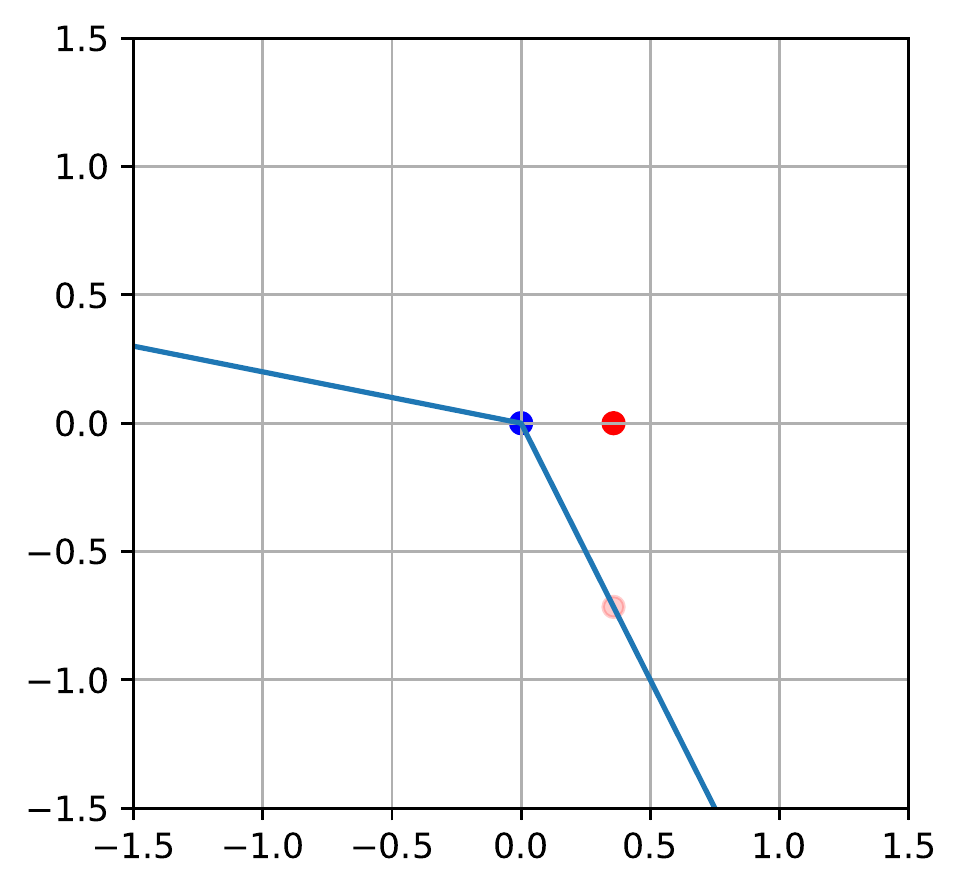}\label{fig:diracOptim}}
\subfloat[Iter. 0]{\includegraphics[width=0.08\textwidth]{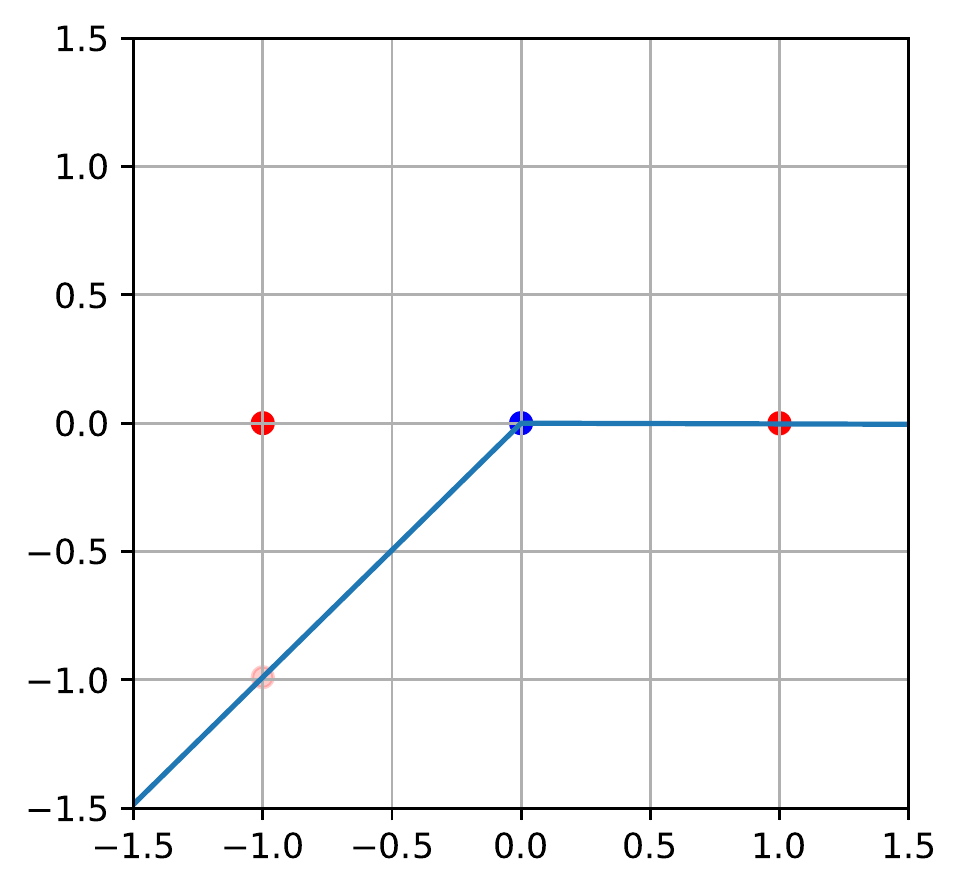}\label{fig:dirac2Sample0}}
\subfloat[Iter. 10]{\includegraphics[width=0.08\textwidth]{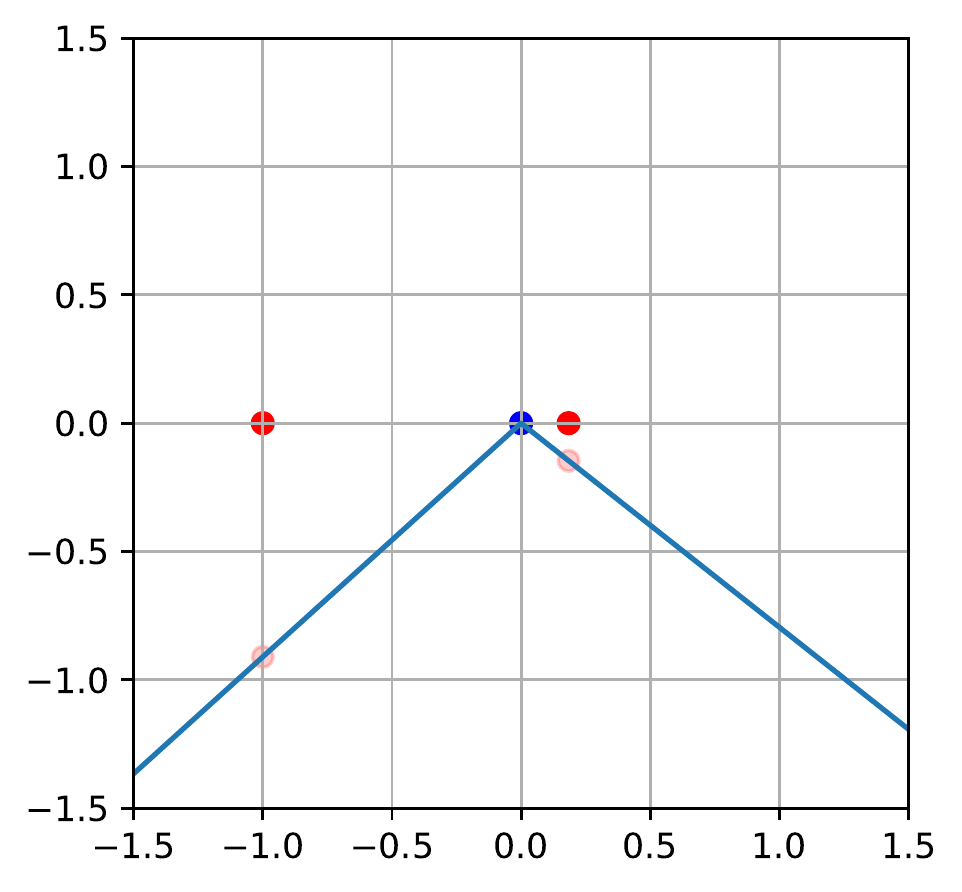}\label{fig:dirac2Sample10}}
\subfloat[Iter. 125]{\includegraphics[width=0.08\textwidth]{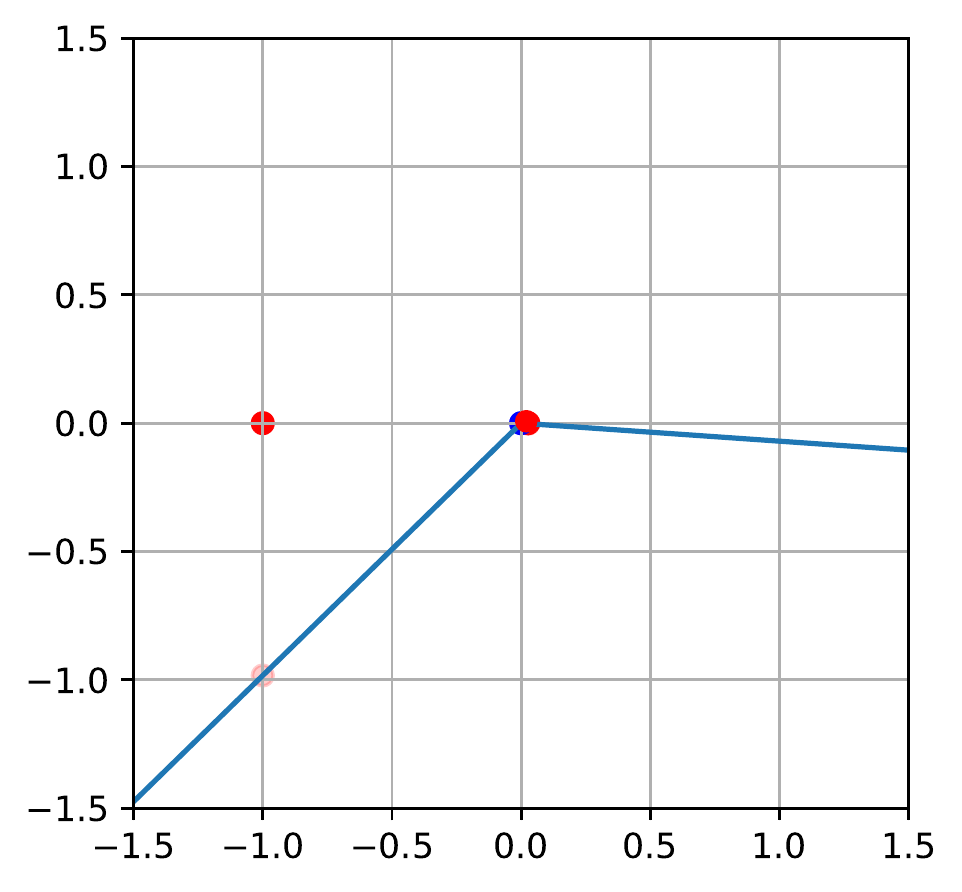}\label{fig:dirac2Sample100}} 
\subfloat[Iter. 250]{\includegraphics[width=0.08\textwidth]{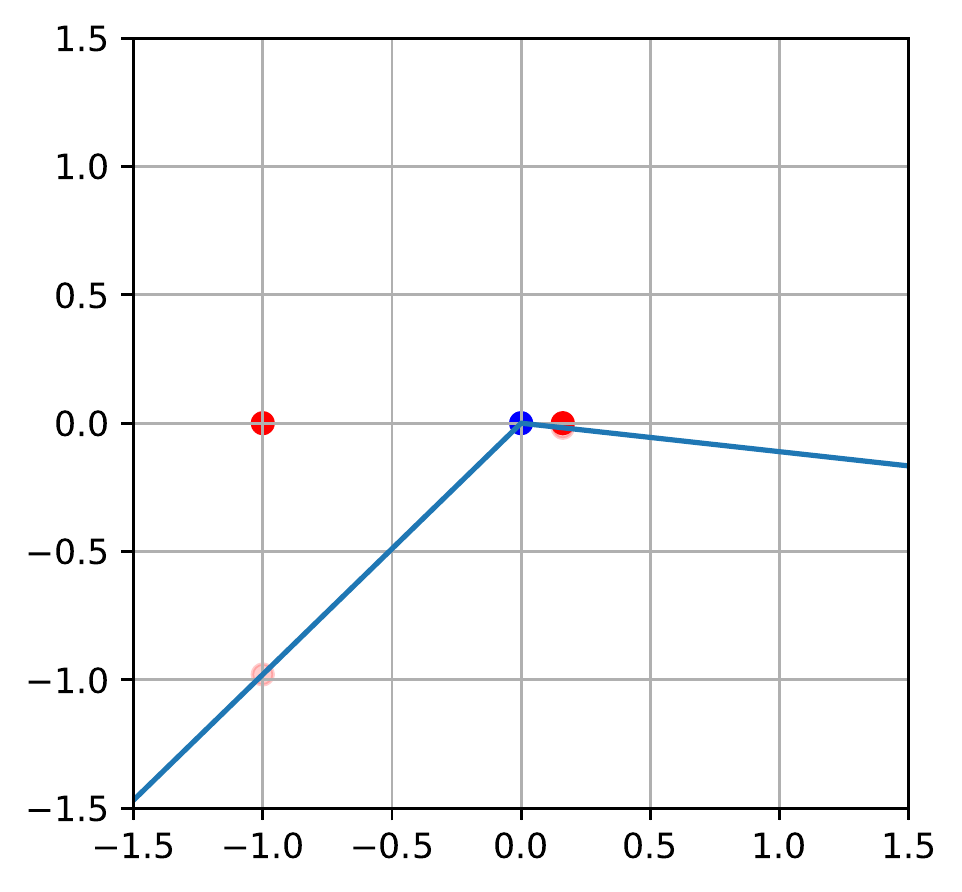}\label{fig:dirac2Sample250}}

\caption{
High capacity Dirac GAN with $n = 2$. The blue line represents the discriminator's function. The real and fake datapoints are shown by the blue and red dots, respectively. 
\protect\subref{fig:dirac1SampleLow0} - \protect\subref{fig:dirac1SampleLow300}: Dirac GAN trained on the current fake example only. 
\protect\subref{fig:diracOptim}: empirically optimal Dirac discriminator trained on the current fake example only.
\protect\subref{fig:dirac2Sample0} - \protect\subref{fig:dirac2Sample250}: Dirac GAN trained on two fake examples: old fake example on the left and current fake example on the right. 
}
\label{fig:dirac}
\end{figure}

We investigate the effect of CF on Dirac GAN \cite{whichGANConverge}, a GAN that learns a 1 dimensional Dirac distribution located at the origin, $p_r = \delta_0$.
In the original Dirac GAN, the discriminator is a linear function with 1 parameter, $D(x) = \psi x,\ \psi \in [-1, 1]$ and the model distribution is a Dirac distribution located at $\theta$, $p_g = \delta_\theta$. $\theta$ is the generator's parameter. Initially, $\theta \neq 0$.
At each iteration, the training dataset of Dirac GAN contains two training examples: a real training example $x_0 = 0$, and a fake training example $y_0 = \theta$. Gradient updates are applied directly on the fake training example.

\begin{equation}
-\mathcal{L}_G^{dirac} = \mathcal{L}_D^{dirac} = -D(0) + D(x)
\label{eqn:driac}
\end{equation}
The unique equilibrium is $\psi = \theta = 0$. \citeauthor{whichGANConverge} showed that the players in Dirac GAN do not converge to an equilibrium (see Fig. 1 in \cite{whichGANConverge}).
To make the game converge to the above equilibrium, the authors proposed R1 gradient penalty which pushes the gradient w.r.t. the real datapoint to $\bm 0$ (Table \ref{tab:loss}).
A high dimensional GAN can be narrowed to a Dirac GAN by considering a pair of real and fake sample and the discriminator's output along the line connecting these samples (similar to the landscape in Fig. \ref{fig:gannsSGD}-\ref{fig:wgangp}).

Because the discriminator in the original Dirac GAN is a linear function with a single parameter, the output of Dirac discriminator is always a monotonic function. 
We consider a generic discriminator which is a 1 hidden layer neural network: $\hat{D}(x) = \bm \Psi_1^{\top}\sigma(\bm \Psi_0 x)$ where $\bm \Psi_0, \bm \Psi_1 \in [-1, 1]^{n \times 1}$, and $\sigma$ is a monotonically increasing activation function such as Leaky ReLU (Fig. \ref{fig:dirac}).
At equilibrium, $\theta = 0$ and $\hat{D}(x)$ is any function with a global maximum at $x = 0$.
Although $\hat{D}$ can have global maxima (see Fig. \ref{fig:dirac2Sample10}), optimizing $\hat{D}$ only on the current task makes $\hat{D}$ a monotonic function (Fig. \ref{fig:diracOptim}).

\begin{proposition}
The optimal Dirac discriminator $\hat{D}^*(x)$ that minimizes $\mathcal{L}_D^{dirac}$ in Eqn. \ref{eqn:driac} is a monotonic function. 
\label{prop:diracMonotonic}
\end{proposition}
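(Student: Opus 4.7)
The plan is to exploit the separability of $\mathcal{L}_D^{dirac}$ over the $n$ hidden units and then enumerate which boundary configurations of $(\Psi_{0,i},\Psi_{1,i})$ are optimal. First I would write
\begin{equation}
\mathcal{L}_D^{dirac} = -\hat{D}(0) + \hat{D}(x) = \sum_{i=1}^n \Psi_{1,i}\bigl[\sigma(\Psi_{0,i} x) - \sigma(0)\bigr],
\end{equation}
which is a sum of terms depending only on the $i$-th neuron's two weights. Taking $\sigma(0)=0$ (the case relevant to Leaky ReLU; the general case only shifts the objective by a constant linear in $\sum_i \Psi_{1,i}$ and does not change which corners are optimal), each summand reduces to $\Psi_{1,i}\sigma(\Psi_{0,i}x)$, so the objective decouples into $n$ independent two-variable problems over the box $[-1,1]^2$.

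Next I would argue that for each neuron the minimizer sits at a corner of $[-1,1]^2$. The dependence on $\Psi_{1,i}$ is linear, so $\Psi_{1,i}^{\ast}\in\{-1,+1\}$; substituting back gives $\pm\sigma(\Psi_{0,i}x)$, which is monotone in $\Psi_{0,i}$ because $\sigma$ is monotonically increasing, so $\Psi_{0,i}^{\ast}\in\{-1,+1\}$ as well. Enumerating the four corners and using monotonicity of $\sigma$ around $0$, for any $x>0$ the two loss-minimizing corners are $(1,-1)$ and $(-1,+1)$, yielding per-neuron functions $-\sigma(x)$ and $\sigma(-x)$ respectively; by monotonicity of $\sigma$, both of these are strictly decreasing in $x$. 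Symmetrically, for $x<0$ the optimal corners are $(1,+1)$ and $(-1,-1)$, whose per-neuron functions $\sigma(x)$ and $-\sigma(-x)$ are both strictly increasing in $x$.

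Finally I would note that in any globally optimal configuration each neuron independently picks one of the optimal corners (possibly different neurons picking different optimal corners, since ties are allowed). In every case all selected per-neuron contributions are monotone \emph{in the same direction}: decreasing when the current fake point satisfies $x>0$, increasing when $x<0$. A sum of monotone functions sharing a common direction of monotonicity is itself monotone, so $\hat{D}^{\ast}(x)=\sum_i \Psi_{1,i}^{\ast}\sigma(\Psi_{0,i}^{\ast}x)$ is monotone, which proves the claim.

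The main obstacle I anticipate is the case $x=0$, where the loss is identically zero and every configuration is trivially optimal (so the statement is vacuous at equilibrium), and the bookkeeping for a general monotonic $\sigma$ with $\sigma(0)\neq 0$; in the latter case the extra constant term $-\sigma(0)\sum_i \Psi_{1,i}$ is independent of $\Psi_{0,i}$, so the choice of optimal $\Psi_{0,i}$ is unchanged and the monotonicity conclusion still follows, though one has to check that the selection of $\Psi_{1,i}^{\ast}$ still yields per-neuron functions with a common monotonicity direction. This amounts to checking one extra inequality per sign of $x$ and is otherwise routine.
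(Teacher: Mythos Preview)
Your proposal is correct and follows essentially the same route as the paper: exploit the separability of $\mathcal{L}_D^{dirac}$ into independent per-neuron terms over the box $[-1,1]^2$, locate the optimum at the corners, and conclude that the resulting discriminator is a sum of functions that are all monotone in the same direction. Your version is in fact slightly more careful than the paper's, since you explicitly allow different neurons to land on different (tied) optimal corners and verify that every such choice yields the same direction of monotonicity, whereas the paper exhibits only one optimal configuration; your notational overloading of $x$ for both the fixed fake point and the dummy input variable is the only place that could use tidying.
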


\begin{proof}

Let $\hat{D}(x) = \bm \Psi_1^{\top}\sigma(\bm \Psi_0 x)$ where $\bm \Psi_0, \bm \Psi_1 \in [-1, 1]^{n \times 1}$ be the discriminator and $\sigma$ be a non-decreasing activation function such as ReLU, Leaky ReLU, Sigmoid, or Tanh. Let $x_0 = 0$ be the real datapoint, $y_0 = \theta \neq 0$ be the fake datapoint. The empirically optimal discriminator $D^*$ must maximize the difference $D^*(x_0) - D^*(y_0)$.
\begin{eqnarray*}
 \hat{D}(x_0)& = & \bm \Psi_1^{\top}\sigma(\bm \Psi_0 \times 0) \\
 & = & \bm \Psi_1^{\top} \sigma(\bm 0) \\
 & = & \sum_{i = 1}^n \Psi_{1, i} \sigma(0) \\
 \hat{D}(y_0)& = & \bm \Psi_1^{\top}\sigma(\bm \Psi_0 \times y_0) \\
 & = & \sum_{i = 1}^n {\Psi_{1, i} \sigma(\Psi_{0, i} y_0)} \\
 \hat{D}(x_0) - \hat{D}(y_0) & = & \sum_{i = 1}^n \Psi_{1, i} \times (\sigma(0) - \sigma(\Psi_{0, i} y_0))
\end{eqnarray*}

Because
\begin{eqnarray*}
\Psi_{0, i} y_0 & \le & \abs{y_0} \nonumber 
\end{eqnarray*}
and $\sigma$ is non-decreasing
\begin{equation*}
\sigma(0) - \sigma(-\abs{y_0}) \ge \sigma(0) - \sigma(\Psi_{0, i} y_0) \ge \sigma(0) - \sigma(\abs{y_0})
\end{equation*}
If $\sigma$ is ReLU or Leaky ReLU or Tanh, then 
$\sigma(0) = 0$, 
$\abs{\sigma(\abs{y_0})} \ge \abs{\sigma(-\abs{y_0})}$,
thus 
\[\abs{\sigma(0) - \sigma(\abs{y_0})} > \abs{\sigma(0) - \sigma(-\abs{y_0})}\]
If $\sigma$ is Sigmoid, then 
$\sigma(0) = 0.5$
and 
$\abs{\sigma(0) - \sigma(\abs{y_0})} = \abs{\sigma(0) - \sigma(-\abs{y_0})}$.
For both cases, we have
\begin{equation}
\abs{\sigma(0) - \sigma(\Psi_{0, i} y_0)} \le \abs{\sigma(0) - \sigma(\abs{y_0})}
\end{equation}
Thus 
\begin{equation}
{\Psi_{1, i} (\sigma(0) - \sigma(\Psi_{0, i} y_0))} \le 1 \times \abs{\sigma(0) -\sigma(\abs{y_0})}
\end{equation}

The equality for both Eqn. 1 and 2 is achieved for all cases when $\Psi_{1, i} = -1$ and $\sigma({\Psi_{0, i} y_0}) = \sigma(\abs{y_0}) \Rightarrow \Psi_{0, i} y_0 = \abs{y_0} \Rightarrow \Psi_{0, i} = \text{sign}(y_0)$. The optimal discriminator's parameters are $\bm \Psi_0^* = \text{sign}(y_0) \times \bm 1, \bm \Psi_1^* = -\bm 1$. 
\[ D(x) = -\bm 1^\top \sigma(x \times \text{sign}(y_0) \times \bm 1) \]
Without loss of generality, assume $\text{sign}(y_0) = 1$.
\[ D(x) = -\bm 1^\top \sigma(x \times \bm 1) = -n\sigma(x) \]
Because $\sigma$ is monotonic, $D(x)$ is monotonic.
\end{proof}

Optimizing the performance of $\hat{D}$ pushes it toward $\hat{D}^*$, making $\hat{D}$ monotonic (Fig. \ref{fig:dirac1SampleLow0} - \ref{fig:dirac1SampleLow300}). This explains the directional monotonicity of discriminators in Fig. \ref{fig:8Gauss3000}-\ref{fig:8Gauss20000},  \ref{fig:gannsSGD}.

Although the discriminator in Fig. \ref{fig:diracOptim} minimizes the score of the current fake datapoint, it assigns high scores to (old) fake datapoints on the left of the real datapoint, i.e. it forgets these datapoints.
If the discriminator is fixed, then minimizing $\mathcal{L}_G^{dirac}$ corresponds to moving $\theta$ to $-\infty$.
\textit{Dirac GAN with a monotonic discriminator does not converge.}
When the generator and discriminator are trained with alternating SGD, the two players oscillate around the equilibrium (Fig. \ref{fig:dirac1SampleLow0} - \ref{fig:dirac1SampleLow300}).

The problem can be alleviated if one old fake datapoint is added to the training dataset.
Fig. \ref{fig:dirac2Sample0} - \ref{fig:dirac2Sample250} shows that when old fake example is added, Dirac GAN has better convergence behavior (the small fluctuation is due to the large constant learning rate of 0.1).
The discriminator at iteration 10 has a global maximum at the origin. 
If the discriminator is fixed, then $\theta$ will converge to 0.
The experiment suggests that information about previous model distributions helps GANs converge.
\cite{shrivastava2017learning} used a buffer of recent old fake samples to refine reasonably good fake samples. Recent old fake samples reduce the oscillation around the equilibrium, helping GANs to converge faster and produce sharper images.
However, because the number of samples needed to capture the statistics of a distribution grows exponentially with it dimensionality, storing old fake datapoints is not efficient for high dimensional data. 
In the next section, we study more efficient methods for preserving information about old distributions.

\begin{figure}
\centering
\begin{flushright}
\subfloat[Real]{
\adjincludegraphics[width=0.11\textwidth, trim={0 {0.75\width} {0.5\width} {0\width}}, clip]{dcgan-celeba/gan-dcgan-celeba-nn512-nr3-nf64-nf64-ld0.0002-lg0.0002-nd1-ng1-gt0.0-gr0.0-grNone-oradam-mm0.0-b10.5-b20.99-dsNone-ntGaussian-nm100-ns30000-be64-sv0.02-se1-ip0.01-idslerp-ne100.0-np1.0-ntTrue-ieFalse-/2019-11-28-11:39:43.776340/real}\label{fig:celebaNSReal}}
\subfloat[Land. 5000]{
\adjincludegraphics[width=0.11\textwidth, trim={0 {0.74\width} {0.5\width} {0\width}}, clip]{dcgan-celeba/gan-dcgan-celeba-nn512-nr3-nf64-nf64-ld0.0002-lg0.0002-nd1-ng1-gt0.0-gr0.0-grNone-oradam-mm0.0-b10.5-b20.99-dsNone-ntGaussian-nm100-ns30000-be64-sv0.02-se1-ip0.01-idslerp-ne100.0-np1.0-ntTrue-ieFalse-/2019-11-28-11:39:43.776340/extrema-05000.pdf}\label{fig:celebaNSLand5k}}
\subfloat[Land. 10000]{
\adjincludegraphics[width=0.11\textwidth, trim={0 {0.74\width} {0.5\width} {0\width}}, clip]{dcgan-celeba/gan-dcgan-celeba-nn512-nr3-nf64-nf64-ld0.0002-lg0.0002-nd1-ng1-gt0.0-gr0.0-grNone-oradam-mm0.0-b10.5-b20.99-dsNone-ntGaussian-nm100-ns30000-be64-sv0.02-se1-ip0.01-idslerp-ne100.0-np1.0-ntTrue-ieFalse-/2019-11-28-11:39:43.776340/extrema-10000.pdf}}
\subfloat[Land. 20000]{
\adjincludegraphics[width=0.11\textwidth, trim={0 {0.74\width} {0.5\width} {0\width}}, clip]{dcgan-celeba/gan-dcgan-celeba-nn512-nr3-nf64-nf64-ld0.0002-lg0.0002-nd1-ng1-gt0.0-gr0.0-grNone-oradam-mm0.0-b10.5-b20.99-dsNone-ntGaussian-nm100-ns30000-be64-sv0.02-se1-ip0.01-idslerp-ne100.0-np1.0-ntTrue-ieFalse-/2019-11-28-11:39:43.776340/extrema-20000.pdf}\label{fig:celebaNSLand20k}}

\subfloat[Gen. 5000]{
\adjincludegraphics[width=0.11\textwidth, trim={0 {0.875\width} {0.5\width} {0\width}}, clip]{dcgan-celeba/gan-dcgan-celeba-nn512-nr3-nf64-nf64-ld0.0002-lg0.0002-nd1-ng1-gt0.0-gr0.0-grNone-oradam-mm0.0-b10.5-b20.99-dsNone-ntGaussian-nm100-ns30000-be64-sv0.02-se1-ip0.01-idslerp-ne100.0-np1.0-ntTrue-ieFalse-/2019-11-28-11:39:43.776340/fake-05000}
}
\subfloat[Gen. 10000]{
\adjincludegraphics[width=0.11\textwidth, trim={0 {0.875\width} {0.5\width} {0\width}}, clip]{dcgan-celeba/gan-dcgan-celeba-nn512-nr3-nf64-nf64-ld0.0002-lg0.0002-nd1-ng1-gt0.0-gr0.0-grNone-oradam-mm0.0-b10.5-b20.99-dsNone-ntGaussian-nm100-ns30000-be64-sv0.02-se1-ip0.01-idslerp-ne100.0-np1.0-ntTrue-ieFalse-/2019-11-28-11:39:43.776340/fake-10000}
}
\subfloat[Gen. 20000]{
\adjincludegraphics[width=0.11\textwidth, trim={0 {0.875\width} {0.5\width} {0\width}}, clip]{dcgan-celeba/gan-dcgan-celeba-nn512-nr3-nf64-nf64-ld0.0002-lg0.0002-nd1-ng1-gt0.0-gr0.0-grNone-oradam-mm0.0-b10.5-b20.99-dsNone-ntGaussian-nm100-ns30000-be64-sv0.02-se1-ip0.01-idslerp-ne100.0-np1.0-ntTrue-ieFalse-/2019-11-28-11:39:43.776340/fake-20000}\label{fig:celebaNSFake20k}}

\subfloat[Real]{
\adjincludegraphics[width=0.11\textwidth, trim={0 {0.75\width} {0.5\width} {0\width}}, clip]{dcgan-celeba/gan-dcgan-celeba-nn512-nr3-nf64-nf64-ld0.0002-lg0.0002-nd1-ng1-gt10.0-gr0.0-grNone-oradam-mm0.0-b10.5-b20.99-dsNone-ntGaussian-nm100-ns30000-be64-sv0.02-se1-ip0.01-idslerp-ne100.0-np1.0-ntTrue-ieFalse-/2019-11-28-11:47:39.422148/real}\label{fig:celeba0GPReal}}
\subfloat[Land. 5000]{
\adjincludegraphics[width=0.11\textwidth, trim={0 {0.74\width} {0.5\width} {0\width}}, clip]{dcgan-celeba/gan-dcgan-celeba-nn512-nr3-nf64-nf64-ld0.0002-lg0.0002-nd1-ng1-gt10.0-gr0.0-grNone-oradam-mm0.0-b10.5-b20.99-dsNone-ntGaussian-nm100-ns30000-be64-sv0.02-se1-ip0.01-idslerp-ne100.0-np1.0-ntTrue-ieFalse-/2019-11-28-11:47:39.422148/extrema-05000.pdf}}
\subfloat[Land. 10000]{
\adjincludegraphics[width=0.11\textwidth, trim={0 {0.74\width} {0.5\width} {0\width}}, clip]{dcgan-celeba/gan-dcgan-celeba-nn512-nr3-nf64-nf64-ld0.0002-lg0.0002-nd1-ng1-gt10.0-gr0.0-grNone-oradam-mm0.0-b10.5-b20.99-dsNone-ntGaussian-nm100-ns30000-be64-sv0.02-se1-ip0.01-idslerp-ne100.0-np1.0-ntTrue-ieFalse-/2019-11-28-11:47:39.422148/extrema-10000.pdf}}
\subfloat[Land. 20000]{
\adjincludegraphics[width=0.11\textwidth, trim={0 {0.74\width} {0.5\width} {0\width}}, clip]{dcgan-celeba/gan-dcgan-celeba-nn512-nr3-nf64-nf64-ld0.0002-lg0.0002-nd1-ng1-gt10.0-gr0.0-grNone-oradam-mm0.0-b10.5-b20.99-dsNone-ntGaussian-nm100-ns30000-be64-sv0.02-se1-ip0.01-idslerp-ne100.0-np1.0-ntTrue-ieFalse-/2019-11-28-11:47:39.422148/extrema-20000.pdf}}

\subfloat[Gen. 5000]{
\adjincludegraphics[width=0.11\textwidth, trim={0 {0.875\width} {0.5\width} {0\width}}, clip]{dcgan-celeba/gan-dcgan-celeba-nn512-nr3-nf64-nf64-ld0.0002-lg0.0002-nd1-ng1-gt10.0-gr0.0-grNone-oradam-mm0.0-b10.5-b20.99-dsNone-ntGaussian-nm100-ns30000-be64-sv0.02-se1-ip0.01-idslerp-ne100.0-np1.0-ntTrue-ieFalse-/2019-11-28-11:47:39.422148/fake-05000}
}
\subfloat[Gen. 10000]{
\adjincludegraphics[width=0.11\textwidth, trim={0 {0.875\width} {0.5\width} {0\width}}, clip]{dcgan-celeba/gan-dcgan-celeba-nn512-nr3-nf64-nf64-ld0.0002-lg0.0002-nd1-ng1-gt10.0-gr0.0-grNone-oradam-mm0.0-b10.5-b20.99-dsNone-ntGaussian-nm100-ns30000-be64-sv0.02-se1-ip0.01-idslerp-ne100.0-np1.0-ntTrue-ieFalse-/2019-11-28-11:47:39.422148/fake-10000}
}
\subfloat[Gen. 20000]{
\adjincludegraphics[width=0.11\textwidth, trim={0 {0.875\width} {0.5\width} {0\width}}, clip]{dcgan-celeba/gan-dcgan-celeba-nn512-nr3-nf64-nf64-ld0.0002-lg0.0002-nd1-ng1-gt10.0-gr0.0-grNone-oradam-mm0.0-b10.5-b20.99-dsNone-ntGaussian-nm100-ns30000-be64-sv0.02-se1-ip0.01-idslerp-ne100.0-np1.0-ntTrue-ieFalse-/2019-11-28-11:47:39.422148/fake-20000}\label{fig:celeba0GPFake20k}}
\end{flushright}

\caption{Result on CelebA. \protect\subref{fig:celebaNSReal} - \protect\subref{fig:celebaNSFake20k} DCGAN-NS. \protect\subref{fig:celeba0GPReal} - \protect\subref{fig:celeba0GPFake20k} DCGAN-0GP}
\label{fig:celeba}
\end{figure}

\begin{figure}
\centering
\begin{flushright}
\subfloat[Real]{
\adjincludegraphics[width=0.11\textwidth, trim={0 {0.875\width} {0.5\width} {0\width}}, clip]{dcgan-cifar10/gan-dcgan-cifar10-nn512-nr3-nf64-nf64-ld0.0003-lg0.0003-nd1-ng1-gt0.0-gr1.0-grNone-oradam-mm0.0-b10.5-b20.99-dsNone-ntGaussian-nm100-ns200001-be64-sv0.02-se1-ip0.01-idslerp-ne100.0-np1.0-ntTrue-ieFalse-/2019-11-27-00:48:15.239335/real}\label{fig:dcgannsReal}}
\subfloat[Land. 5000]{
\adjincludegraphics[width=0.11\textwidth, trim={0 {0.86\width} {0.5\width} {0\width}}, clip]{dcgan-cifar10/gan-dcgan-cifar10-nn512-nr3-nf64-nf64-ld0.0003-lg0.0003-nd1-ng1-gt0.0-gr1.0-grNone-oradam-mm0.0-b10.5-b20.99-dsNone-ntGaussian-nm100-ns200001-be64-sv0.02-se1-ip0.01-idslerp-ne100.0-np1.0-ntTrue-ieFalse-/2019-11-27-00:48:15.239335/extrema-05000.pdf}
\label{fig:dcgannsLand5k}}
\subfloat[Land. 10000]{
\adjincludegraphics[width=0.11\textwidth, trim={0 {0.86\width} {0.5\width} {0\width}}, clip]{dcgan-cifar10/gan-dcgan-cifar10-nn512-nr3-nf64-nf64-ld0.0003-lg0.0003-nd1-ng1-gt0.0-gr1.0-grNone-oradam-mm0.0-b10.5-b20.99-dsNone-ntGaussian-nm100-ns200001-be64-sv0.02-se1-ip0.01-idslerp-ne100.0-np1.0-ntTrue-ieFalse-/2019-11-27-00:48:15.239335/extrema-10000.pdf}}
\subfloat[Land. 20000]{
\adjincludegraphics[width=0.11\textwidth, trim={0 {0.86\width} {0.5\width} {0\width}}, clip]{dcgan-cifar10/gan-dcgan-cifar10-nn512-nr3-nf64-nf64-ld0.0003-lg0.0003-nd1-ng1-gt0.0-gr1.0-grNone-oradam-mm0.0-b10.5-b20.99-dsNone-ntGaussian-nm100-ns200001-be64-sv0.02-se1-ip0.01-idslerp-ne100.0-np1.0-ntTrue-ieFalse-/2019-11-27-00:48:15.239335/extrema-20000.pdf}
\label{fig:dcgannsLand20k}}

\subfloat[Gen. 5000]{
\adjincludegraphics[width=0.11\textwidth, trim={0 {0.875\width} {0.5\width} {0\width}}, clip]{dcgan-cifar10/gan-dcgan-cifar10-nn512-nr3-nf64-nf64-ld0.0003-lg0.0003-nd1-ng1-gt0.0-gr1.0-grNone-oradam-mm0.0-b10.5-b20.99-dsNone-ntGaussian-nm100-ns200001-be64-sv0.02-se1-ip0.01-idslerp-ne100.0-np1.0-ntTrue-ieFalse-/2019-11-27-00:48:15.239335/fake-05000}
}
\subfloat[Gen. 10000]{
\adjincludegraphics[width=0.11\textwidth, trim={0 {0.875\width} {0.5\width} {0\width}}, clip]{dcgan-cifar10/gan-dcgan-cifar10-nn512-nr3-nf64-nf64-ld0.0003-lg0.0003-nd1-ng1-gt0.0-gr1.0-grNone-oradam-mm0.0-b10.5-b20.99-dsNone-ntGaussian-nm100-ns200001-be64-sv0.02-se1-ip0.01-idslerp-ne100.0-np1.0-ntTrue-ieFalse-/2019-11-27-00:48:15.239335/fake-10000}
}
\subfloat[Gen. 20000]{
\adjincludegraphics[width=0.11\textwidth, trim={0 {0.875\width} {0.5\width} {0\width}}, clip]{dcgan-cifar10/gan-dcgan-cifar10-nn512-nr3-nf64-nf64-ld0.0003-lg0.0003-nd1-ng1-gt0.0-gr1.0-grNone-oradam-mm0.0-b10.5-b20.99-dsNone-ntGaussian-nm100-ns200001-be64-sv0.02-se1-ip0.01-idslerp-ne100.0-np1.0-ntTrue-ieFalse-/2019-11-27-00:48:15.239335/fake-20000}\label{fig:dcgannsFake20k}}

\subfloat[Real]{
\adjincludegraphics[width=0.11\textwidth, trim={0 {0.75\width} {0.5\width} {0\width}}, clip]{dcgan-imba/gan-dcgan-cifar10-nn512-rt1.0-ft1.0-nr3-nf64-nf64-ld0.0002-lg0.0002-nd1-ng1-gt0.0-gr0.0-grNone-oradam-mm0.0-b10.5-b20.99-dsNone-ntGaussian-nm100-ns50001-be64-sv0.02-se1-ip0.01-idslerp-ne100.0-np1.0-ntTrue-ieFalse-/2019-11-29-15:25:39.921630/real}\label{fig:dcganimbaReal}}
\subfloat[Land. 5000]{
\adjincludegraphics[width=0.11\textwidth, trim={0 {0.74\width} {0.5\width} {0\width}}, clip]{dcgan-imba/gan-dcgan-cifar10-nn512-rt1.0-ft1.0-nr3-nf64-nf64-ld0.0002-lg0.0002-nd1-ng1-gt0.0-gr0.0-grNone-oradam-mm0.0-b10.5-b20.99-dsNone-ntGaussian-nm100-ns50001-be64-sv0.02-se1-ip0.01-idslerp-ne100.0-np1.0-ntTrue-ieFalse-/2019-11-29-15:25:39.921630/extrema-05000.pdf}}
\subfloat[Land. 10000]{
\adjincludegraphics[width=0.11\textwidth, trim={0 {0.74\width} {0.5\width} {0\width}}, clip]{dcgan-imba/gan-dcgan-cifar10-nn512-rt1.0-ft1.0-nr3-nf64-nf64-ld0.0002-lg0.0002-nd1-ng1-gt0.0-gr0.0-grNone-oradam-mm0.0-b10.5-b20.99-dsNone-ntGaussian-nm100-ns50001-be64-sv0.02-se1-ip0.01-idslerp-ne100.0-np1.0-ntTrue-ieFalse-/2019-11-29-15:25:39.921630/extrema-10000.pdf}}
\subfloat[Land. 20000]{
\adjincludegraphics[width=0.11\textwidth, trim={0 {0.73\width} {0.5\width} {0\width}}, clip]{dcgan-imba/gan-dcgan-cifar10-nn512-rt1.0-ft1.0-nr3-nf64-nf64-ld0.0002-lg0.0002-nd1-ng1-gt0.0-gr0.0-grNone-oradam-mm0.0-b10.5-b20.99-dsNone-ntGaussian-nm100-ns50001-be64-sv0.02-se1-ip0.01-idslerp-ne100.0-np1.0-ntTrue-ieFalse-/2019-11-29-14:24:58.219254/extrema-20000.pdf}\label{fig:dcganimbaLand20k}}

\subfloat[Gen. 5000]{
\adjincludegraphics[width=0.11\textwidth, trim={0 {0.75\width} {0.5\width} {0\width}}, clip]{dcgan-imba/gan-dcgan-cifar10-nn512-rt1.0-ft1.0-nr3-nf64-nf64-ld0.0002-lg0.0002-nd1-ng1-gt0.0-gr0.0-grNone-oradam-mm0.0-b10.5-b20.99-dsNone-ntGaussian-nm100-ns50001-be64-sv0.02-se1-ip0.01-idslerp-ne100.0-np1.0-ntTrue-ieFalse-/2019-11-29-15:25:39.921630/fake-05000}
}
\subfloat[Gen. 10000]{
\adjincludegraphics[width=0.11\textwidth, trim={0 {0.75\width} {0.5\width} {0\width}}, clip]{dcgan-imba/gan-dcgan-cifar10-nn512-rt1.0-ft1.0-nr3-nf64-nf64-ld0.0002-lg0.0002-nd1-ng1-gt0.0-gr0.0-grNone-oradam-mm0.0-b10.5-b20.99-dsNone-ntGaussian-nm100-ns50001-be64-sv0.02-se1-ip0.01-idslerp-ne100.0-np1.0-ntTrue-ieFalse-/2019-11-29-15:25:39.921630/fake-10000}
}
\subfloat[Gen. 20000]{
\adjincludegraphics[width=0.11\textwidth, trim={0 {0.75\width} {0.5\width} {0\width}}, clip]{dcgan-imba/gan-dcgan-cifar10-nn512-rt1.0-ft1.0-nr3-nf64-nf64-ld0.0002-lg0.0002-nd1-ng1-gt0.0-gr0.0-grNone-oradam-mm0.0-b10.5-b20.99-dsNone-ntGaussian-nm100-ns50001-be64-sv0.02-se1-ip0.01-idslerp-ne100.0-np1.0-ntTrue-ieFalse-/2019-11-29-15:25:39.921630/fake-20000}\label{fig:dcganimbaFake20k}}
\end{flushright}
\caption{Result on CIFAR-10. \protect\subref{fig:dcgannsReal} - \protect\subref{fig:dcgannsFake20k} DCGAN-NS.
 \protect\subref{fig:dcganimbaReal} - \protect\subref{fig:dcganimbaFake20k} DCGAN-imba, $\gamma=10$.}
\label{fig:cifar10}
\end{figure}

\begin{figure}
\subfloat[Img.]{
\adjincludegraphics[width=0.08\textwidth, trim={0 0 {0.983\width} {0\width}}, clip]{mnist-randscore/gan-mlp-mnist-rt1-ft1-nn512-nr3-ld0.0003-lg0.0003-nd1-ng1-gt0.0-gr0.0-grNone-orsgd-mm0.0-b10.5-b20.99-dsNone-ntGaussian-nm50-ns200001-be64-sv0.02-se1-ip0.01-idslerp-ne100.0-np1.0-ntTrue-ieFalse-/2019-11-30-02:08:16.033915/fixed-fake-10000}\label{fig:gannsImg}}
\subfloat[Score]{
\adjincludegraphics[width=0.2\textwidth, trim={0 {0.9831\height} {0.\width} {0\width}}, clip]{mnist-randscore/gan-mlp-mnist-rt1-ft1-nn512-nr3-ld0.0003-lg0.0003-nd1-ng1-gt0.0-gr0.0-grNone-orsgd-mm0.0-b10.5-b20.99-dsNone-ntGaussian-nm50-ns200001-be64-sv0.02-se1-ip0.01-idslerp-ne100.0-np1.0-ntTrue-ieFalse-/2019-11-30-02:08:16.033915/fixed-fake-scores-10000.pdf}\label{fig:gannsScore}}

\subfloat[Img.]{
\adjincludegraphics[width=0.08\textwidth, trim={0 {0.96875\height} {0.\width} {0\width}}, clip]{mnist-randscore/gan-mlp-mnist-rt1-ft1-nn512-nr3-ld0.0003-lg0.0003-nd1-ng1-gt100.0-gr0.0-grNone-oradam-mm0.0-b10.5-b20.99-dsNone-ntGaussian-nm50-ns200001-be64-sv0.02-se1-ip0.01-idslerp-ne100.0-np1.0-ntTrue-ieFalse-/2019-11-30-10:06:53.148293/fixed-fake-10000}\label{fig:gan0gpImg}}
\subfloat[Score]{
\adjincludegraphics[width=0.2\textwidth, trim={0 {0.968\height} {0.\width} {0\width}}, clip]{mnist-randscore/gan-mlp-mnist-rt1-ft1-nn512-nr3-ld0.0003-lg0.0003-nd1-ng1-gt100.0-gr0.0-grNone-oradam-mm0.0-b10.5-b20.99-dsNone-ntGaussian-nm50-ns200001-be64-sv0.02-se1-ip0.01-idslerp-ne100.0-np1.0-ntTrue-ieFalse-/2019-11-30-10:06:53.148293/fixed-fake-scores-10000.pdf}\label{fig:gan0gpScore}}
\caption{Score of fixed fake images during training from iteration 10000 to 200000. The same MLP in Fig. 2 was trained with SGD with learning rate $3e-4$. \protect\subref{fig:gannsImg} - \protect\subref{fig:gannsScore} GAN-NS. \protect\subref{fig:gan0gpImg} - \protect\subref{fig:gan0gpScore} GAN-0GP with $\lambda=100$. GAN-NS assigns random scores to the same fake image, implying that it does not remember information about this fake sample. GAN-0GP is much more stable and consistently assigns scores lower than 0.5 to old fake samples.}
\label{fig:ganScore}
\end{figure}

\begin{table}
\centering
\begin{tabular}{|c|c|}
\hline
Architecture & DCGAN Pytorch example \\
Learning rate & 2e-4 \\
Batch size & 64 \\
Optimizer & Adam, $\beta_1 = 0.5, \beta_2 = 0.99$ \\
No. filters at 1st layer & 64 \\
\hline
\end{tabular}
\caption{DCGAN model architecture \& hyper parameters.}
\label{tab:dcganArch}
\end{table}

\begin{table}
\centering
\begin{tabular}{|c|c|}
\hline
 & mean/std \\
\hline
DCGAN & 2.054/0.913 \\
DCGAN-imba, $\gamma=10$ & 3.381/0.078 \\
DCGAN-0GP, $\lambda=100$ &  2.705/0.901 \\
DCGAN-0GP-imba, $\lambda=100, \gamma=10$ & 3.038/0.342 \\
\hline
\end{tabular}
\caption{Inception scores of models at iteration 50k. The result is averaged over 10 different runs.}
\label{tab:dcganResult}
\end{table}


\section{Preventing catastrophic forgetting}
\label{sec:methods}
Based on the reasons identified in Section \ref{sec:cfGAN}, we propose the following ways to address CF problem:
\begin{enumerate}
\item \textit{Preserve and use information from previous tasks in the current task}. 
\item \textit{Introduce prior knowledge to the game in a way such that old knowledge is useful for the new task and is not erased by the new task.} 
\end{enumerate}


\subsection{Preserving and using old information}
\textbf{Optimizers with momentum.} The update rule of SGD with momentum
\begin{eqnarray*}
\bm g^{t} & = & \gamma \bm g^{t - 1} + \eta \nabla_{\theta}^{t} \\
\bm \theta^{t + 1} & = & \bm \theta^{t} - \bm g^t
\end{eqnarray*}
The momentum term $\gamma \bm g^{t - 1}$ is a simple form of memory that carries gradient information from previous training iterations to the current iteration. 
When the discriminator/generator is updated with $\bm g^t$, the performance of the network on previous tasks is also improved.
The effectiveness of momentum in preventing CF is demonstrated in Fig. \ref{fig:8GaussAdam1500}: the discriminator's gradient pattern is more stable and similar to those of GAN-0GP and GAN-R1. 

\textbf{Continual learning algorithms} such as EWC \cite{ewc} and online EWC \cite{progressCompress} 
prevent important knowledge of previous tasks from being overwritten by the new task. 
At the end of a task $\mathcal{T}^t$, online EWC computes the importance $\hat{\omega}_i^t$ of each parameter $\theta_i^t$ to the task and adds a regularization term to the loss function of task $\mathcal{T}^{t + 1}$:
\begin{eqnarray*}
\omega_i^t & = & \alpha \hat{\omega}_i^t + (1 - \alpha) \omega_i^{t - 1} \\
\mathcal{L}_{EWC}^{t + 1} & = & \mathcal{L}^{t + 1} + \lambda \sum_i \omega_i^t (\theta_i - \theta_i^t)^2
\end{eqnarray*}
where $\theta_i^t$ is the value of $\theta_i$ at the end of task $\mathcal{T}^t$, $\alpha$ balances the importance of the current task and previous tasks, $\omega_i^t$ accumulates the importance of $\theta_i$ throughout the training process. Because consecutive model distributions are similar, we consider a chunk of $\tau$ distributions as a task to the discriminator. The importance $\omega_i$ is computed every $\tau$ GAN training iteration. 
The regularizer prevents important weights from deviating too far from the values that are optimal to previous tasks while allowing less important weights to change more freely.
It helps the discriminator preserves important information about old distributions. 
\citeauthor{ganIsCont} independently proposed a similar way of adapting continual learning methods to GANs. Experiments in the paper showed that continual learning methods improve the quality of GANs.

\subsection{Introducing prior knowledge to the game}
In Dirac GAN, if the discriminator has a local maximum at the real datapoint then it can always classify the real and the fake datapoint correctly, regardless of location of the fake datapoint.
Because separating different fake distributions from the target distribution requires the same knowledge, that knowledge will not be erased from the discriminator.
We want to introduce to the game the knowledge that real datapoints should be local maxima.
R1 and 0GP are two ways to implement that.

\textbf{R1 regularizer} (the third row in Table \ref{tab:loss}) forces the gradients w.r.t. a real datapoint to be $\bm 0$, making it a local extremum of the discriminator. 
As the discriminator maximizes the score of real datapoints, real datapoints become local maxima of the discriminator.
Fig. \ref{fig:8GaussR11000} - \ref{fig:8GaussR15000} shows that real datapoints are always local maxima and the gradient pattern of the discriminator stay unchanged as $p_g$ moves toward $p_r$.
Fig. \ref{fig:ganr1} demonstrates the same effect of R1 on MNIST. Note that noisy images that are far away from the real images (e.g. $\bm x + k\hat{\bm u}$ for $k < -50$) have higher scores than real images. This is because no regularizer is applied to these noisy images.

\textbf{0GP regularizer} (the forth row in Table \ref{tab:loss}) pushes gradients w.r.t. datapoints on the line connecting a real datapoint $\bm x$ and a fake datapoint $\bm y$ toward $\bm 0$. 0GP forces the score to increase gradually as we move from $\bm y$ to $\bm x$. During training, $\bm x$ is paired with different $\bm y_i$. Thus, the score $D(\bm x)$ is greater than the scores of fake datapoints in a wider neighborhood. That fixes the problem of R1 and creates wider local maxima (Fig. \ref{fig:gan0gp}, \ref{fig:celeba}). \citeauthor{improveGeneralization} \cite{improveGeneralization} showed that GAN-0GP generalizes better than GAN-R1.
Although generalization is beyond the scope of this paper, we believe that the sharpness of the discriminator's landscape is related to its generalization capability. Prior works on generalization of neural networks \cite{flatminima} showed flat (wide) minima of the loss surface generalize better than sharp minima. 
Creating discriminators with wide local maxima is a good way to improve GANs' generalizability.

\textbf{WGAN-GP}  (the first row in Table \ref{tab:loss}) uses 1-centered gradient penalty (1GP) which pushes gradients w.r.t. datapoints on the line connecting a real datapoint $\bm x$ and a fake datapoint $\bm y$ toward $\bm 1$, forcing the score to increase gradually from $\bm y$ to $\bm x$. Fig. \ref{fig:wgangp} shows that real datapoints are local maxima of the discriminator.
\citeauthor{wassersteinDiv} \cite{wassersteinDiv} showed that WGAN-0GP performs slightly better than WGAN-1GP. Our hypothesis is that 0GP creates wider maxima than 1GP as it make the score on the line from $\bm y$ to $\bm x$ to change more slowly. 

\textbf{Imbalanced weights for real and fake samples.} To prevent the discriminator from forgetting distant real datapoints, we propose to increase the weight of the loss for real datapoints:
\begin{eqnarray}
\mathcal{L}_D = \gamma \mathcal{L}_{real} + \mathcal{L}_{fake}
\end{eqnarray}
where $\gamma > 1$ is an empirically chosen hyper parameter, $\mathcal{L}_{real},\ \mathcal{L}_{fake}$ are the losses for real and fake samples, respectively. 
When $\gamma > 1$, the discriminator is penalized more if it assigns a low score to a real datapoint. The situation where real datapoints are local minima like in Fig. \ref{fig:dcgannsLand5k} or have low scores like in the blue boxes in Fig. \ref{fig:8Gauss3000} - \ref{fig:8Gauss3500} will less likely to happen.
Fig. \ref{fig:dcganimbaLand20k} shows that the new loss successfully helps the discriminator to make more real datapoints local maxima and thus improve fake samples' quality.
Table \ref{tab:dcganResult} shows the effectiveness of imbalanced loss on CIFAR-10 dataset: it significantly improves Inception Score \cite{improvedgan} and reduces the score's variance. The imbalanced loss is orthogonal to gradient penalties and can be used to improve gradient penalties (the last two rows in Table \ref{tab:dcganResult}).

\section{Conclusion}
Catastrophic forgetting is a important problem in GANs. 
It is directly related to mode collapse and non-convergence. 
Addressing catastrophic forgetting leads to better convergence and less mode collapse. 
Methods such as imbalanced loss, zero centered gradient penalties, optimizers with momentum, and continual learning are effective at preventing catastrophic forgetting in GANs.
0GP helps GANs to converge to good local equilibria where real datapoints are wide local maxima of the discriminator.
The gradient penalty is a promising method for improving generalizability of GANs.
\bibliography{aistats2020}

\addtocounter{figure}{-1}
\addtocounter{table}{-1}
\refstepcounter{table}\label{LASTTABLE}
\refstepcounter{figure}\label{LASTFIGURE}

\pagebreak
\appendix
\section{Experiments on synthetic datasets}
\label{appx:ctSynthetic}

\begin{figure*}[!ht]
\centering
\subfloat[Iter. 0]{\includegraphics[width=0.19\textwidth]{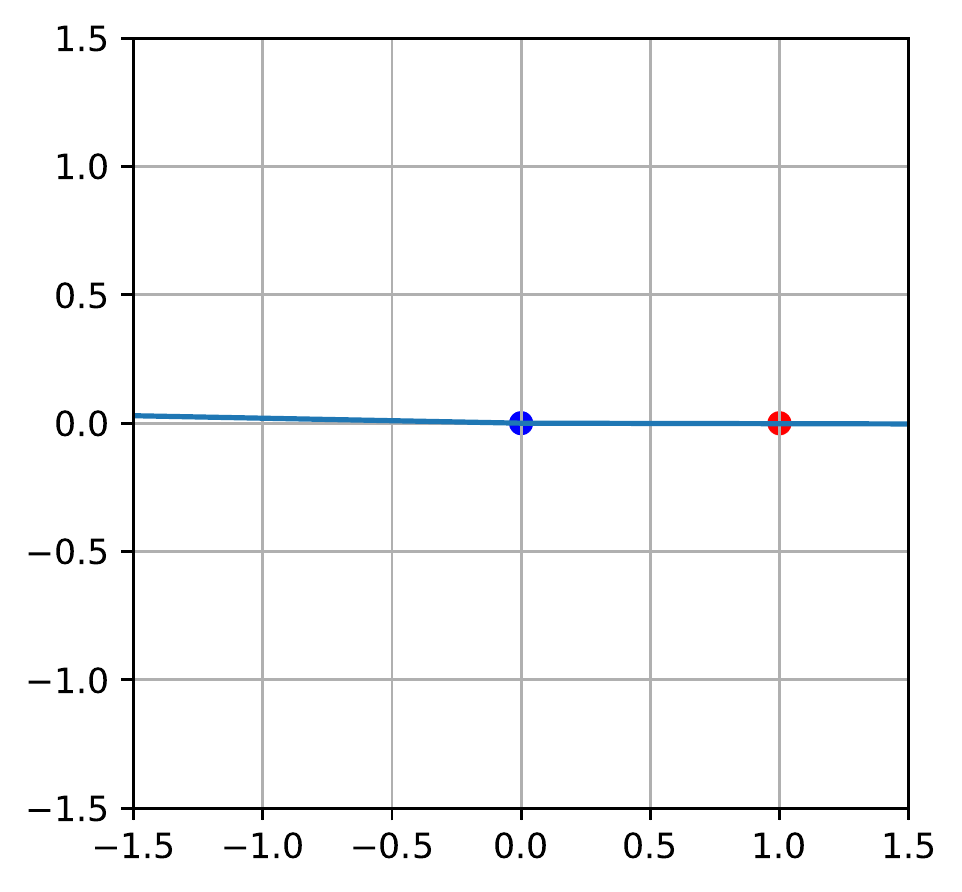}\label{figappx:dirac1SampleHigh0}}
\subfloat[Iter. 10]{\includegraphics[width=0.19\textwidth]{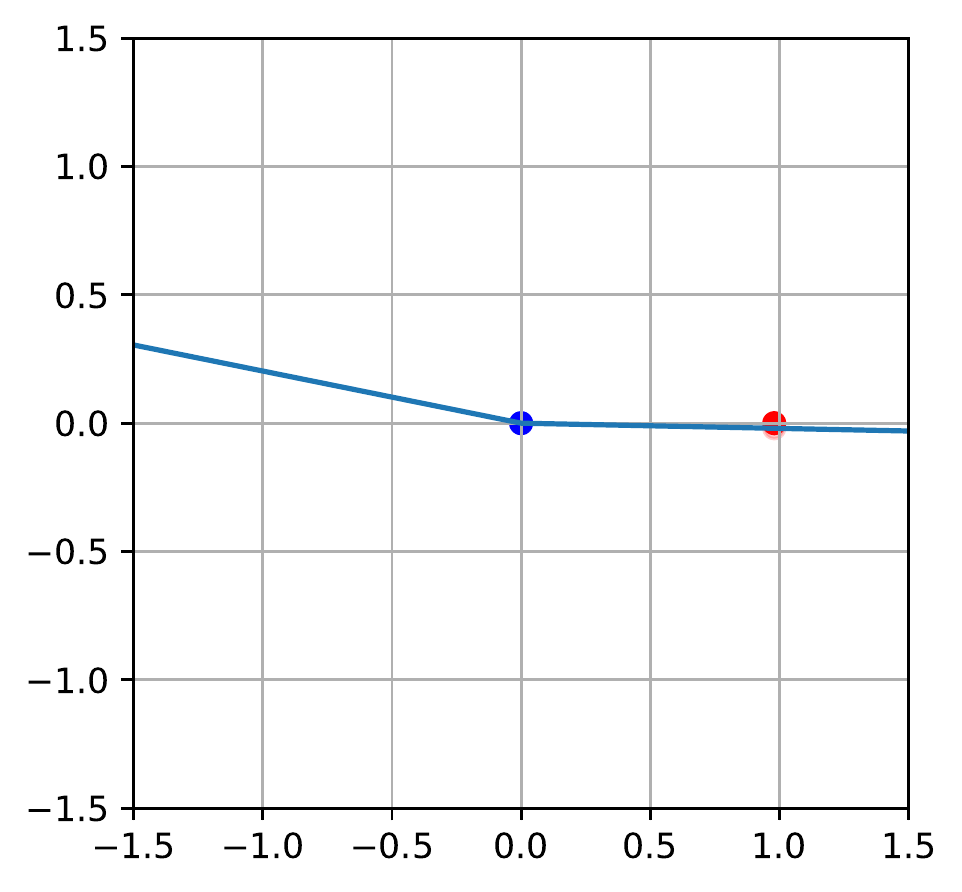}\label{figappx:dirac1SampleHigh10}}
\subfloat[Iter. 70]{\includegraphics[width=0.19\textwidth]{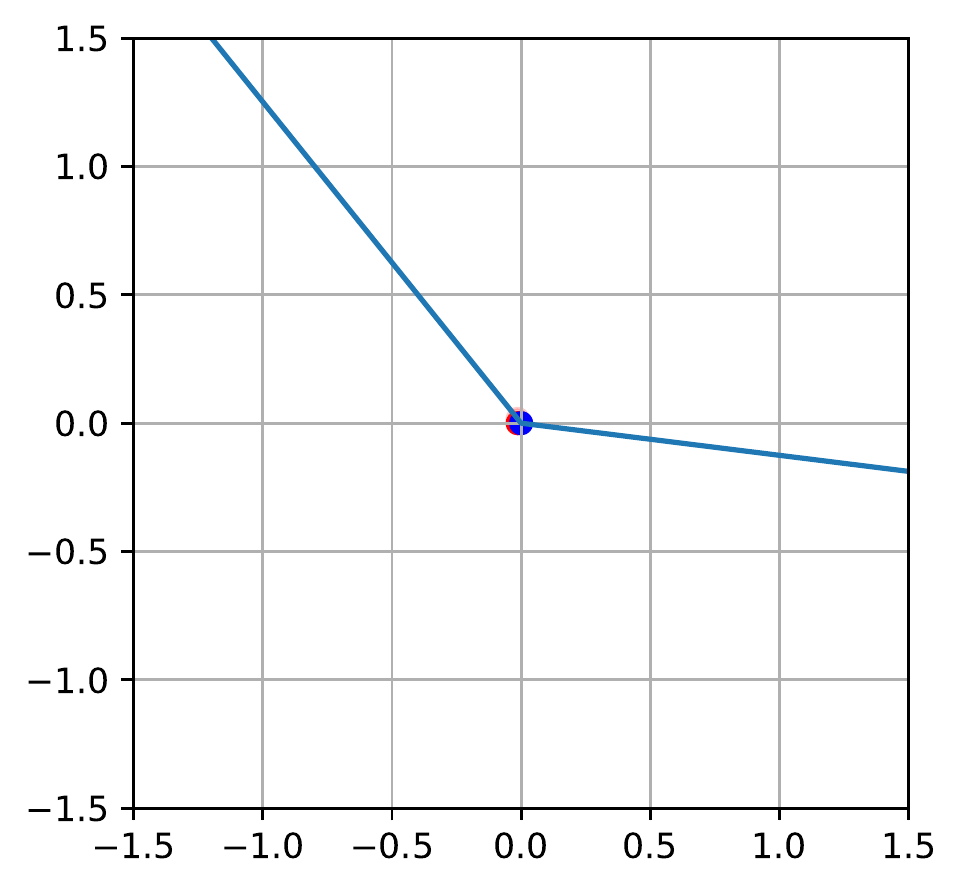}\label{figappx:dirac1SampleHigh70}}
\subfloat[Iter. 78]{\includegraphics[width=0.19\textwidth]{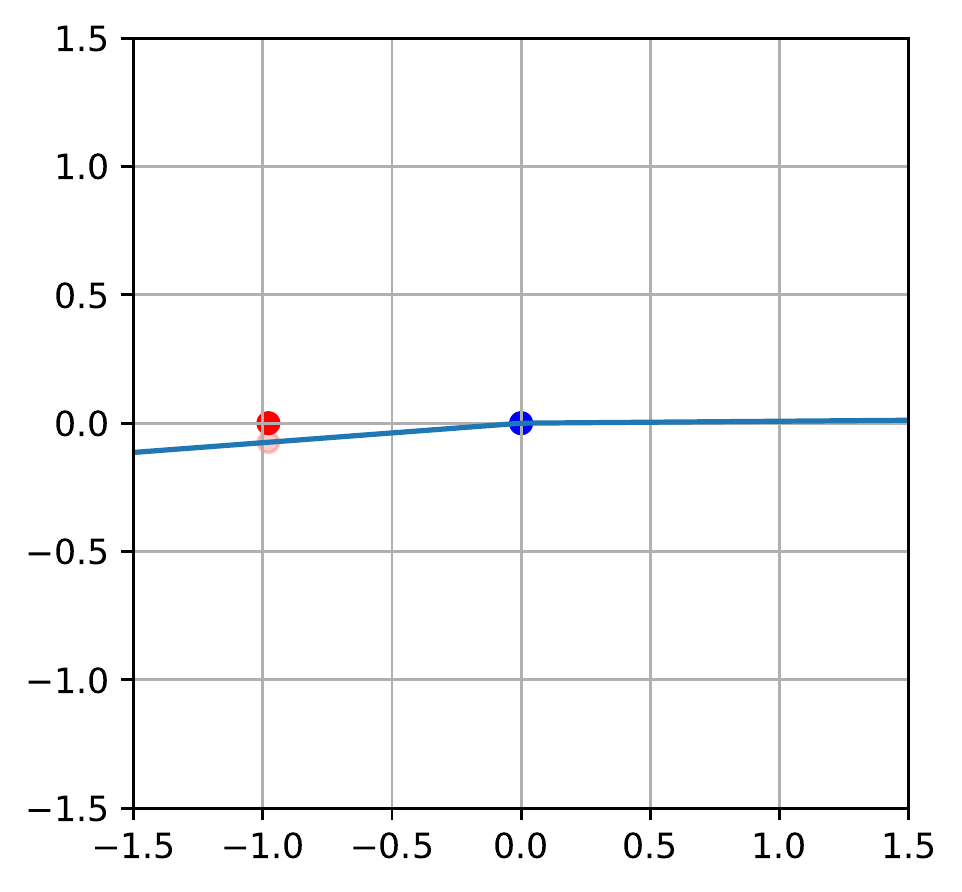}\label{figappx:dirac1SampleHigh78}}

\subfloat[Iter. 87]{\includegraphics[width=0.19\textwidth]{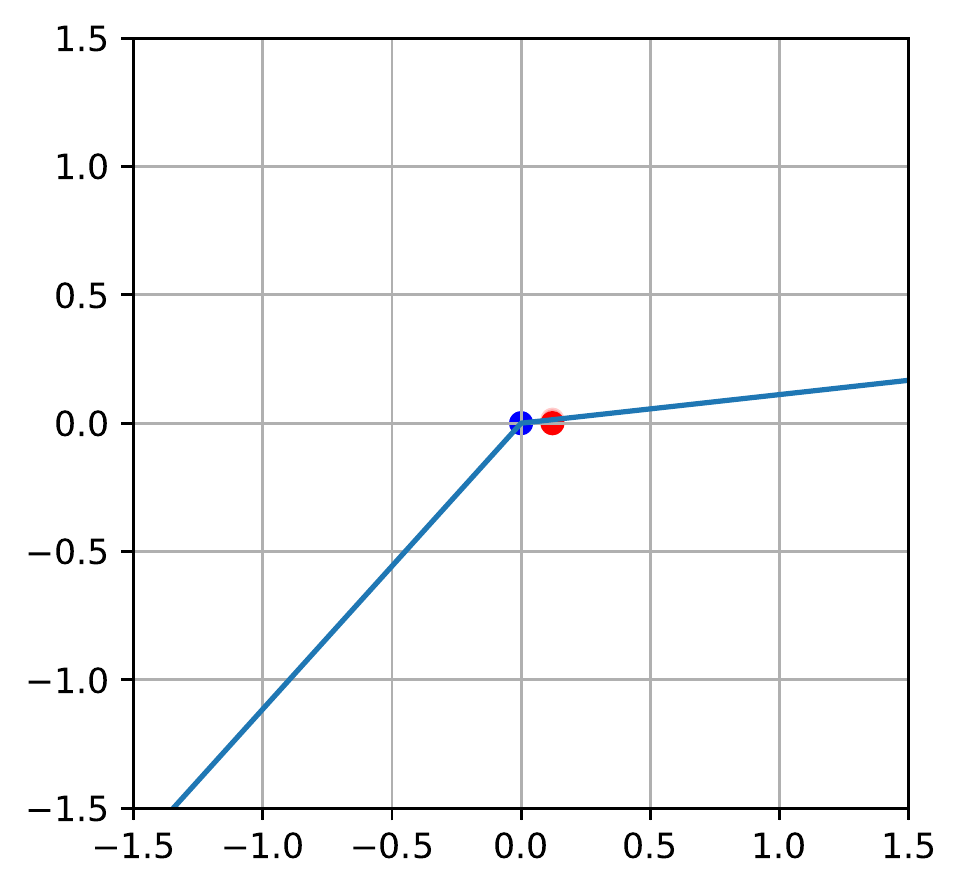}\label{figappx:dirac1SampleHigh87}}
\subfloat[Iter. 150]{\includegraphics[width=0.19\textwidth]{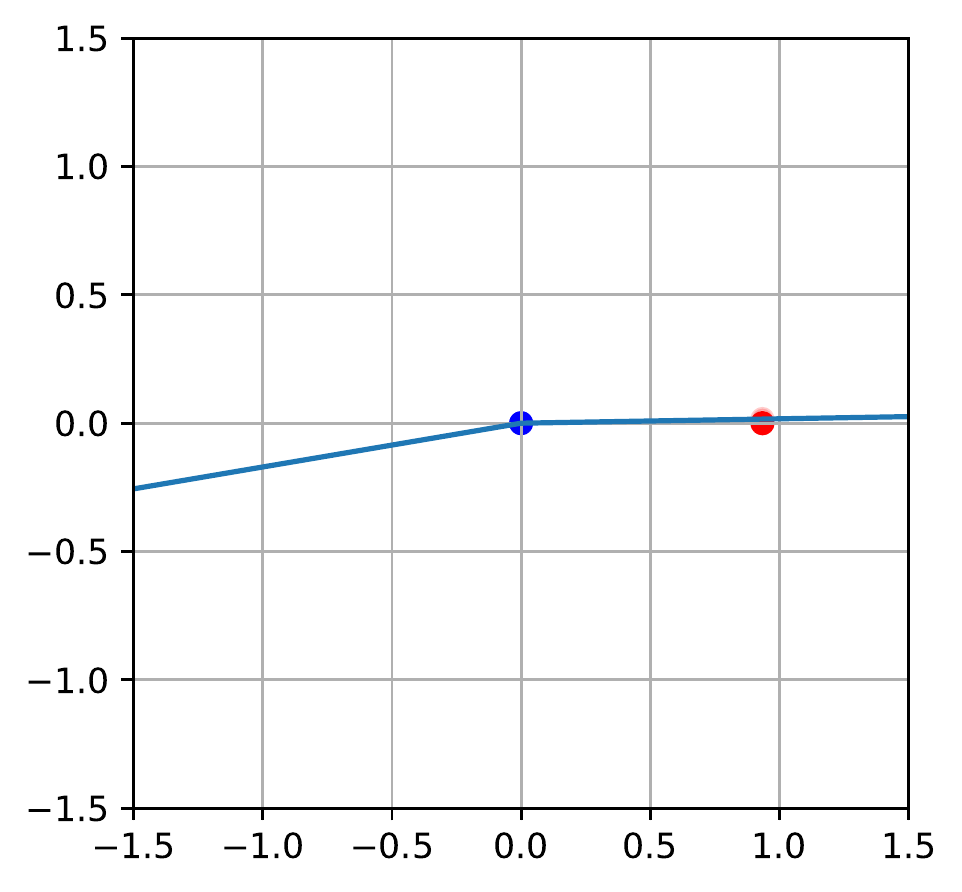}\label{figappx:dirac1SampleHigh150}}
\subfloat[Iter. 175]{\includegraphics[width=0.19\textwidth]{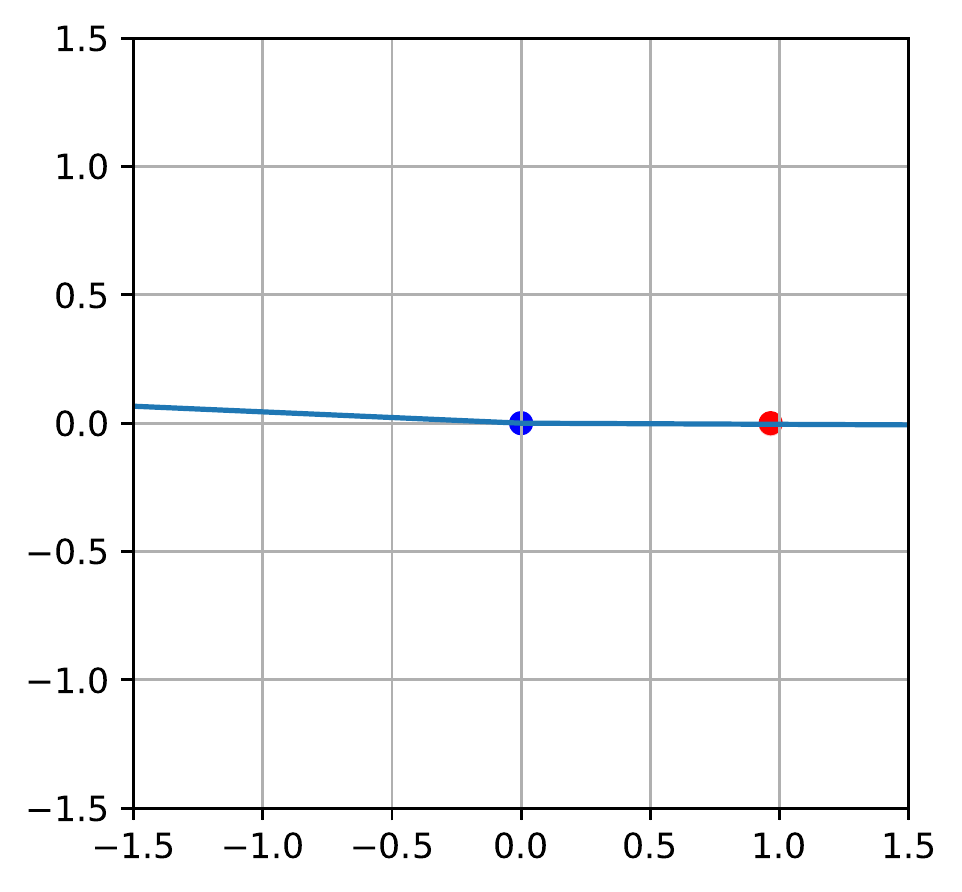}\label{figappx:dirac1SampleHigh175}}
\subfloat[Iter. 250]{\includegraphics[width=0.19\textwidth]{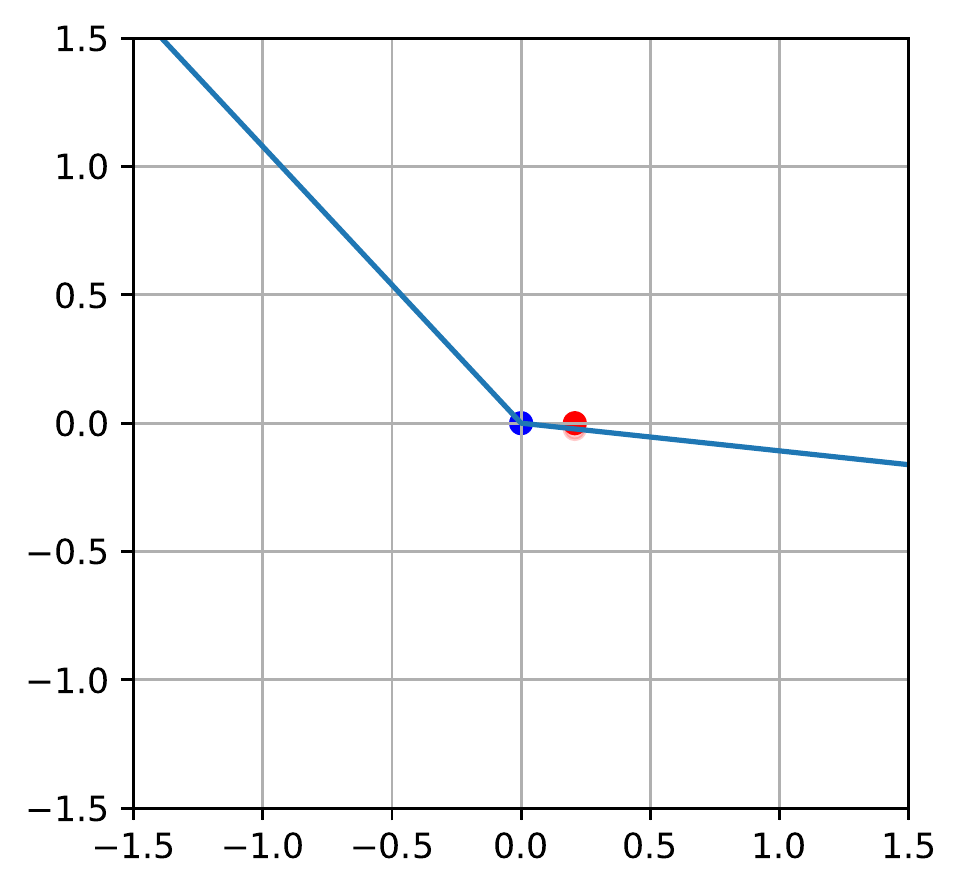}\label{figappx:dirac1SampleHigh250}}

\caption{Catastrophic forgetting in high capacity Dirac GAN. The discriminator is a 1 hidden layer neural network with Leaky ReLU activation function and 2 hidden neurons. Although the discriminator has enough capacity to become a non-monotonic function, catastrophic forgetting makes it a monotonic function. High capacity Dirac GAN still oscillates around the equilibrium.}
\label{figappx:dirac1SampleHigh}
\end{figure*}

\begin{figure*}[ht!]
\centering
\subfloat[Iteration 0]{\includegraphics[width=0.33\textwidth]{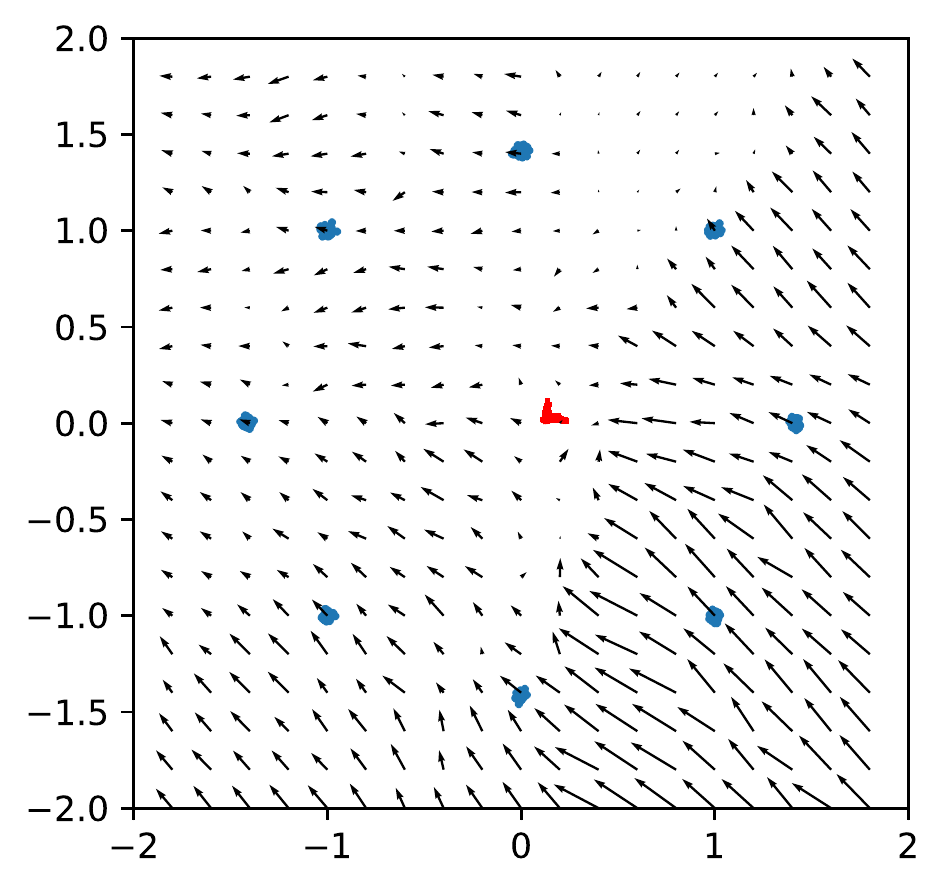}{\label{figappx:8Gauss0}}}
\subfloat[Iteration 3000]{\includegraphics[width=0.33\textwidth]{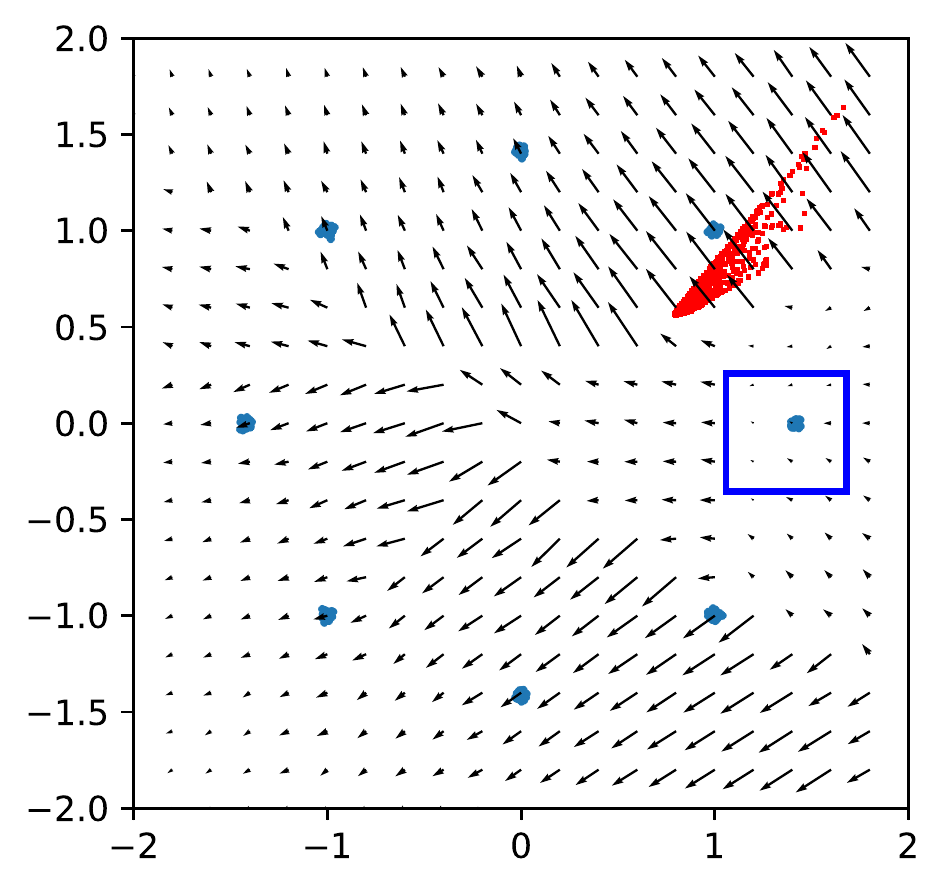}{\label{figappx:8Gauss3000}}}
\subfloat[Iteration 3500]{\includegraphics[width=0.33\textwidth]{figs/gan_8Gaussians_gradfield_center_1.00_alpha_None_lambda_0.00_lrg_0.00300_lrd_0.00300_nhidden_64_scale_2.00_optim_SGD_gnlayers_2_dnlayers_1_gradweight_0.1000_ncritic_1/fig_03500_new_ann.pdf}\label{figappx:8Gauss3500}}

\subfloat[Iteration 4000]{\includegraphics[width=0.33\textwidth]{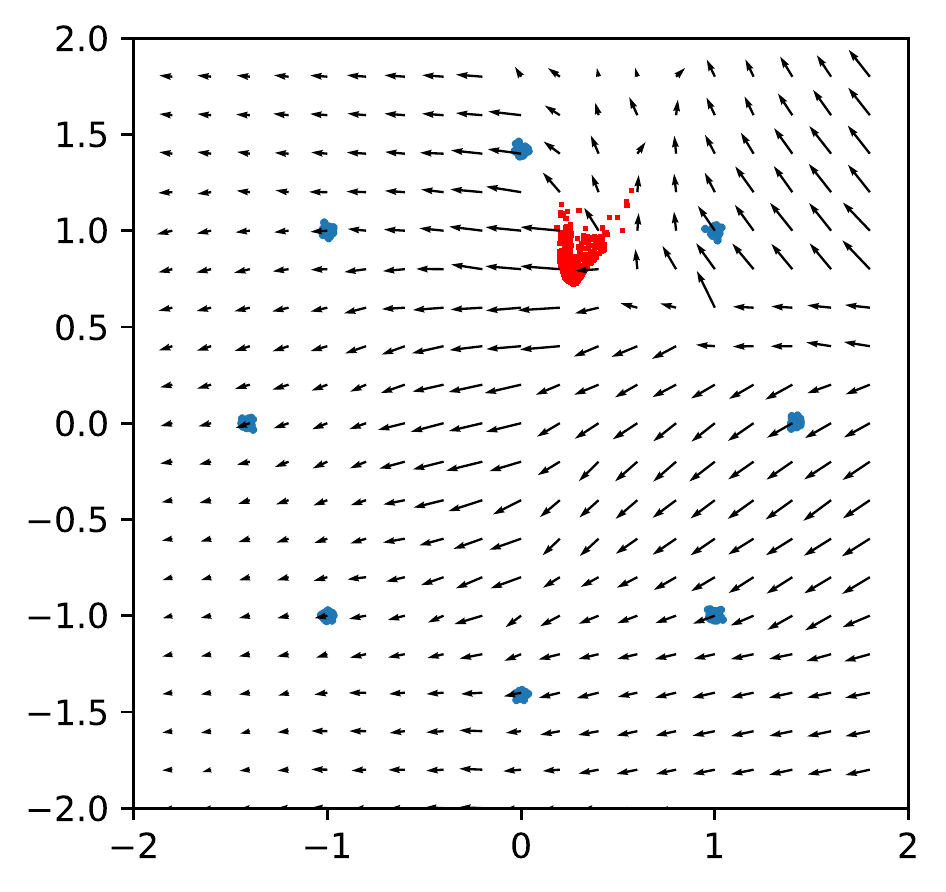}{\label{figappx:8Gauss4000}}}
\subfloat[Iteration 4500]{\includegraphics[width=0.33\textwidth]{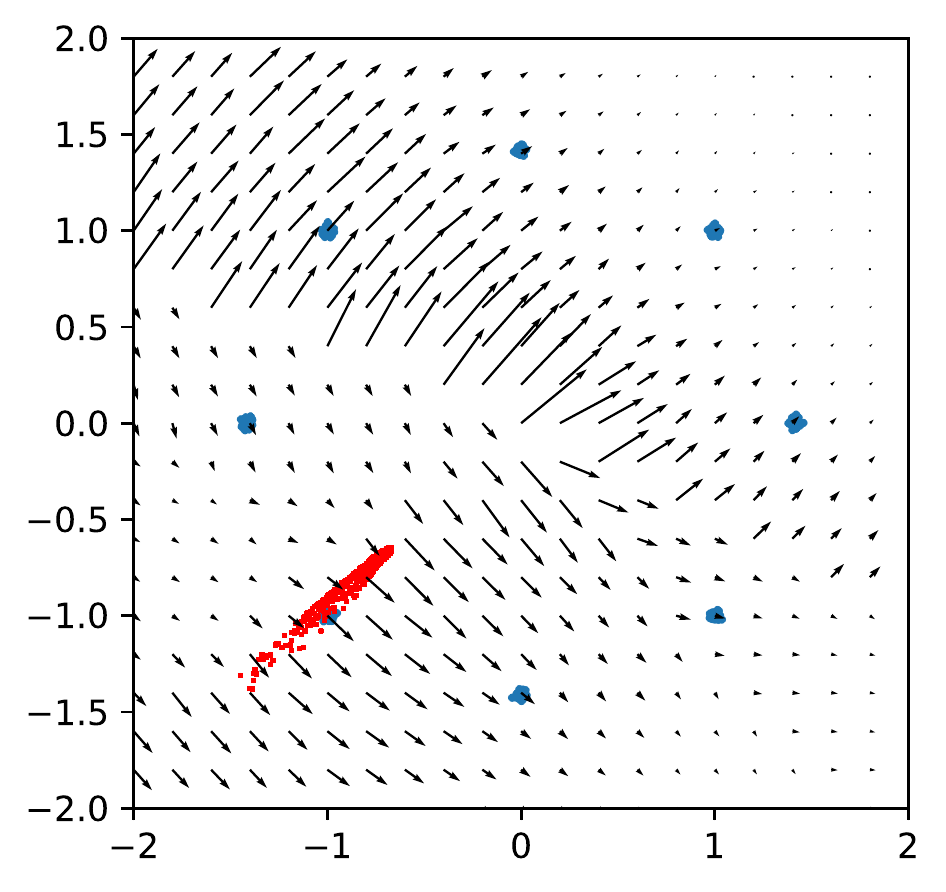}{\label{figappx:8Gauss4500}}}
\subfloat[Iteration 5000]{\includegraphics[width=0.33\textwidth]{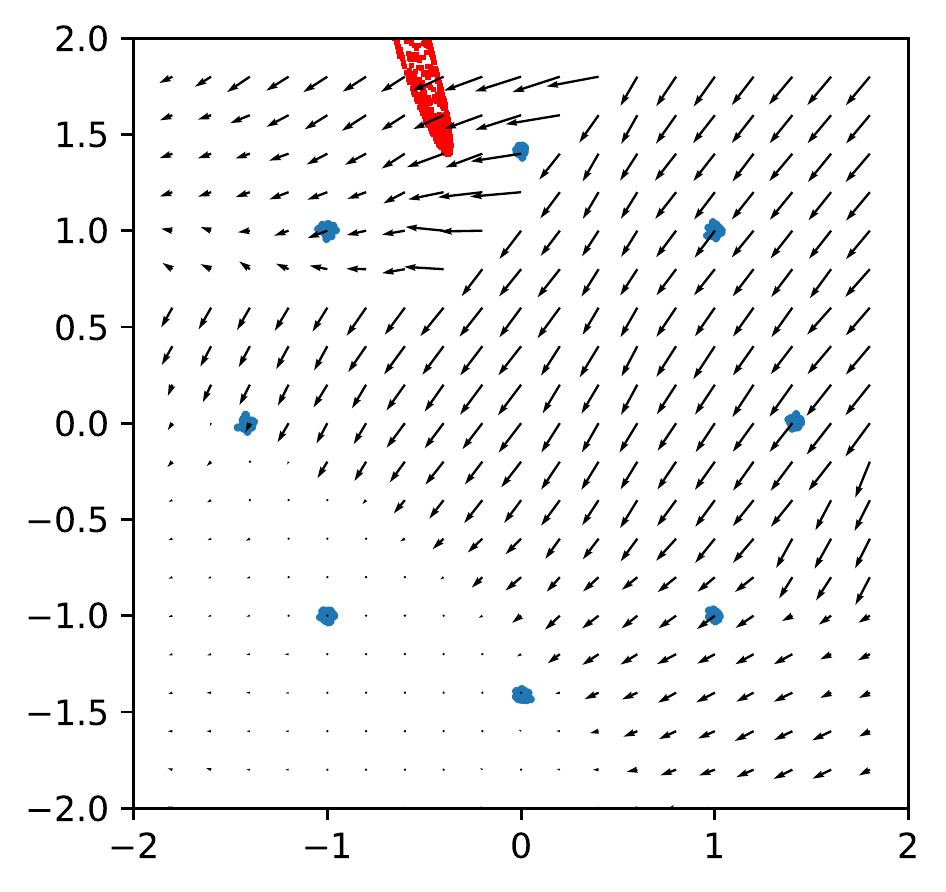}{\label{figappx:8Gauss5000}}}

\subfloat[Iteration 5500]{\includegraphics[width=0.33\textwidth]{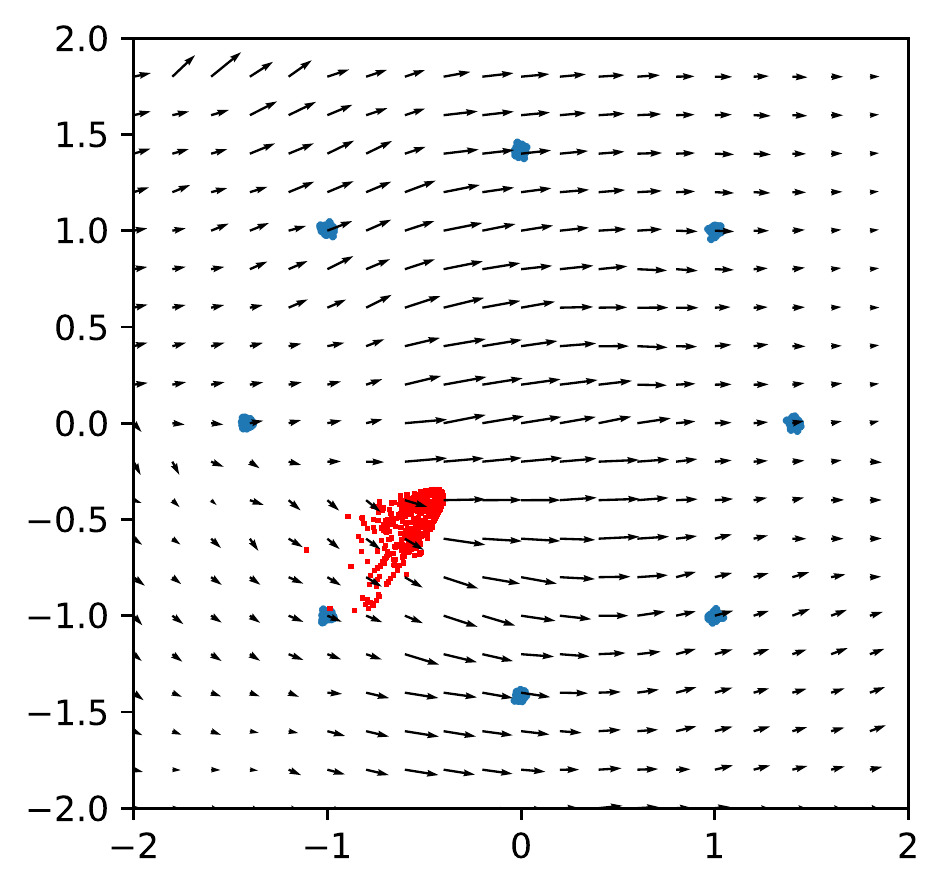}{\label{figappx:8Gauss5500}}}
\subfloat[Iteration 10000]{\includegraphics[width=0.33\textwidth]{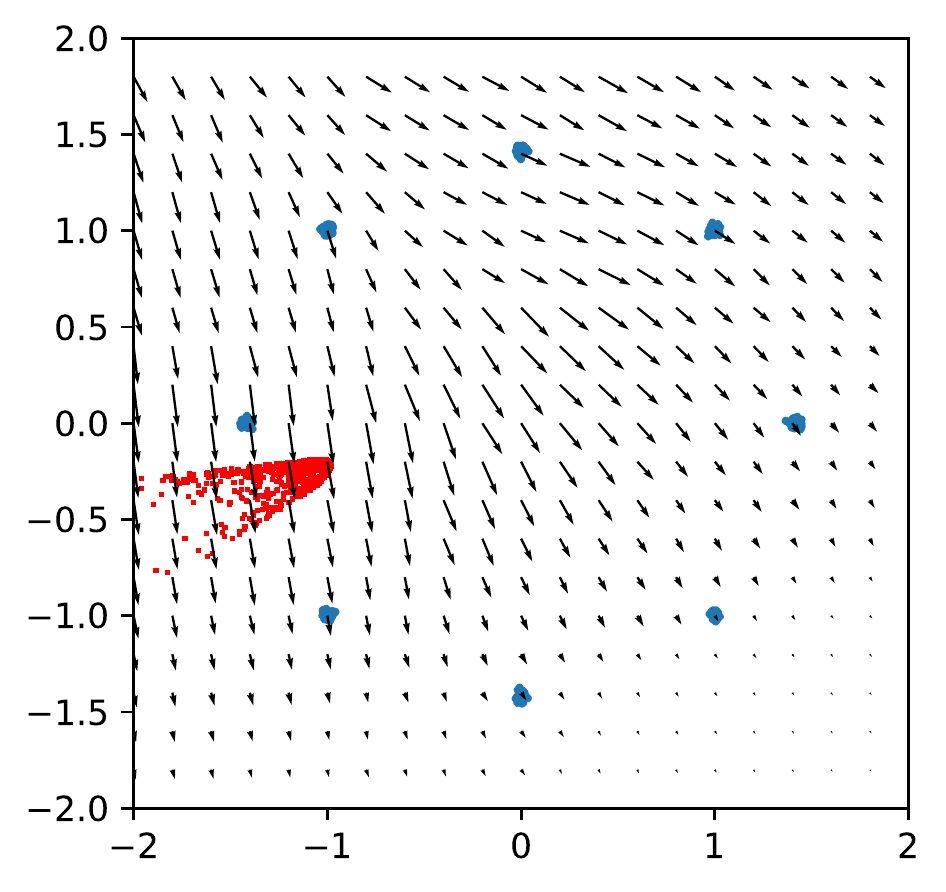}\label{figappx:8Gauss10000}}
\subfloat[Iteration 20000]{\includegraphics[width=0.33\textwidth]
{figs/gan_8Gaussians_gradfield_center_1.00_alpha_None_lambda_0.00_lrg_0.00300_lrd_0.00300_nhidden_64_scale_2.00_optim_SGD_gnlayers_2_dnlayers_1_gradweight_0.1000_ncritic_1/fig_20000_ann.pdf}{\label{figappx:8Gauss20000}}} 
\caption{Catastrophic forgetting on the 8 Gaussian dataset.}
\label{figappx:8Gauss}
\end{figure*}

\begin{figure*}[ht!]
\subfloat[Iteration 0]{\includegraphics[width=0.33\textwidth]{/continual-gan-8Gaussians-gradfield-center-0.00-alpha-1.0-lambda-10.00-lrg-0.00300-lrd-0.00300-nhidden-512-scale-2.00-optim-SGD-gnlayers-2-dnlayers-2-gradweight-0.0100-discount-0.9900-batch-1-ewcweight-0.00-ncritic-1/fig-00000.pdf}{\label{figappx:8GaussR10}}}
\subfloat[Iteration 500]{\includegraphics[width=0.33\textwidth]{/continual-gan-8Gaussians-gradfield-center-0.00-alpha-1.0-lambda-10.00-lrg-0.00300-lrd-0.00300-nhidden-512-scale-2.00-optim-SGD-gnlayers-2-dnlayers-2-gradweight-0.0100-discount-0.9900-batch-1-ewcweight-0.00-ncritic-1/fig-00500.pdf}{\label{figappx:8GaussR1500}}}
\subfloat[Iteration 1000]{\includegraphics[width=0.33\textwidth]{/continual-gan-8Gaussians-gradfield-center-0.00-alpha-1.0-lambda-10.00-lrg-0.00300-lrd-0.00300-nhidden-512-scale-2.00-optim-SGD-gnlayers-2-dnlayers-2-gradweight-0.0100-discount-0.9900-batch-1-ewcweight-0.00-ncritic-1/fig-01000.pdf}{\label{figappx:8GaussR11000}}} 

\subfloat[Iteration 2500]{\includegraphics[width=0.33\textwidth]{/continual-gan-8Gaussians-gradfield-center-0.00-alpha-1.0-lambda-10.00-lrg-0.00300-lrd-0.00300-nhidden-512-scale-2.00-optim-SGD-gnlayers-2-dnlayers-2-gradweight-0.0100-discount-0.9900-batch-1-ewcweight-0.00-ncritic-1/fig-02500.pdf}{\label{figappx:8GaussR12500}}} 
\subfloat[Iteration 5000]{\includegraphics[width=0.33\textwidth]{/continual-gan-8Gaussians-gradfield-center-0.00-alpha-1.0-lambda-10.00-lrg-0.00300-lrd-0.00300-nhidden-512-scale-2.00-optim-SGD-gnlayers-2-dnlayers-2-gradweight-0.0100-discount-0.9900-batch-1-ewcweight-0.00-ncritic-1/fig-05000.pdf}{\label{figappx:8GaussR15000}}}
\subfloat[Iteration 10000]{\includegraphics[width=0.33\textwidth]{/continual-gan-8Gaussians-gradfield-center-0.00-alpha-1.0-lambda-10.00-lrg-0.00300-lrd-0.00300-nhidden-512-scale-2.00-optim-SGD-gnlayers-2-dnlayers-2-gradweight-0.0100-discount-0.9900-batch-1-ewcweight-0.00-ncritic-1/fig-10000.pdf}{\label{figappx:8GaussR110000}}} 

\caption{GAN-R1 with $\lambda=10$ on 8 Gaussian dataset.}
\end{figure*}

\begin{figure*}[!ht]
\centering
\subfloat[Iter. 0]{\includegraphics[width=0.30\textwidth]{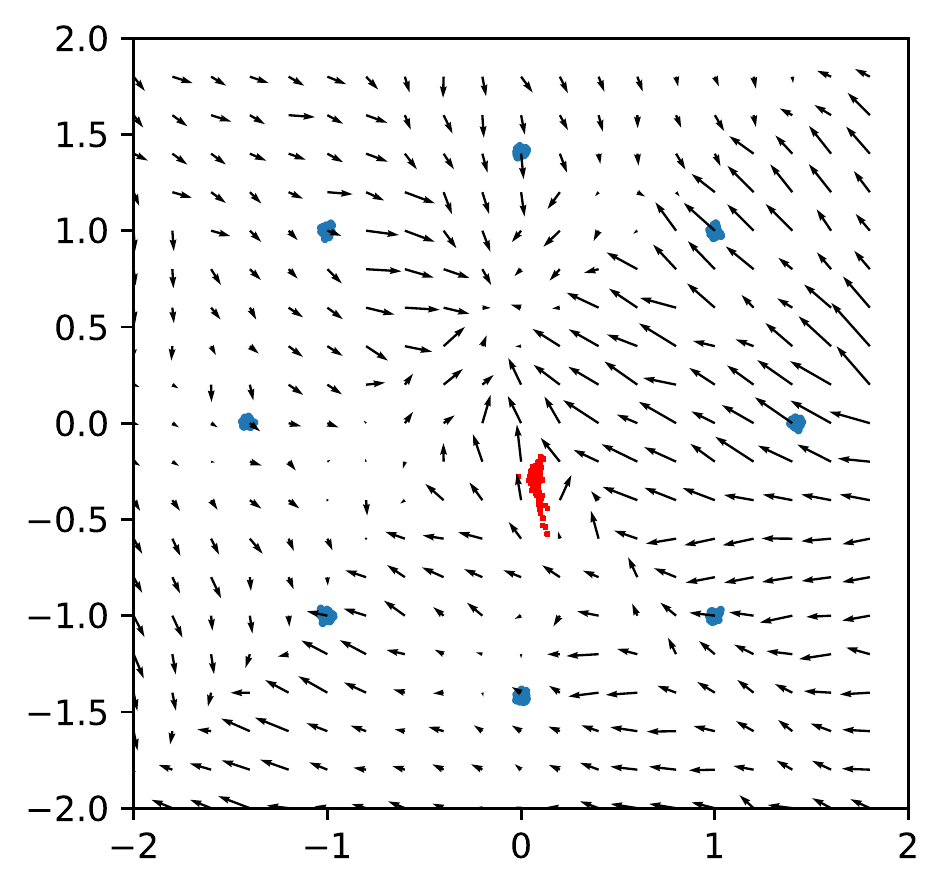}}
\subfloat[Iter. 400]{\includegraphics[width=0.30\textwidth]{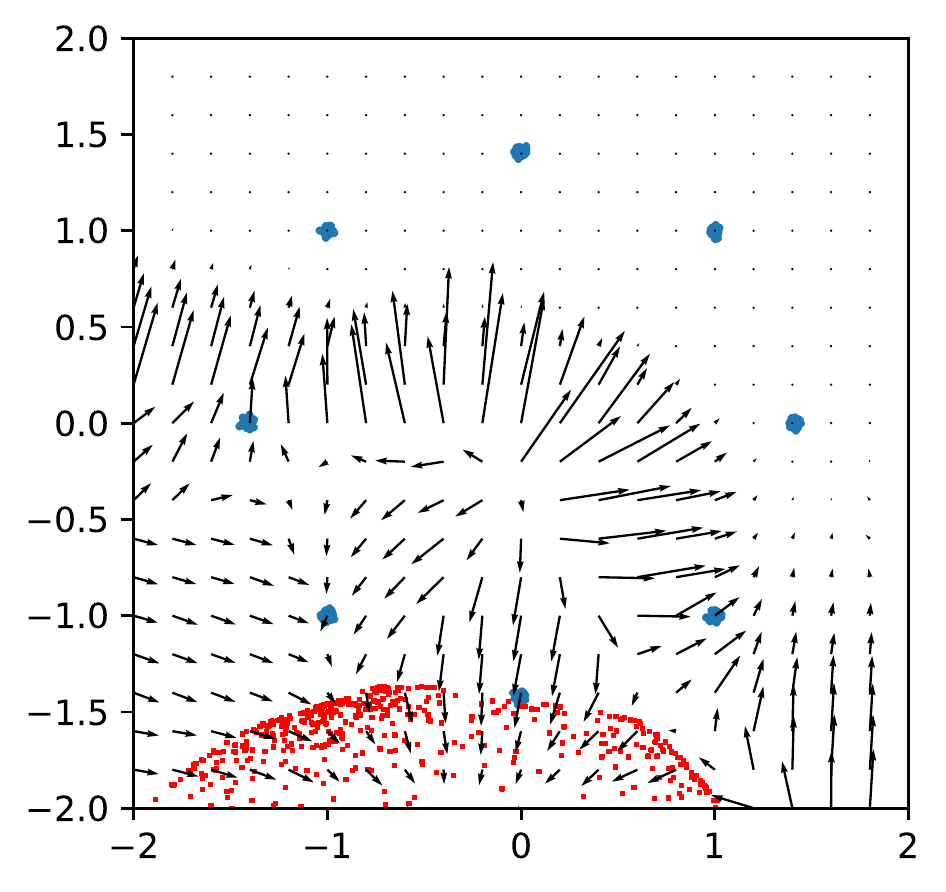}}
\subfloat[Iter. 600]{\includegraphics[width=0.30\textwidth]{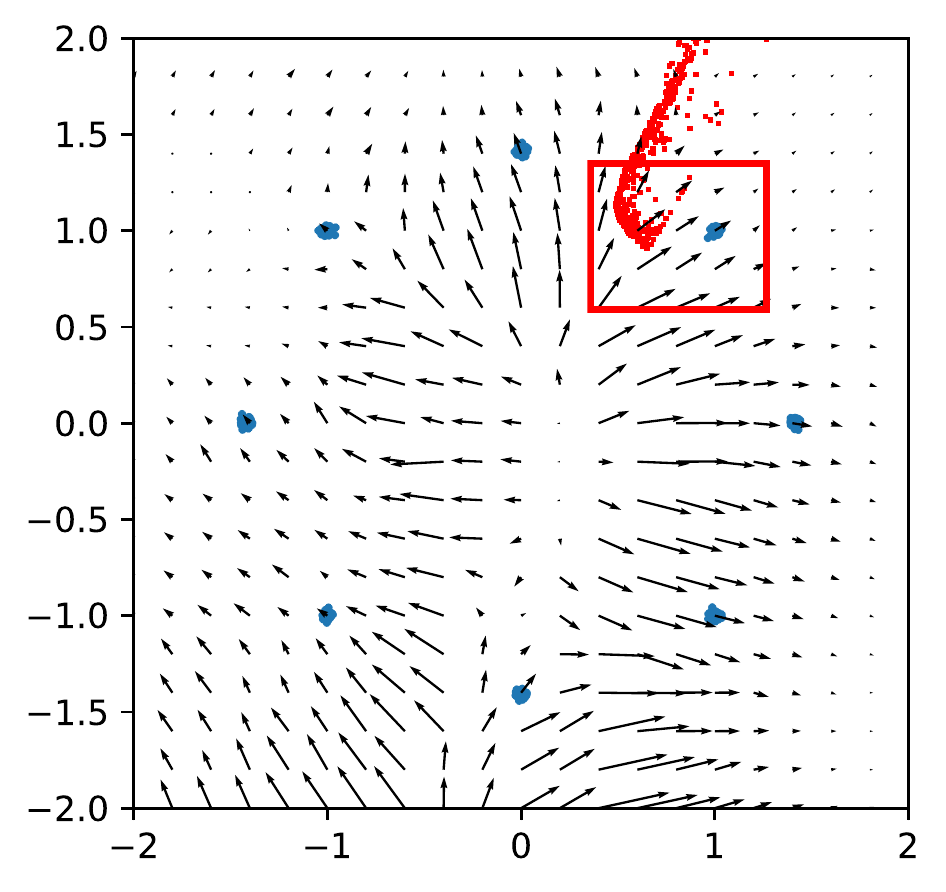}}

\subfloat[Iter. 1500]{\includegraphics[width=0.30\textwidth]{figs/gan_8Gaussians_gradfield_center_0.00_alpha_None_lambda_0.00_lrg_0.00300_lrd_0.00300_nhidden_64_scale_2.00_optim_Adam_gnlayers_1_dnlayers_1_gradweight_0.0100_dweight_0.1000_gweight_0.0100_ncritic_1/fig_01500.pdf}}

\caption{Evolution sequence of GAN-NS with Adam on the 8 Gaussian dataset. The gradient pattern is much more stable than that of GAN-NS with SGD. Note that the gradients in the red box still point toward the real datapoint despite the fact that the fake datapoints are close.}
\label{figappx:8GaussAdam}
\end{figure*}

\section{Landscapes of different GANs}
\label{appx:landscapes}
This section includes figures for different GANs. The general configuration for all experiments are shown in Table \ref{tab:config}. Hyper parameters specific to each experiment is shown in the caption of the corresponding figure. 

In each figure, the 'Real' subfloat shows real samples from MNIST dataset. Each cell in a 'Landscape' subfloat shows a slice of the landscape - the value of $f(k),\ k \in [-100, 100]$, for the corresponding real sample at the specified iteration. Each 'Generated' subfloat shows the generated samples at that iteration.

\begin{table*}
\centering
\begin{tabular}{|c|c|}
\hline
Architecture & 3 hidden layer MLP \\
\hline
Hidden layer activation & ReLU \\
\hline 
Output layer activation & Sigmoid for GAN-NS, Linear for WGAN \\
\hline
Number of hidden neurons & 512 \\
\hline
Latent dimensionality & 50 \\
\hline
Optimizer & ADAM with $\beta-1 = 0.5, \beta-2 = 0.99$ \\
\hline
Learning rate & $3\times 10^{-4}$ \\
\hline
Batch size & 64 \\
\hline
\end{tabular}
\caption{Experiments configuration}
\label{tab:config}
\end{table*}

\begin{figure*}
\subfloat[$k = -100$]{
\includegraphics[width=0.33\textwidth]{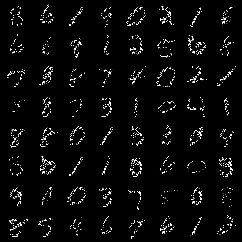}}
\subfloat[$k = -50$]{
\includegraphics[width=0.33\textwidth]{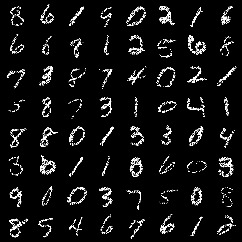}}
\subfloat[$k = -20$]{
\includegraphics[width=0.33\textwidth]{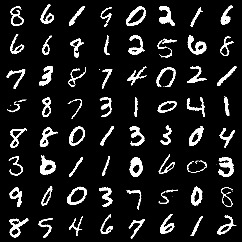}}

\subfloat[$k = -10$]{
\includegraphics[width=0.33\textwidth]{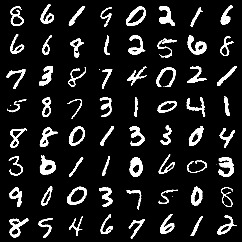}}
\subfloat[$k = 0$]{
\includegraphics[width=0.33\textwidth]{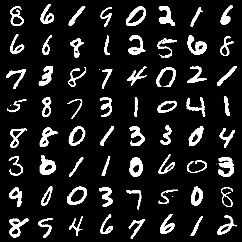}}
\subfloat[$k = 10$]{
\includegraphics[width=0.33\textwidth]{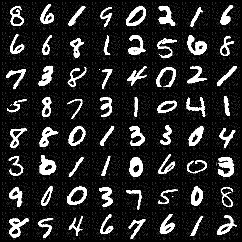}}

\subfloat[$k = 20$]{
\includegraphics[width=0.33\textwidth]{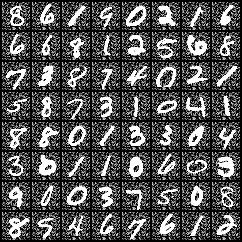}}
\subfloat[$k = 50$]{
\includegraphics[width=0.33\textwidth]{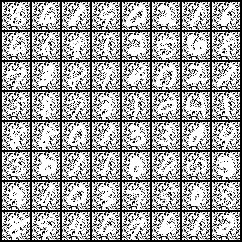}}
\subfloat[$k = 100$]{
\includegraphics[width=0.33\textwidth]{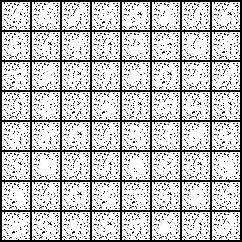}}

\caption{Real examples with different levels of noise.}
\label{figappx:realnoise}
\end{figure*}

\begin{figure*}
\centering
\subfloat[Real]{
\includegraphics[width=0.44\textwidth]{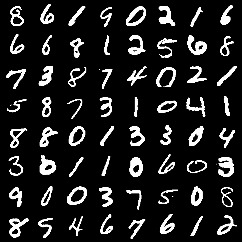}}
\subfloat[Landscape 5000]{
\includegraphics[width=0.44\textwidth]{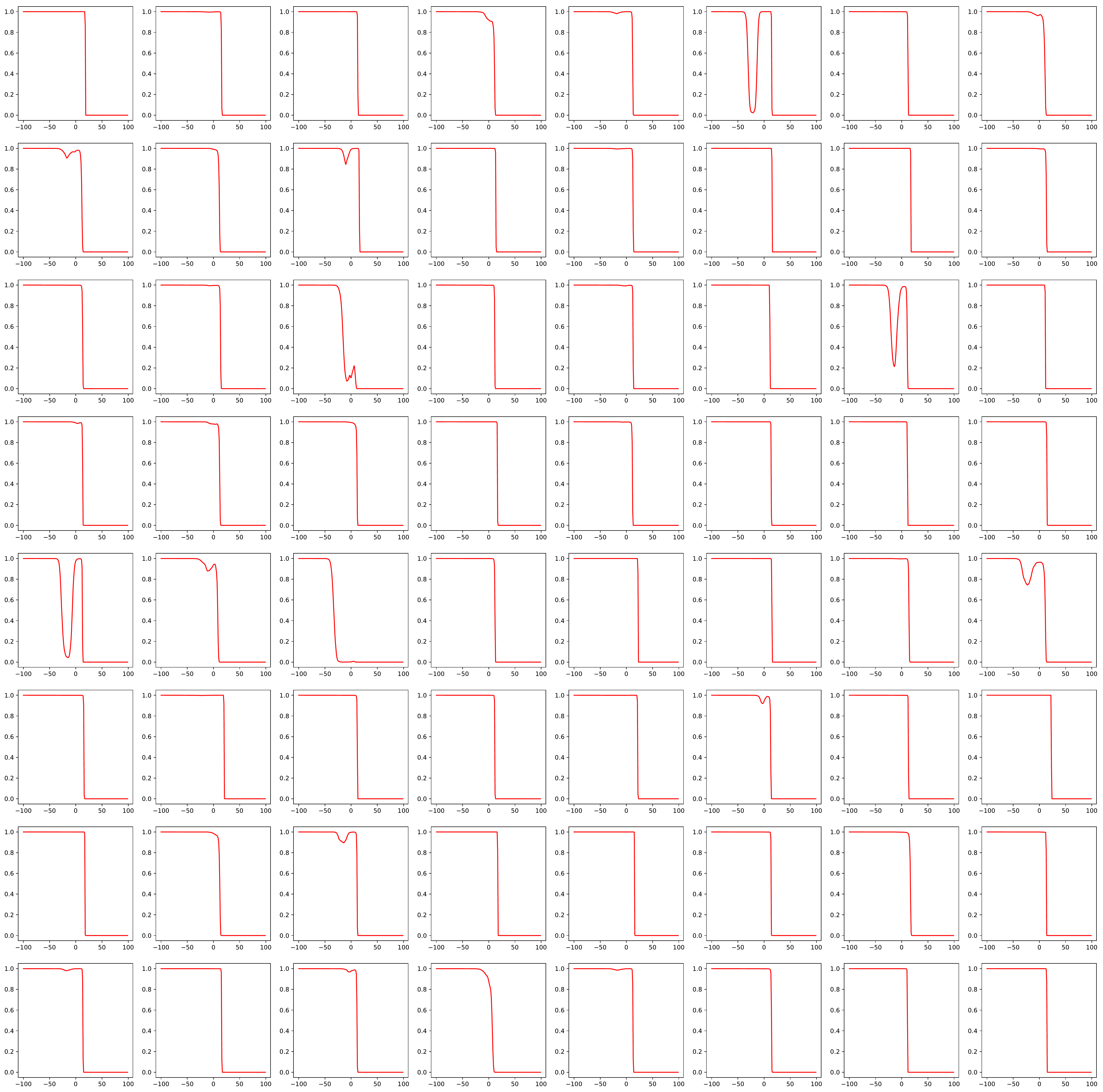}
} \\

\subfloat[Generated 50000]{
\includegraphics[width=0.44\textwidth]{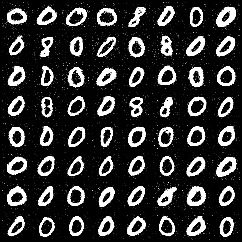}
}
\subfloat[Landscape 50000]{
\includegraphics[width=0.44\textwidth]{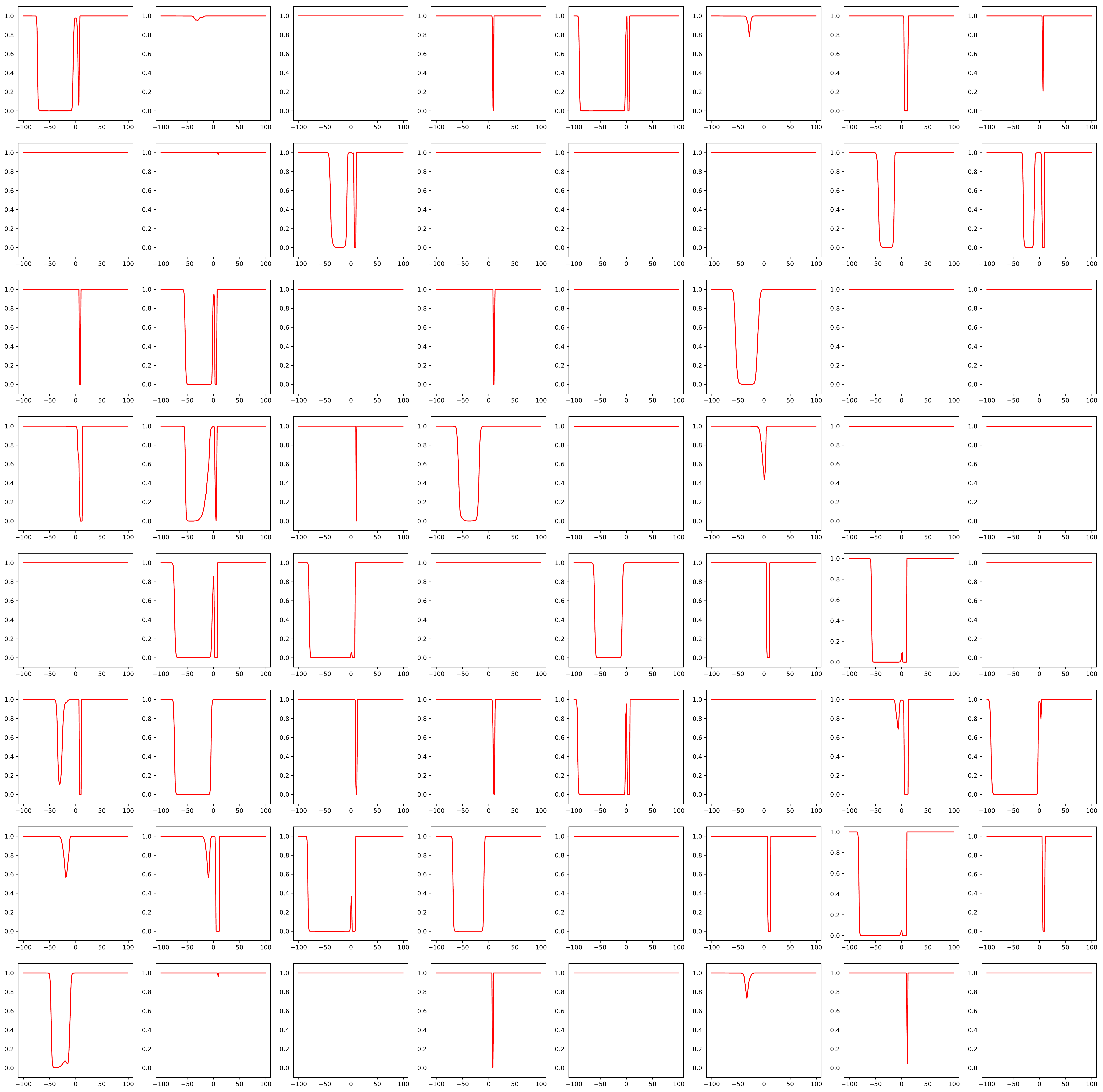}
} \\

\subfloat[Generated 100000]{
\includegraphics[width=0.44\textwidth]{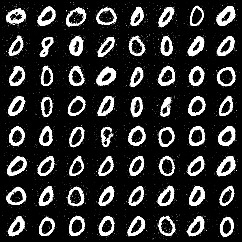}
}
\subfloat[Landscape 100000]{
\includegraphics[width=0.44\textwidth]{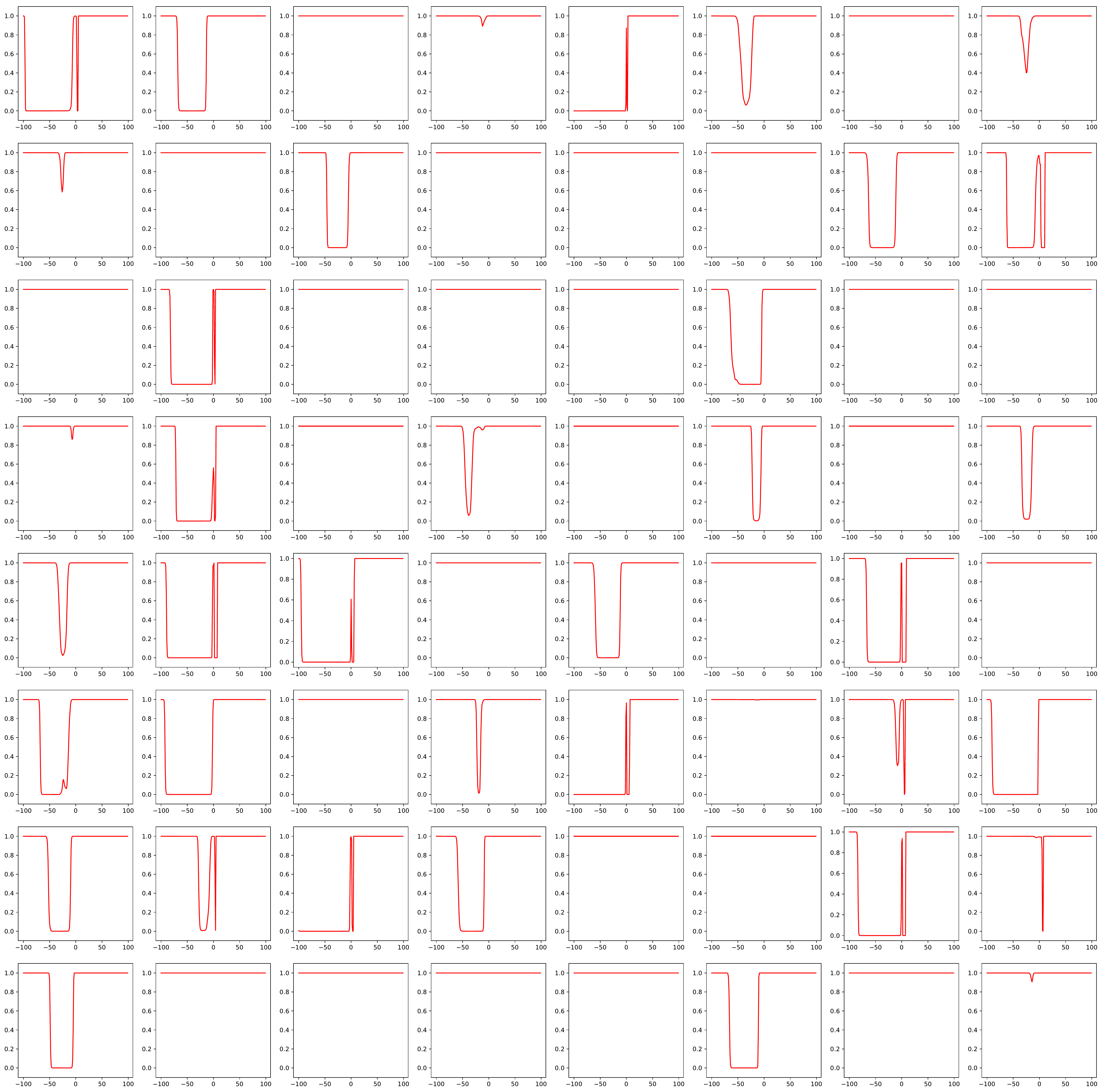}
} \\
\phantomcaption
\end{figure*}

\begin{figure*}
\centering
\ContinuedFloat
\subfloat[Generated 200000]{
\includegraphics[width=0.44\textwidth]{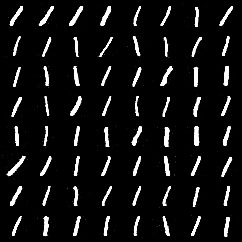}
}
\subfloat[Landscape 200000]{
\includegraphics[width=0.44\textwidth]{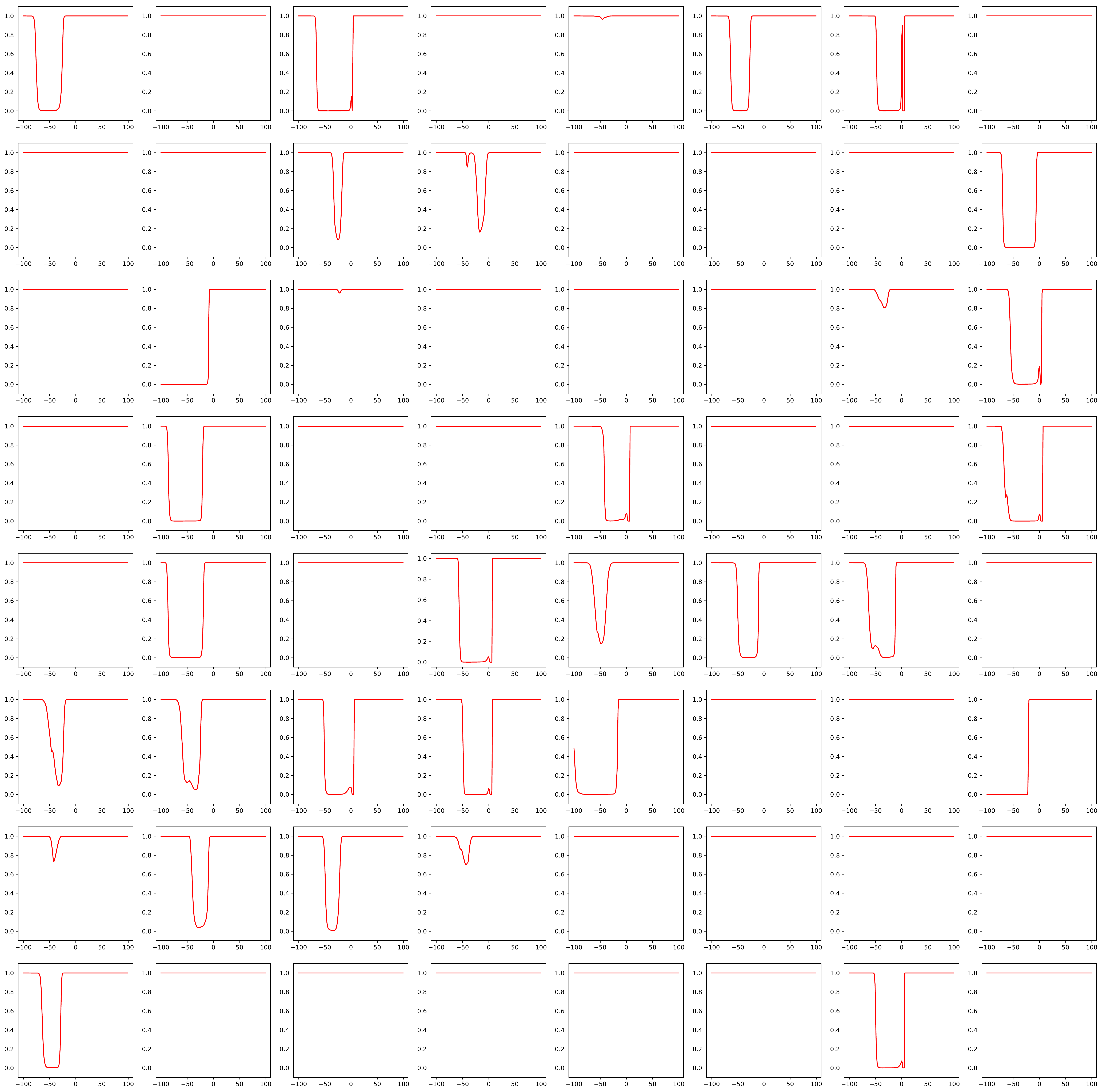}
}
\caption{GAN-NS}
\label{figappx:ganns}
\end{figure*}

\begin{figure*}
\centering
\subfloat[Real]{
\includegraphics[width=0.44\textwidth]{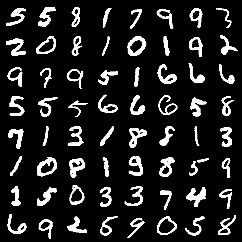}
}
\subfloat[Landscape 5000]{
\includegraphics[width=0.44\textwidth]{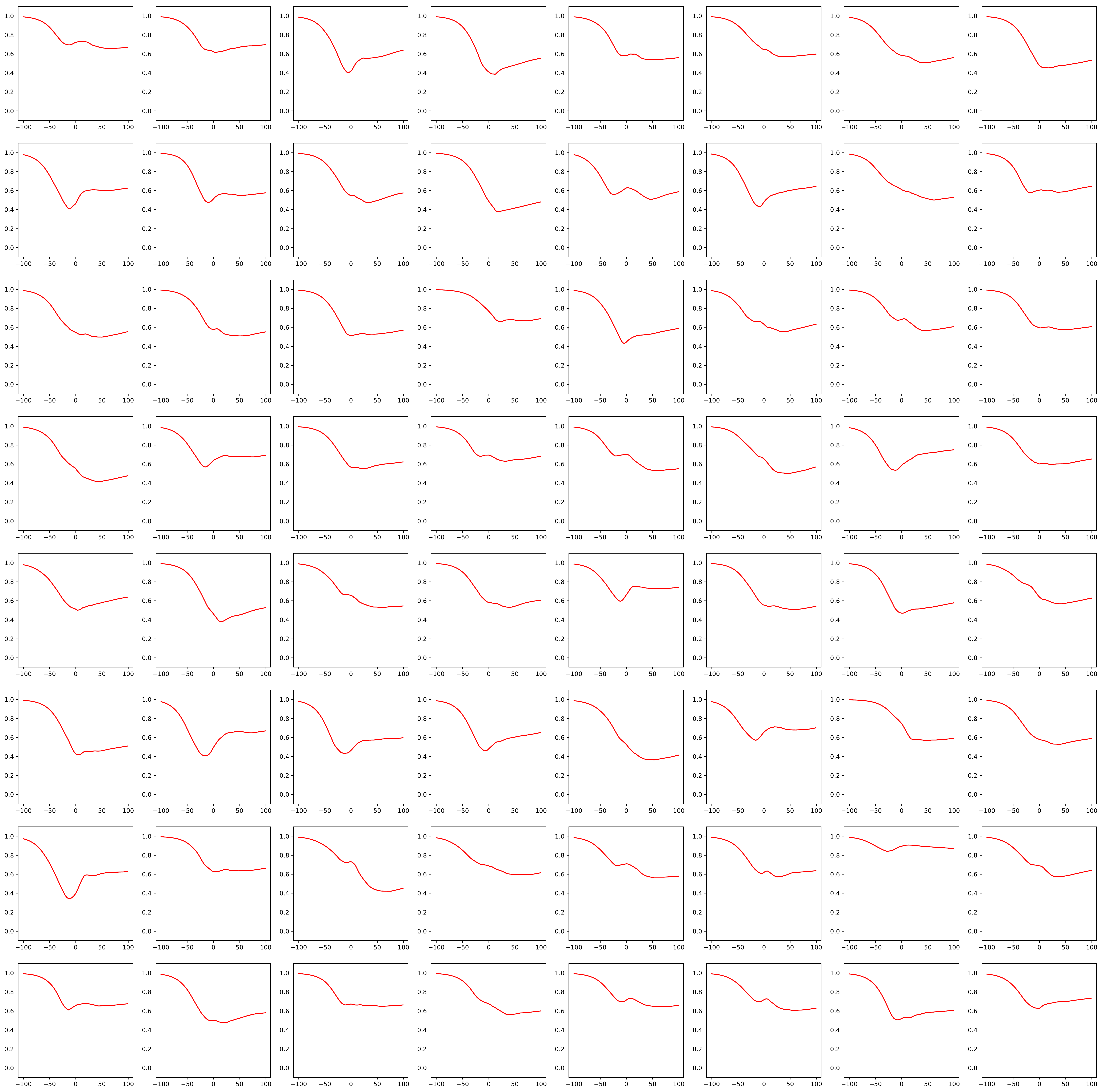}
}

\subfloat[Generated 50000]{
\includegraphics[width=0.44\textwidth]{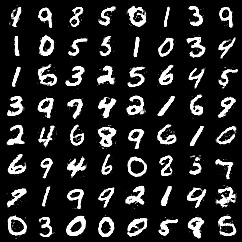}
}
\subfloat[Landscape 50000]{
\includegraphics[width=0.44\textwidth]{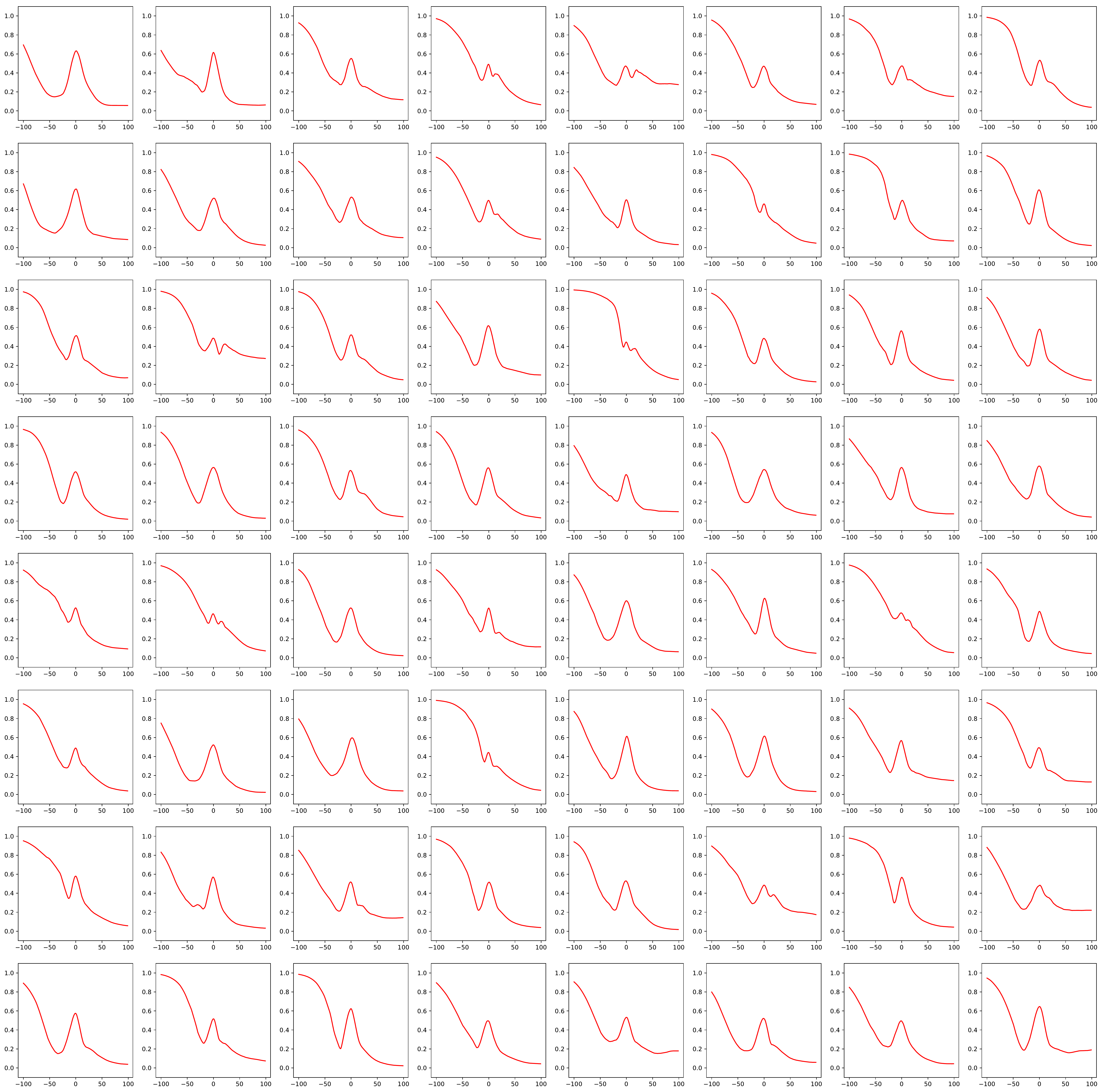}
}

\subfloat[Generated 100000]{
\includegraphics[width=0.44\textwidth]{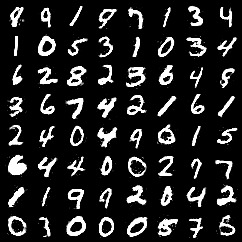}
}
\subfloat[Landscape 100000]{
\includegraphics[width=0.44\textwidth]{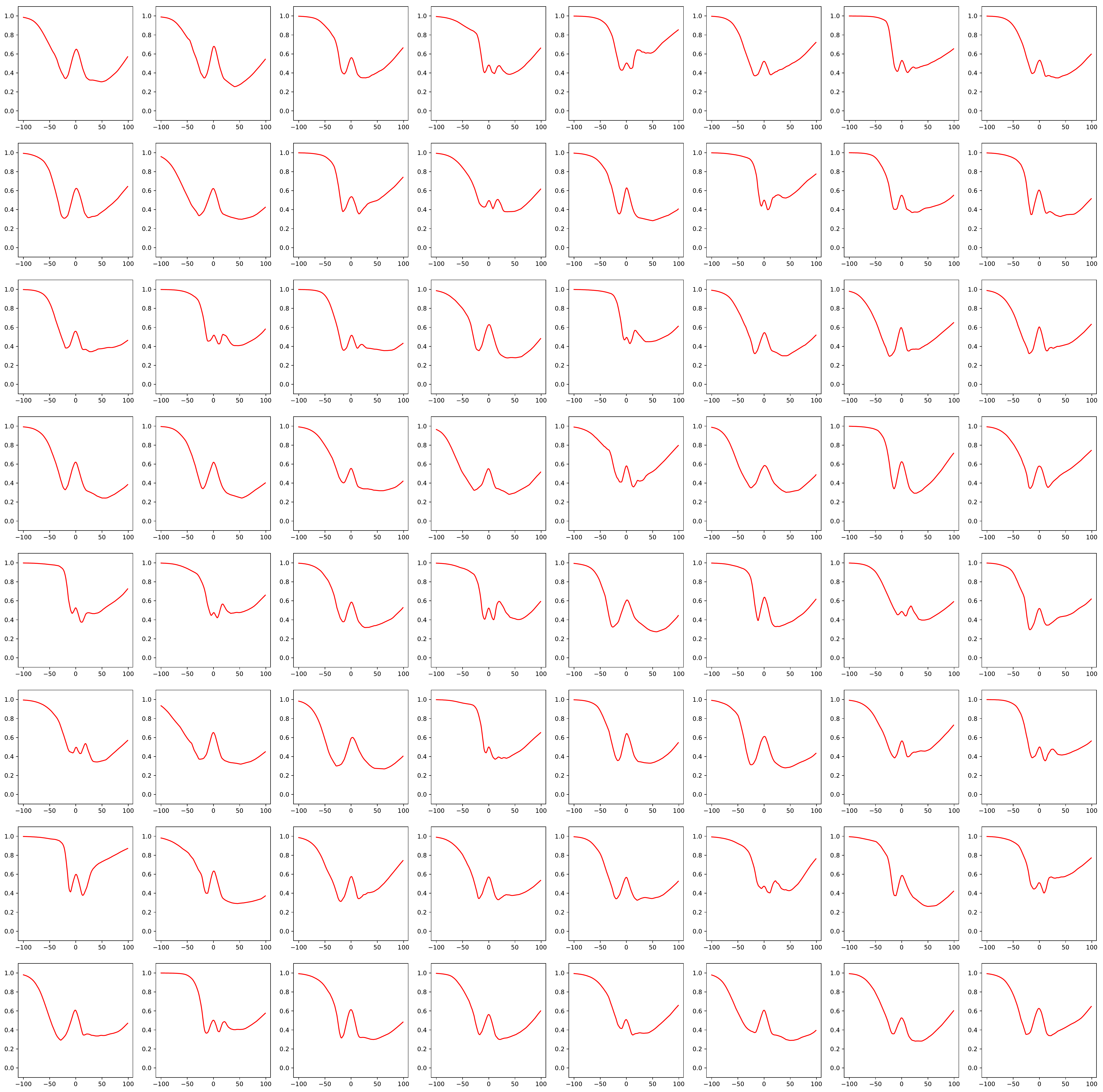}
}
\phantomcaption
\end{figure*}

\begin{figure*}
\centering
\ContinuedFloat
\subfloat[Generated 200000]{
\includegraphics[width=0.44\textwidth]{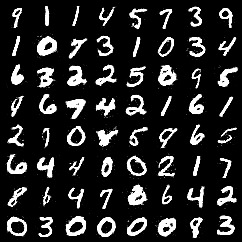}
}
\subfloat[Landscape 200000]{
\includegraphics[width=0.44\textwidth]{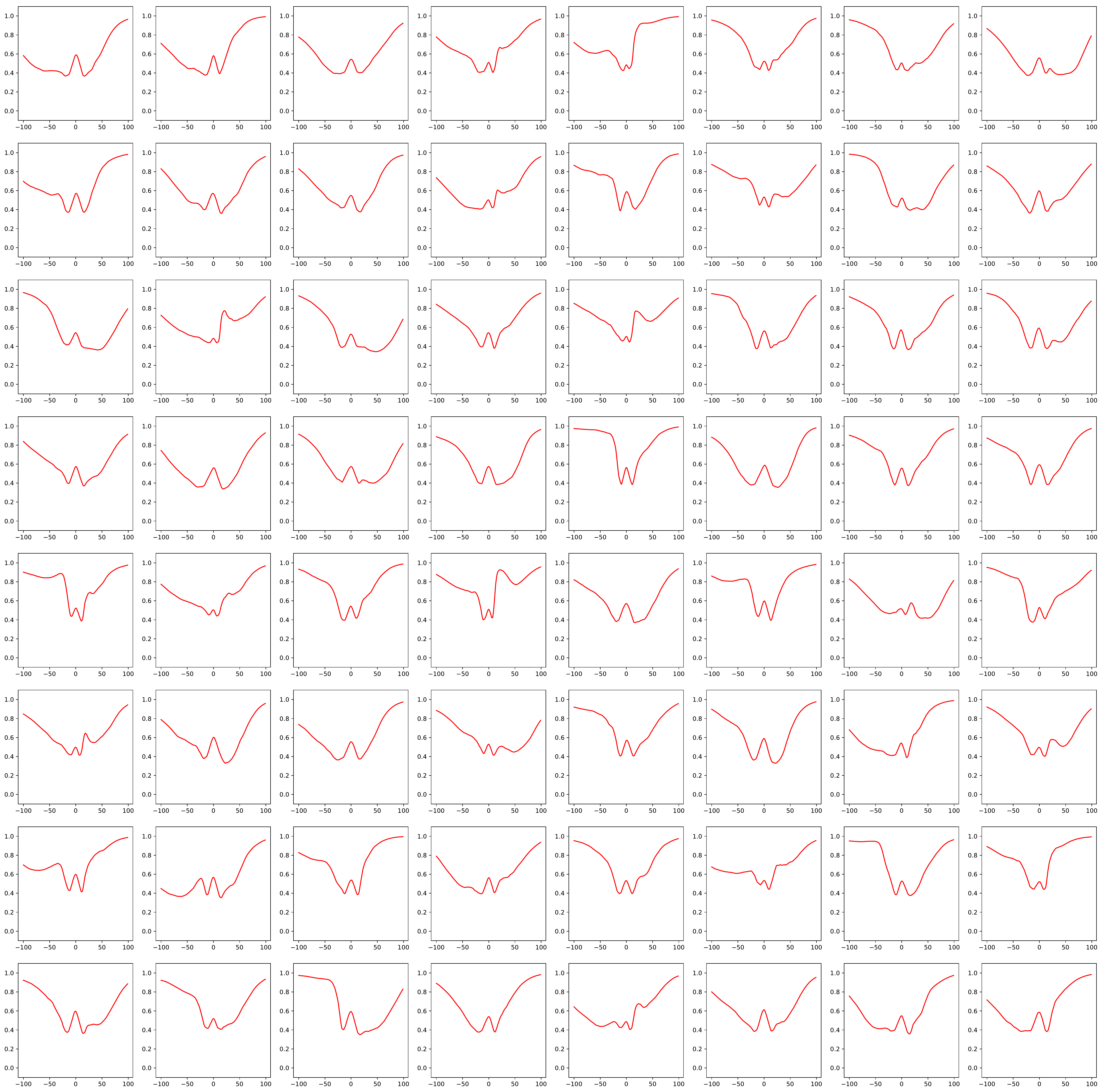}
}

\caption{GAN-R1, $\lambda=100$}
\label{figappx:ganr1}
\end{figure*}

\begin{figure*}
\centering
\subfloat[Real]{
\includegraphics[width=0.44\textwidth]{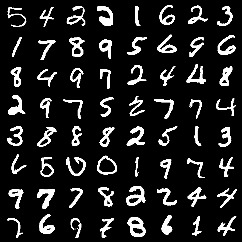}
}
\subfloat[Landscape 5000]{
\includegraphics[width=0.44\textwidth]{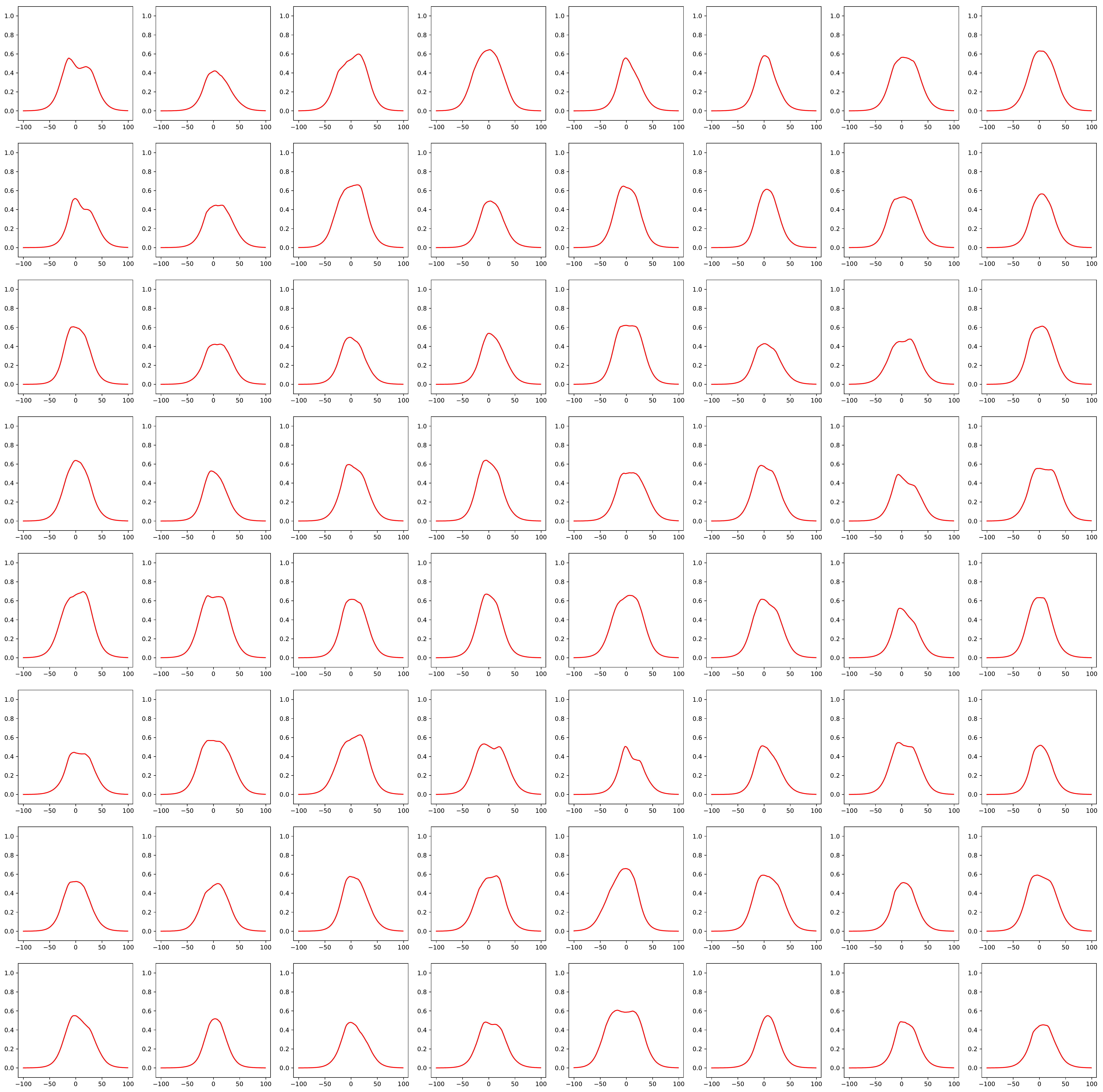}
}\\
\subfloat[Generated 50000]{
\includegraphics[width=0.44\textwidth]{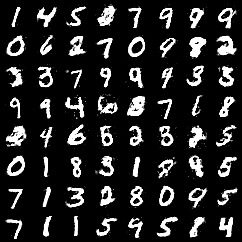}
}
\subfloat[Landscape 50000]{
\includegraphics[width=0.44\textwidth]{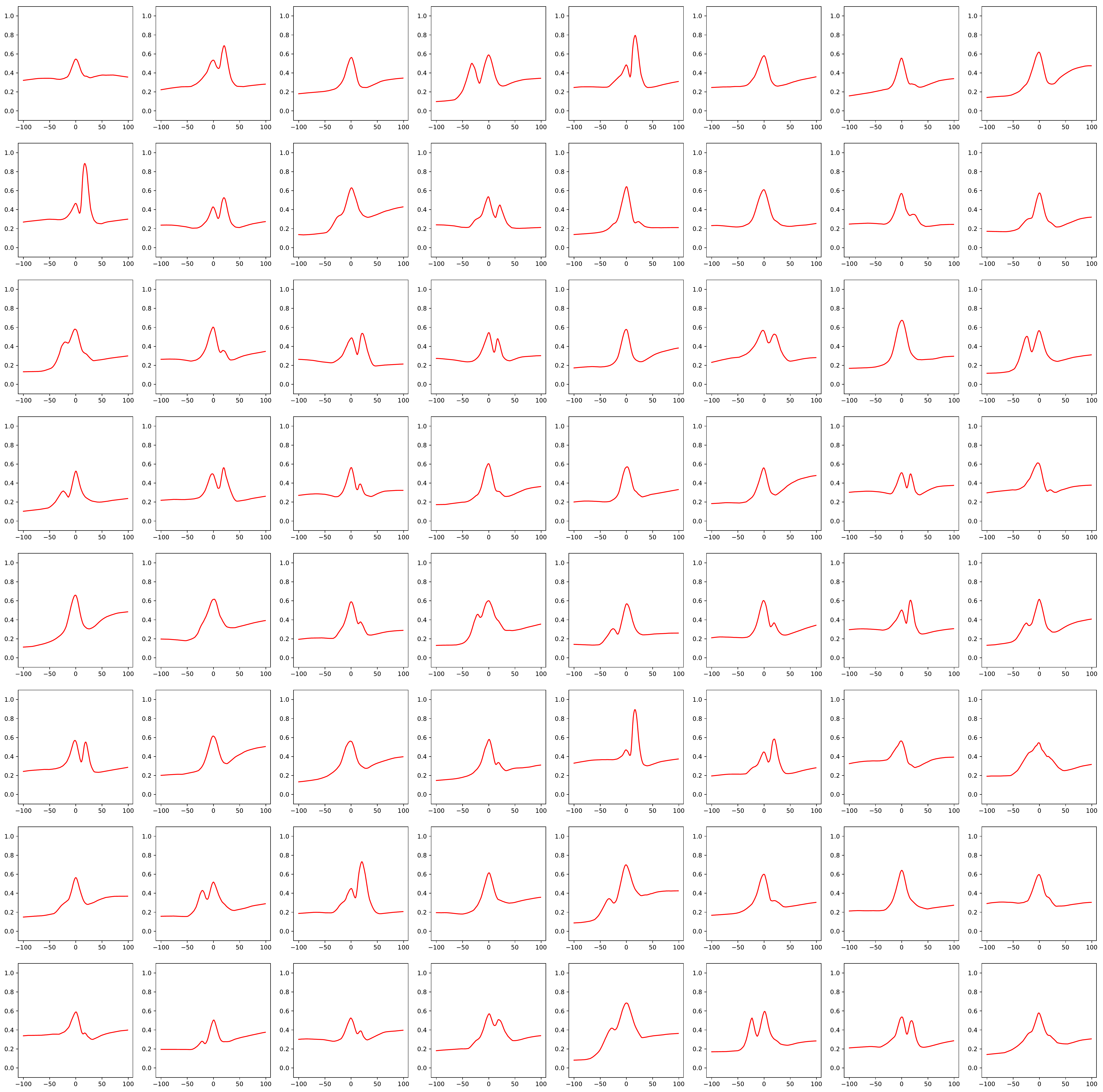}
}\\
\subfloat[Generated 100000]{
\includegraphics[width=0.44\textwidth]{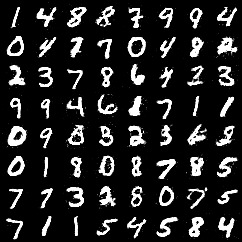}
}
\subfloat[Landscape 100000]{
\includegraphics[width=0.44\textwidth]{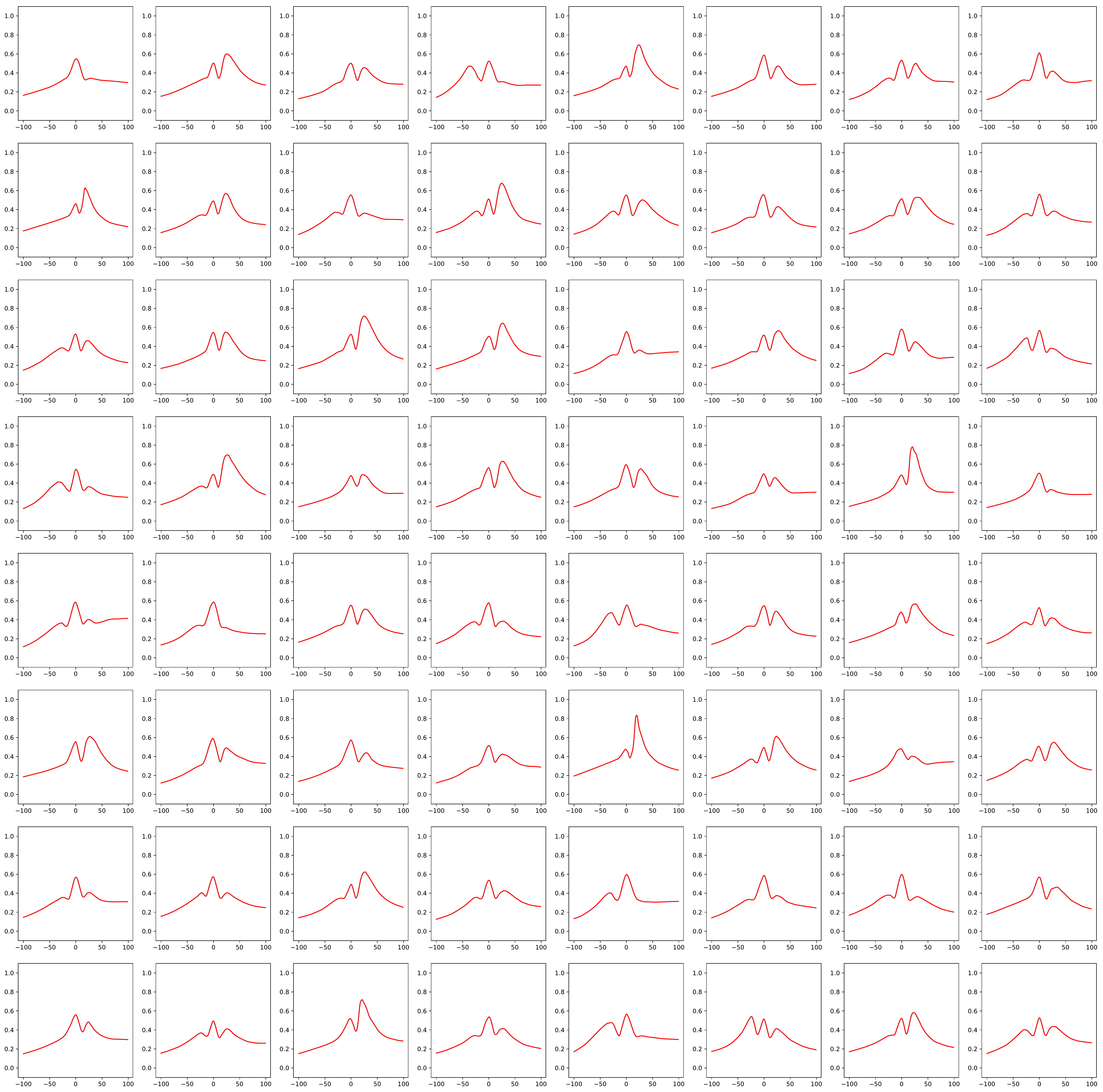}
}
\phantomcaption
\end{figure*}

\begin{figure*}
\centering
\ContinuedFloat
\subfloat[Generated 200000]{
\includegraphics[width=0.44\textwidth]{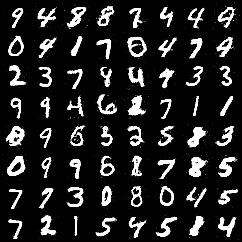}
}
\subfloat[Landscape 200000]{
\includegraphics[width=0.44\textwidth]{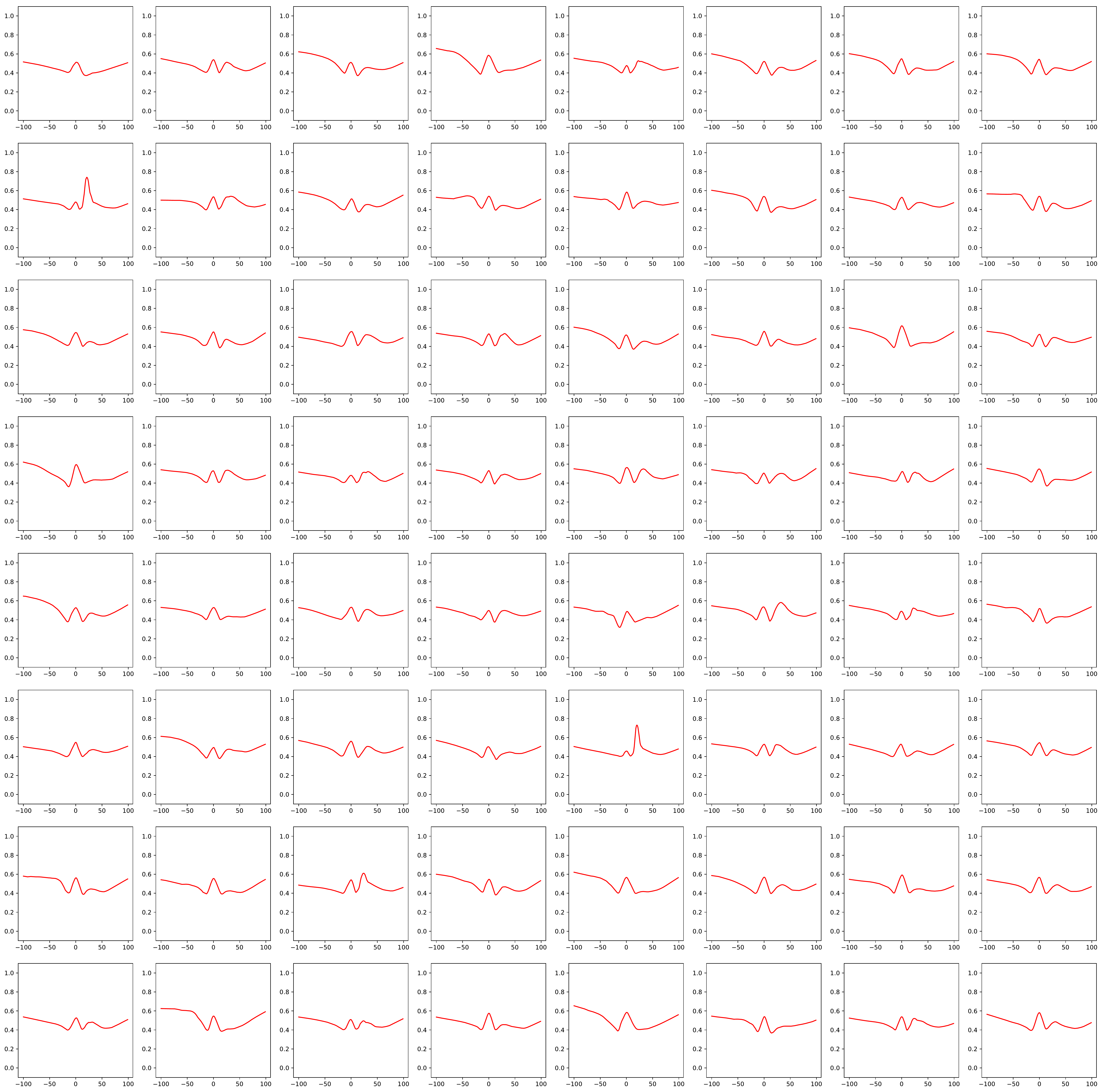}
}
\caption{GAN-0GP, $\lambda=100$.}
\label{figappx:gan0gp}
\end{figure*}

\begin{figure*}
\centering
\subfloat[Real]{\includegraphics[width=0.44\textwidth]{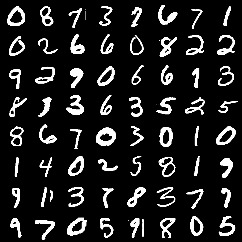}}
\subfloat[Landscape 5000]{\includegraphics[width=0.44\textwidth]{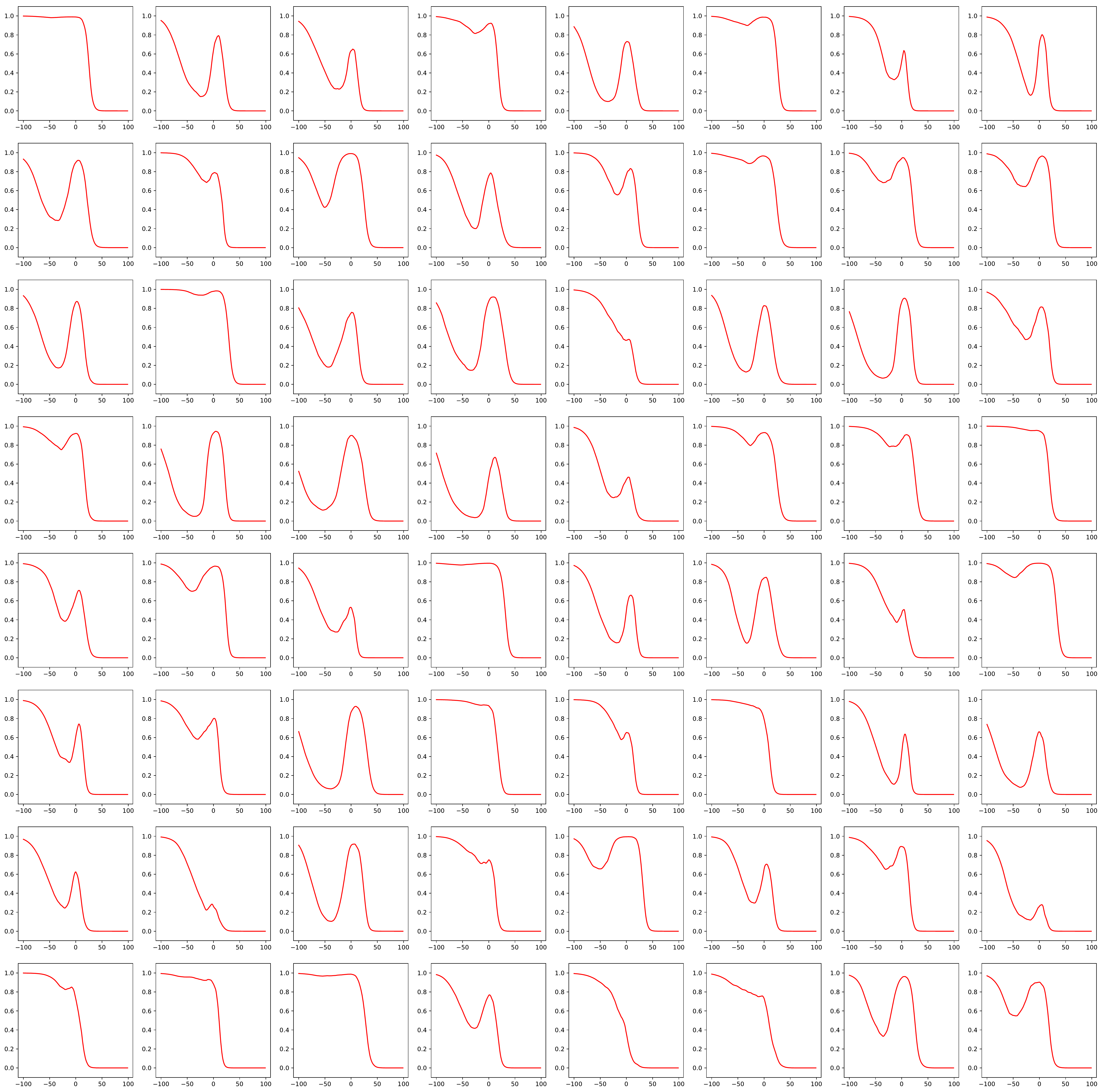}}

\subfloat[Generated 50000]{\includegraphics[width=0.44\textwidth]{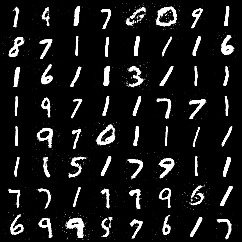}}
\subfloat[Landscape 50000]{\includegraphics[width=0.44\textwidth]{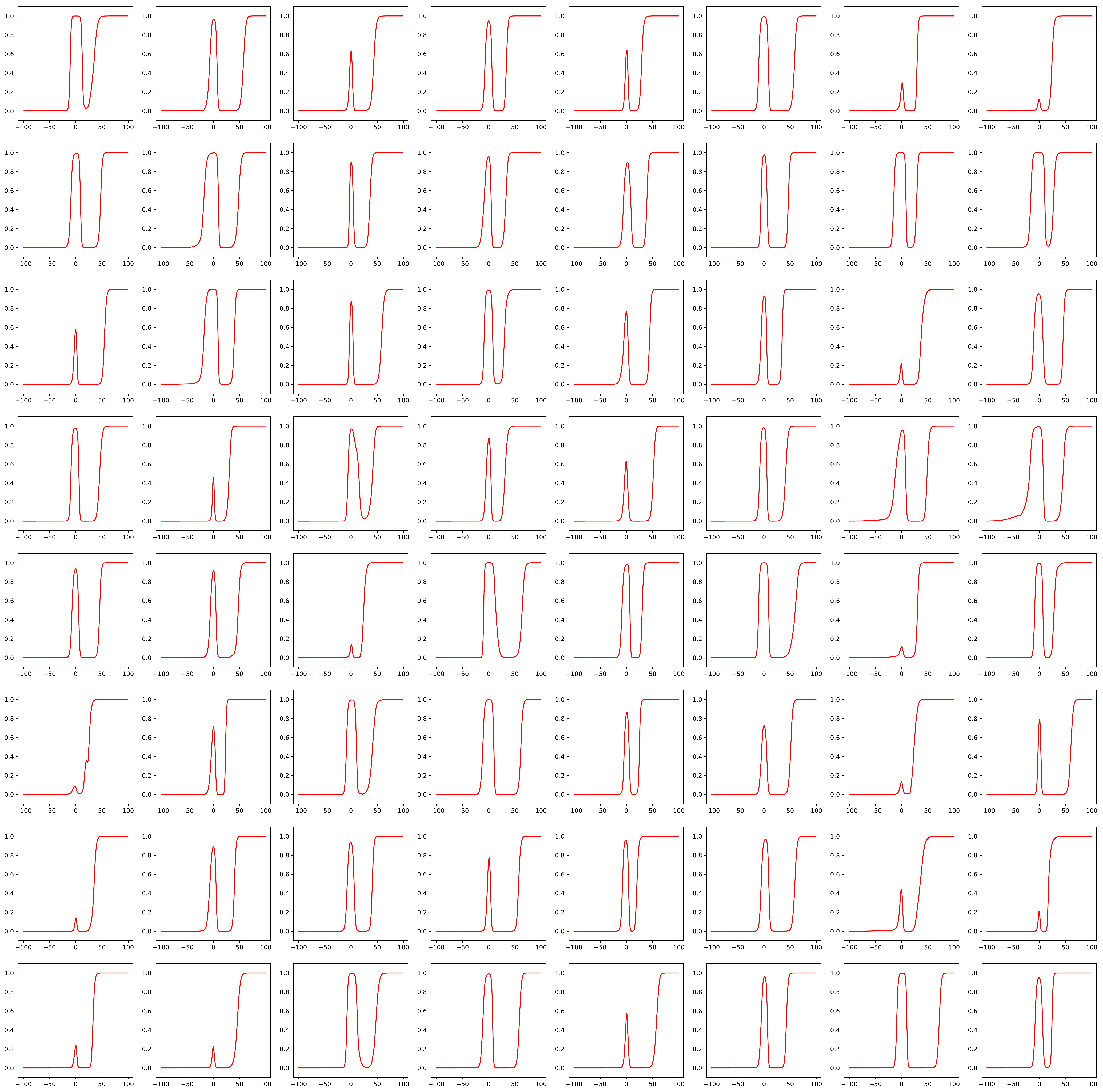}}

\subfloat[Generated 100000]{\includegraphics[width=0.44\textwidth]{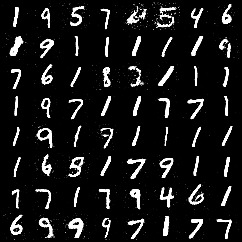}}
\subfloat[Landscape 100000]{\includegraphics[width=0.44\textwidth]{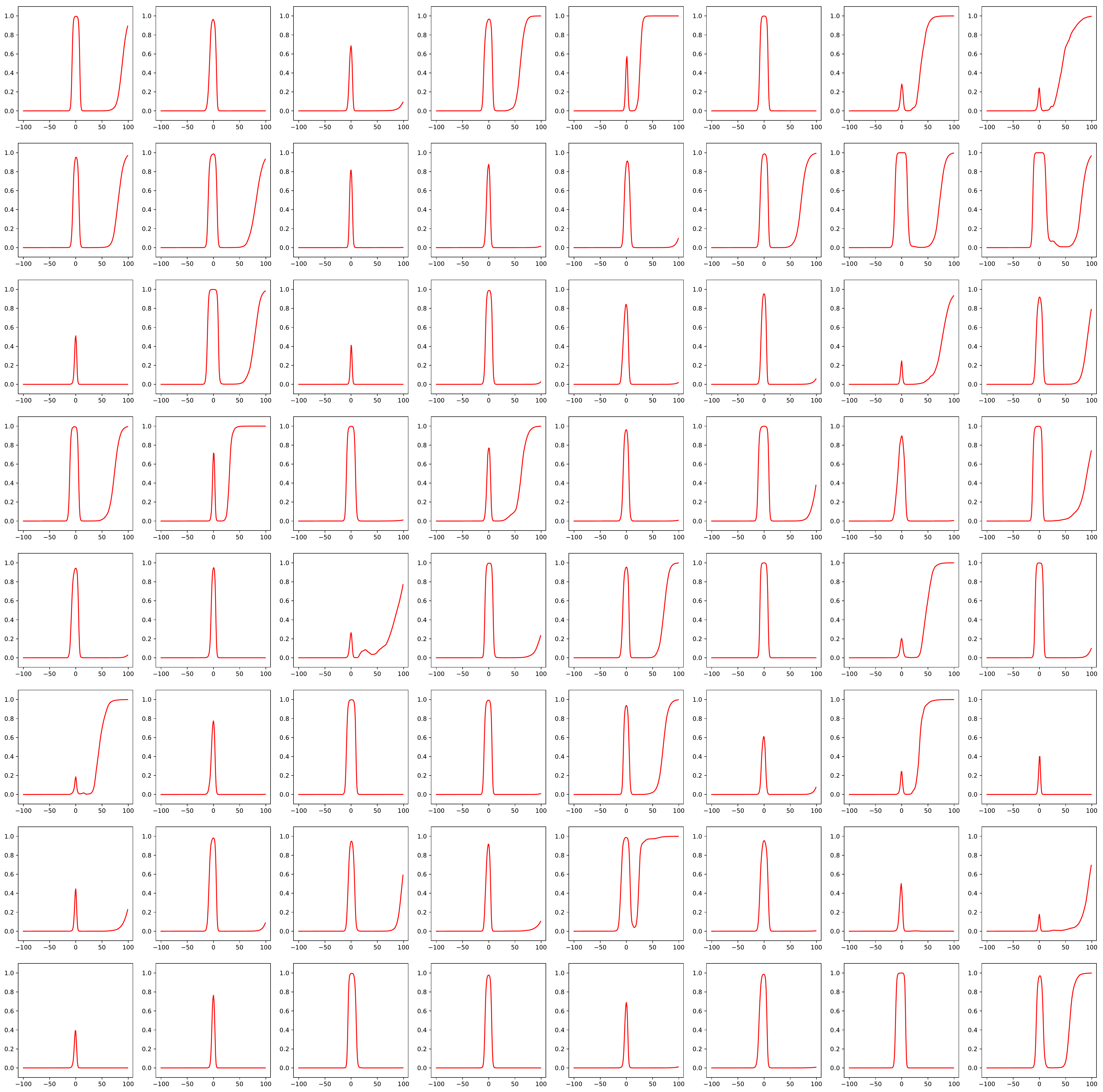}}
\phantomcaption
\end{figure*}

\begin{figure*}
\centering
\ContinuedFloat
\subfloat[Generated 200000]{\includegraphics[width=0.44\textwidth]{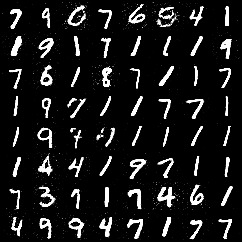}}
\subfloat[Landscape 200000]{\includegraphics[width=0.44\textwidth]{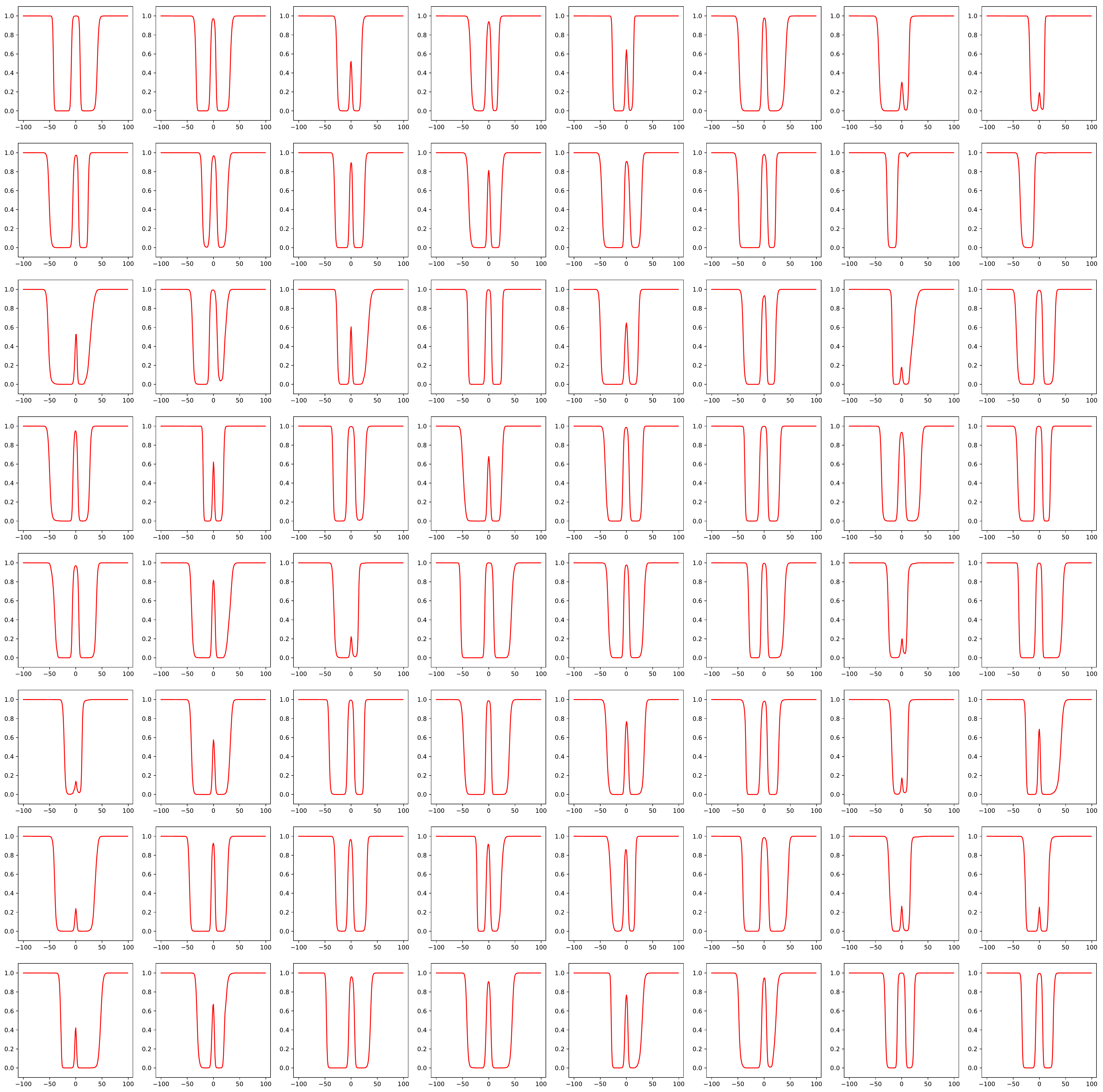}}
\caption{GAN-0GP with $\lambda = 10$.}
\label{figappx:gan0gp10}
\end{figure*}

\begin{figure*}
\centering
\subfloat[Real]{
\includegraphics[width=0.44\textwidth]{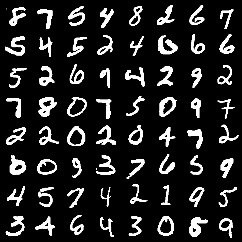}
}
\subfloat[Landscape 5000]{
\includegraphics[width=0.44\textwidth]{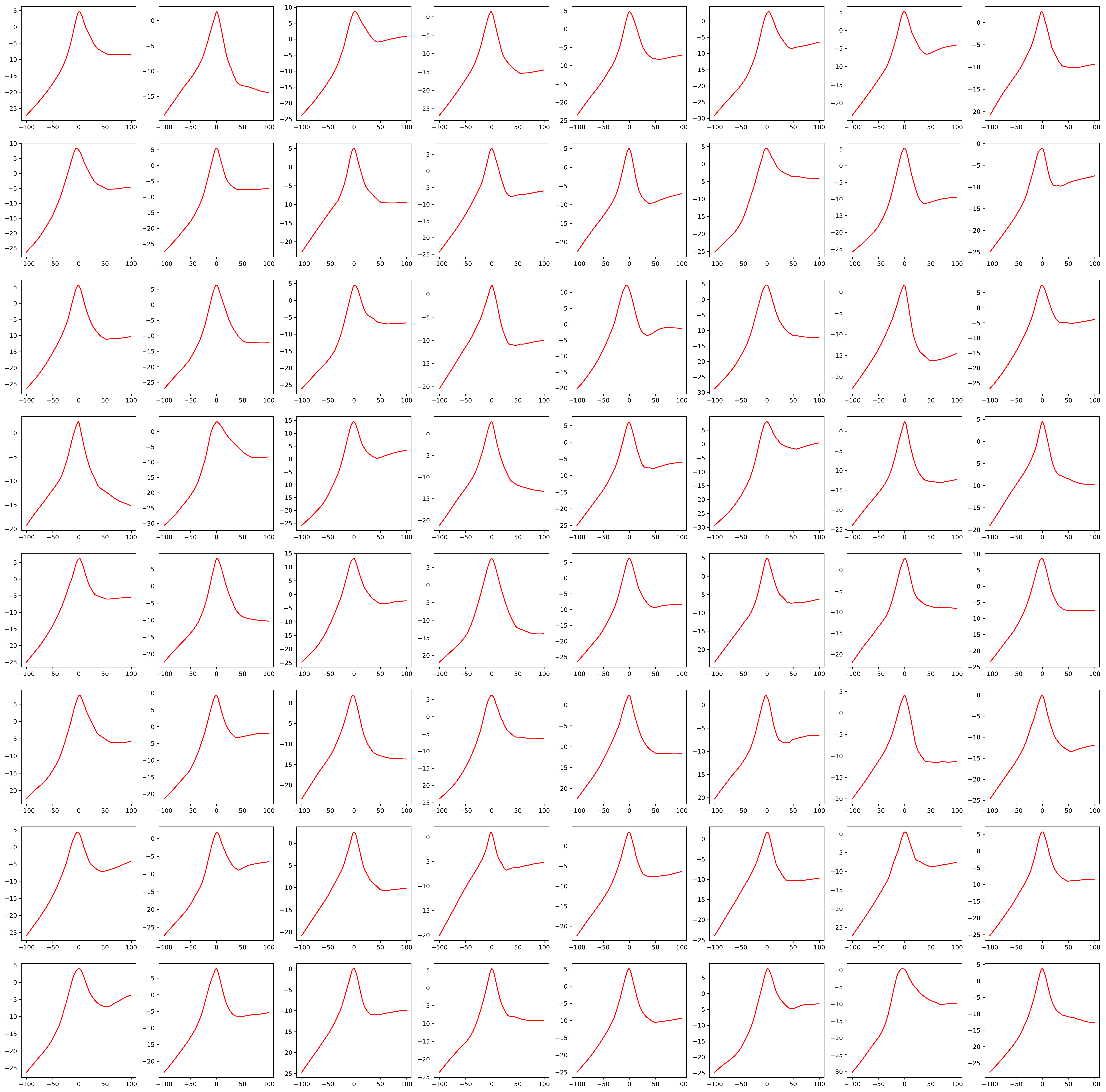}}

\subfloat[Generated 50000]{
\includegraphics[width=0.44\textwidth]{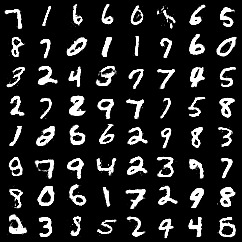}
}
\subfloat[Landscape 50000]{
\includegraphics[width=0.44\textwidth]{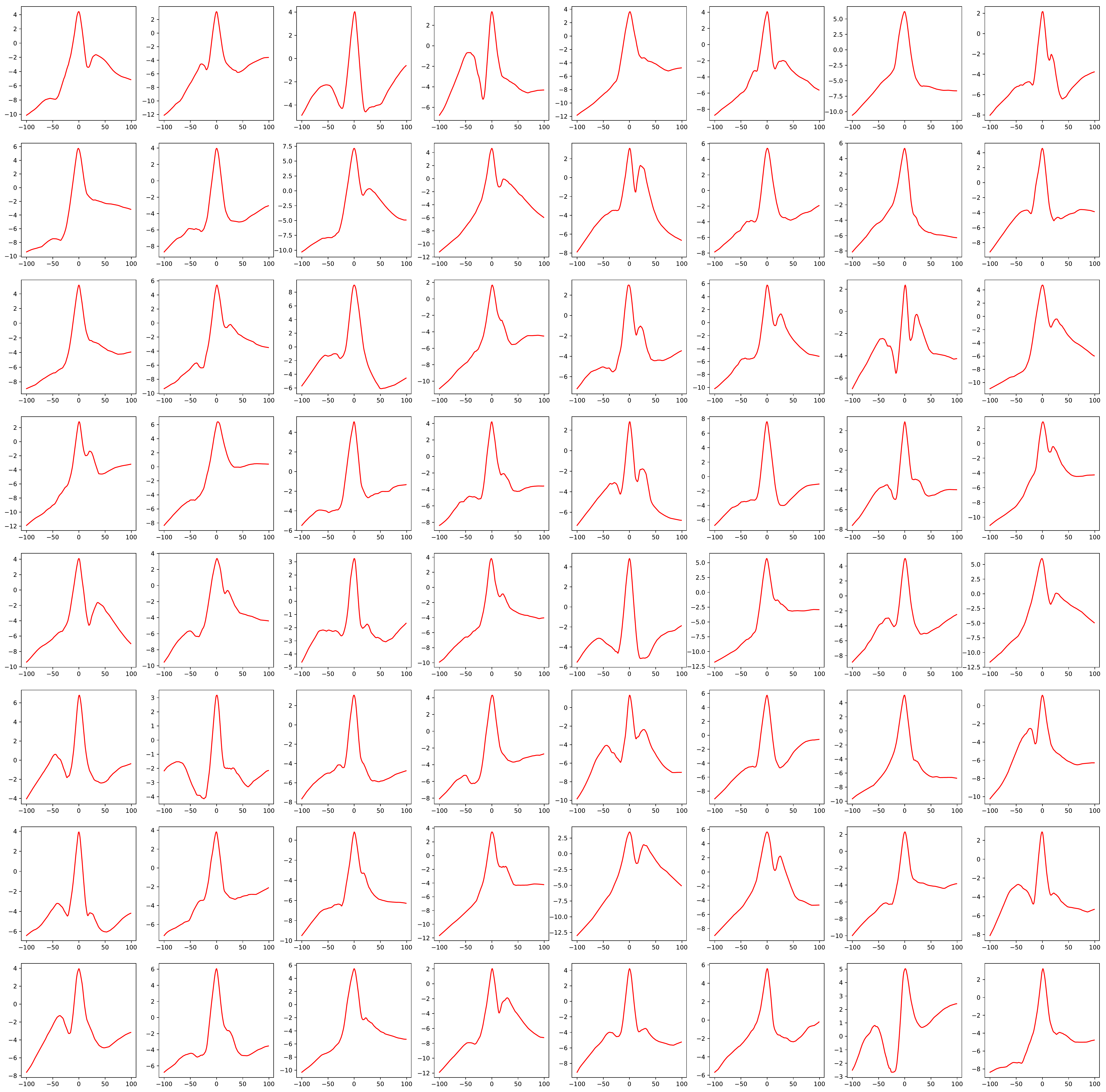}}

\subfloat[Generated 100000]{
\includegraphics[width=0.44\textwidth]{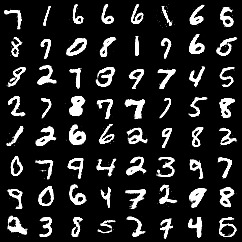}
}
\subfloat[Landscape 100000]{
\includegraphics[width=0.44\textwidth]{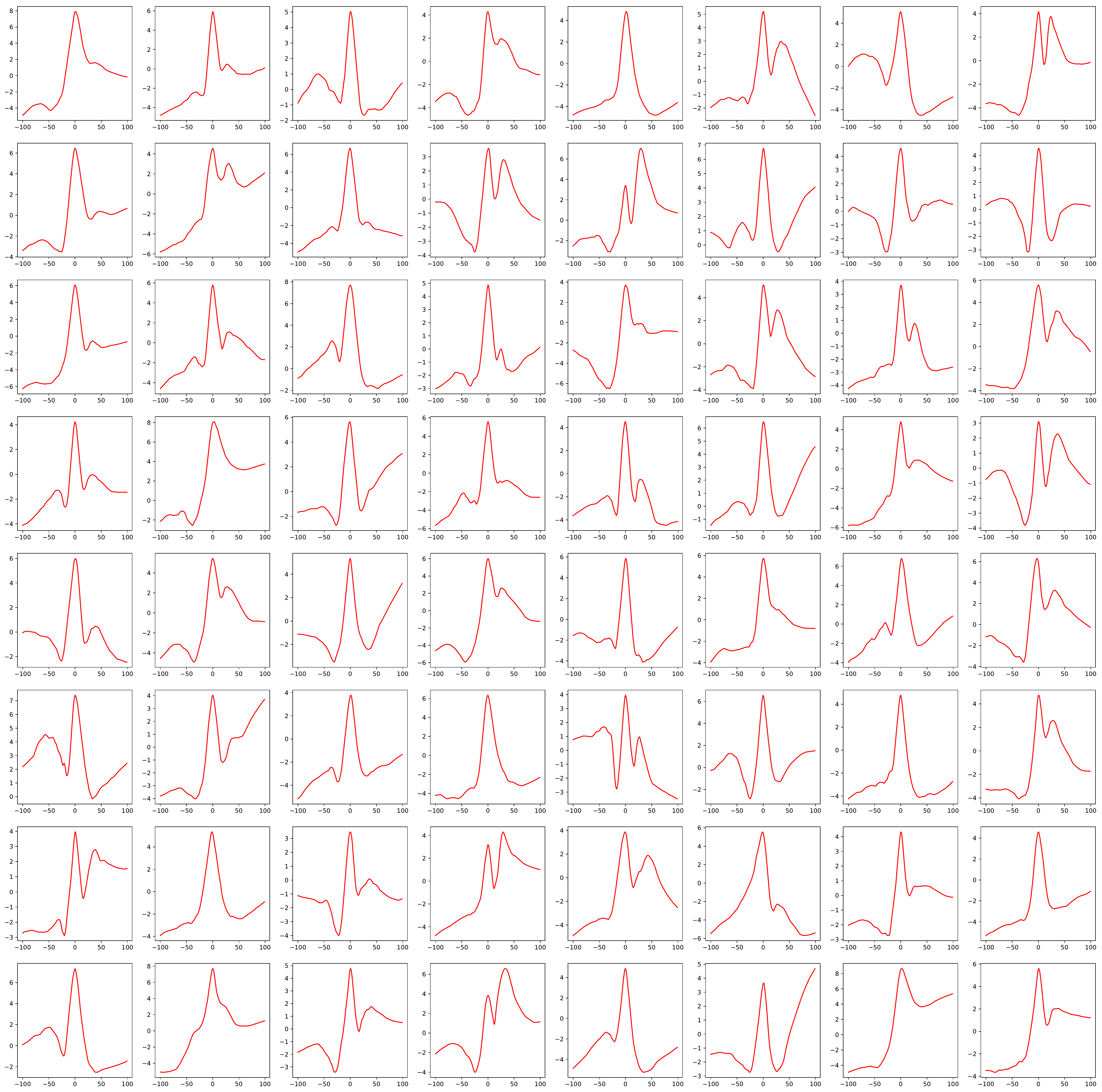}}
\phantomcaption
\end{figure*}

\begin{figure*}
\centering
\ContinuedFloat

\subfloat[Generated 200000]{
\includegraphics[width=0.44\textwidth]{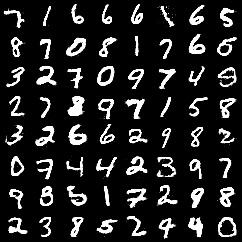}
}
\subfloat[Landscape 200000]{
\includegraphics[width=0.44\textwidth]{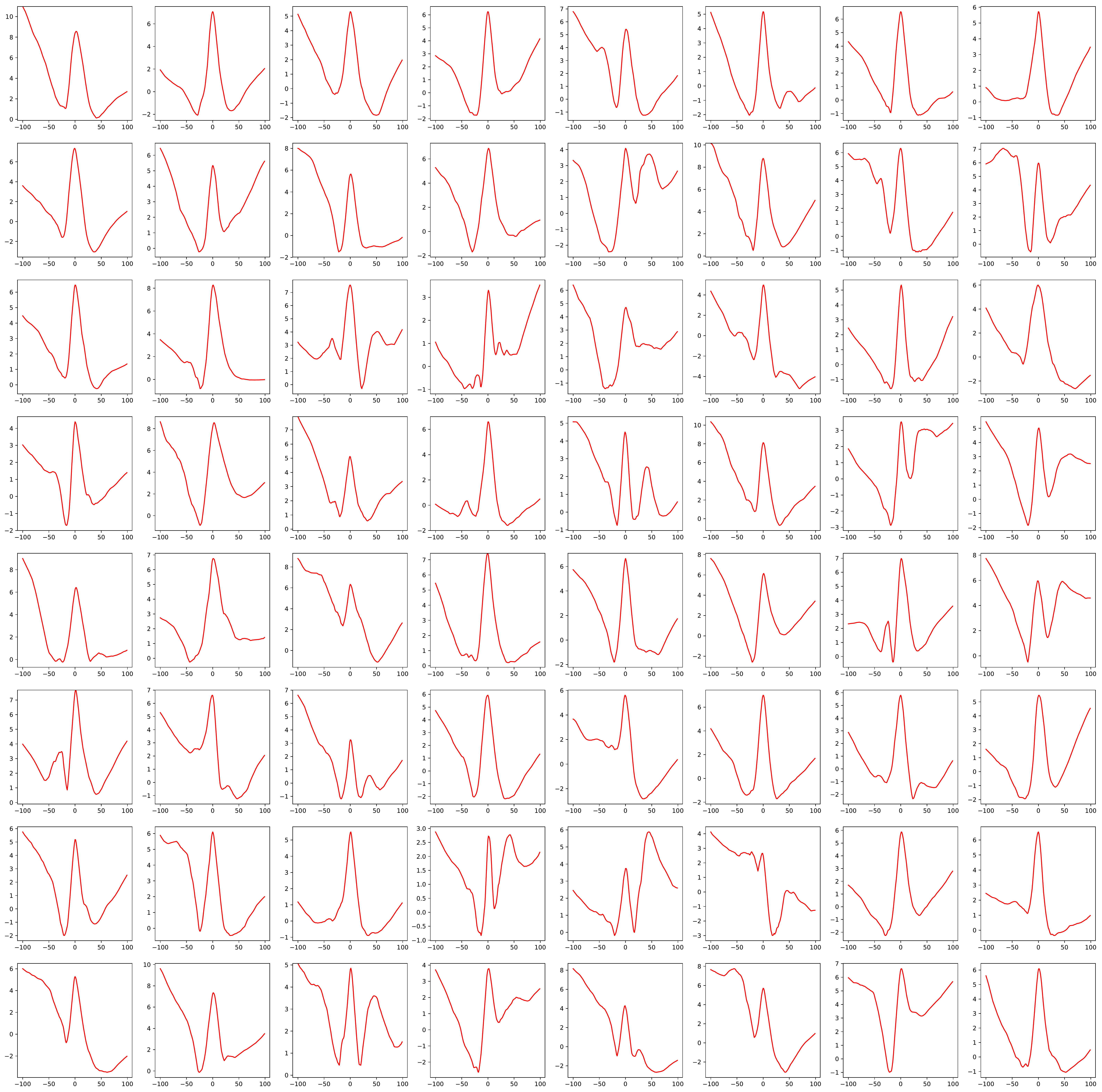}}

\caption{WGAN-GP with $\lambda=10$}
\label{figappx:wgangp}
\end{figure*}

\begin{figure*}
\centering
\subfloat[Real]{\includegraphics[width=0.44\textwidth]{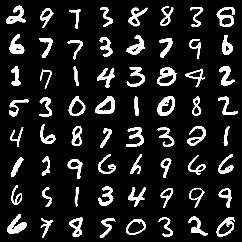}}
\subfloat[Landscape 5000]{\includegraphics[width=0.44\textwidth]{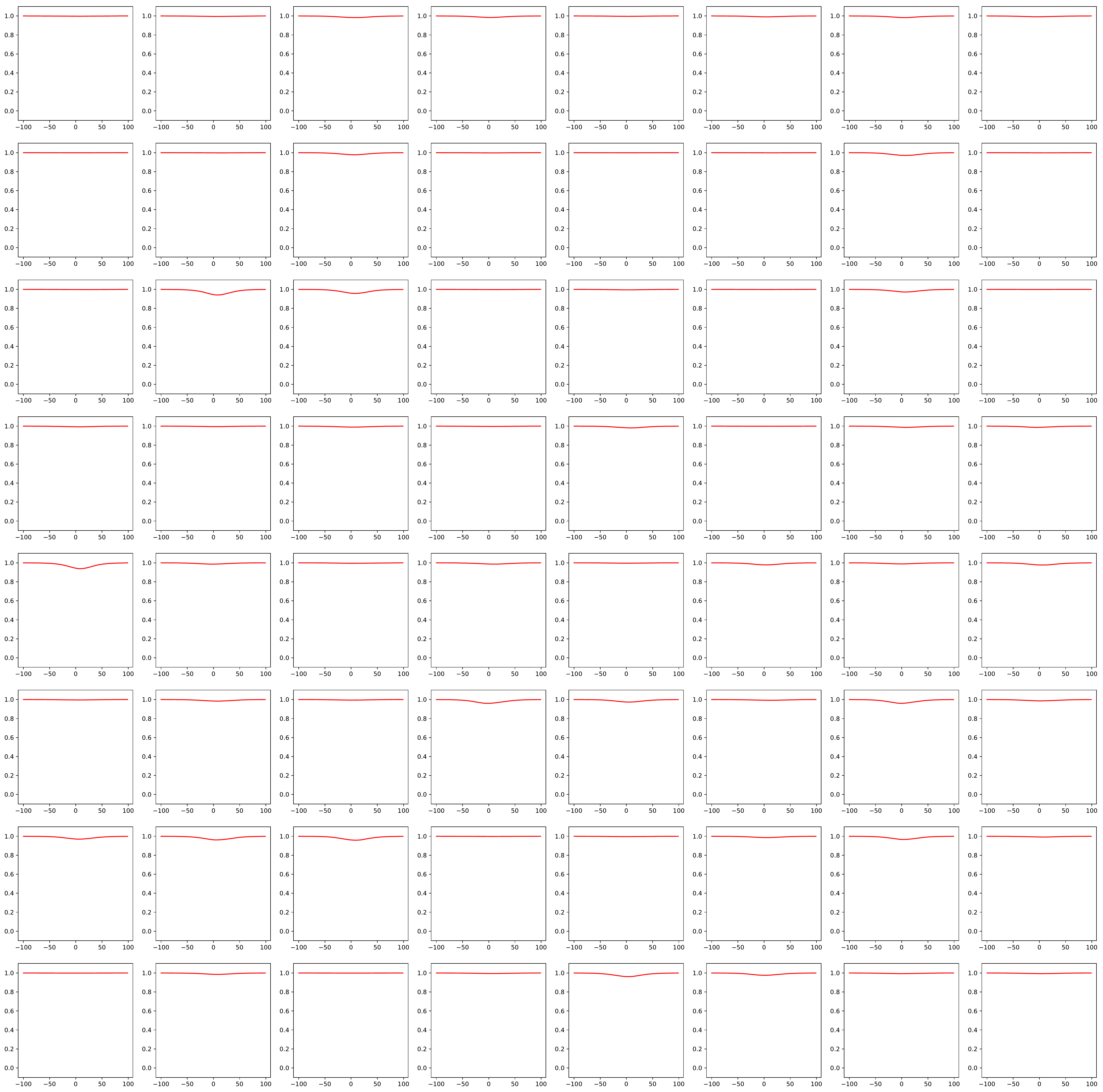}}

\subfloat[Generated 50000]{\includegraphics[width=0.44\textwidth]{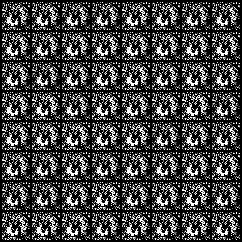}}
\subfloat[Landscape 50000]{\includegraphics[width=0.44\textwidth]{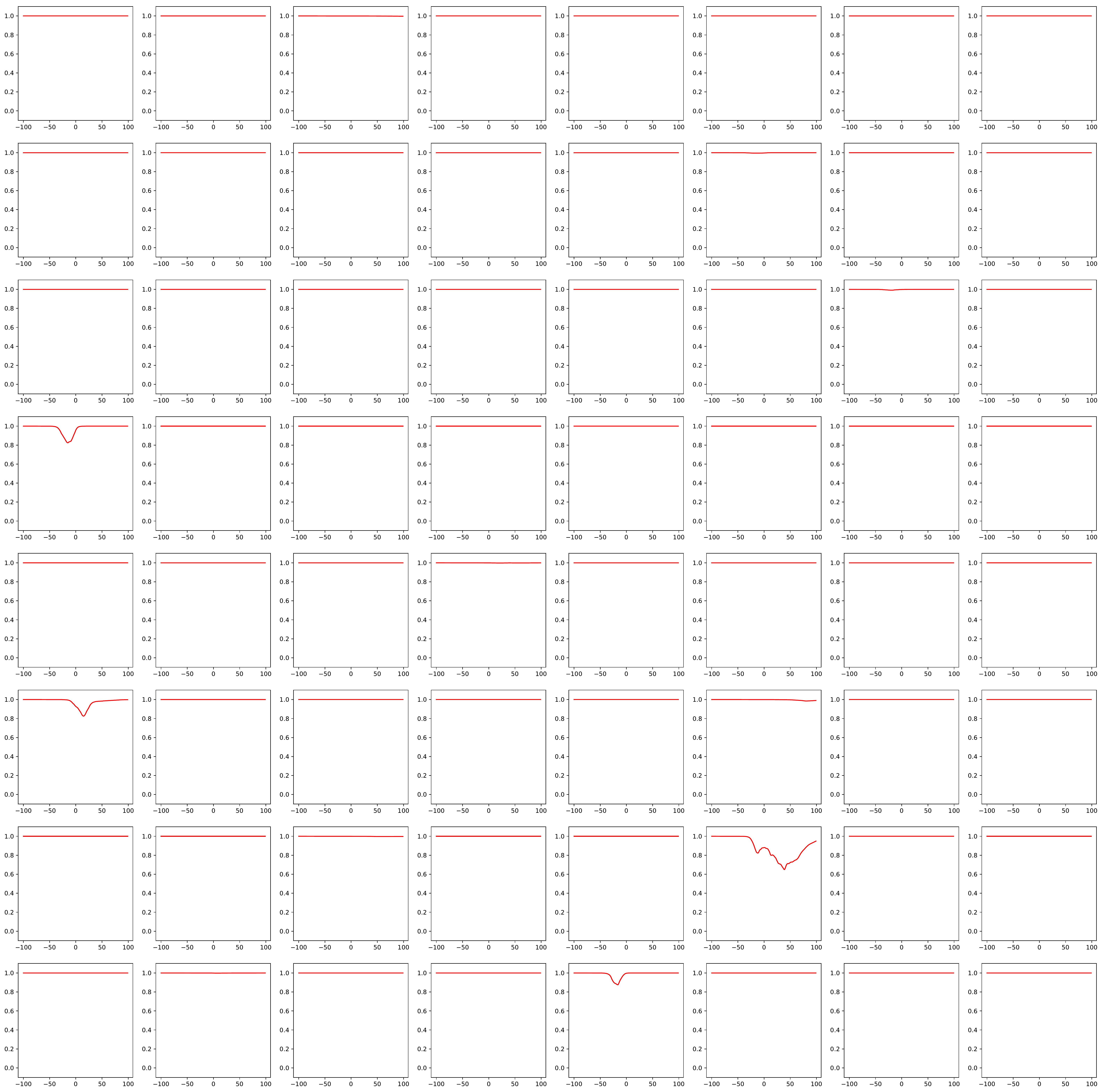}}

\subfloat[Generated 100000]{\includegraphics[width=0.44\textwidth]{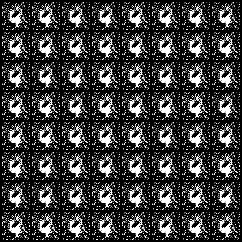}}
\subfloat[Landscape 50000]{\includegraphics[width=0.44\textwidth]{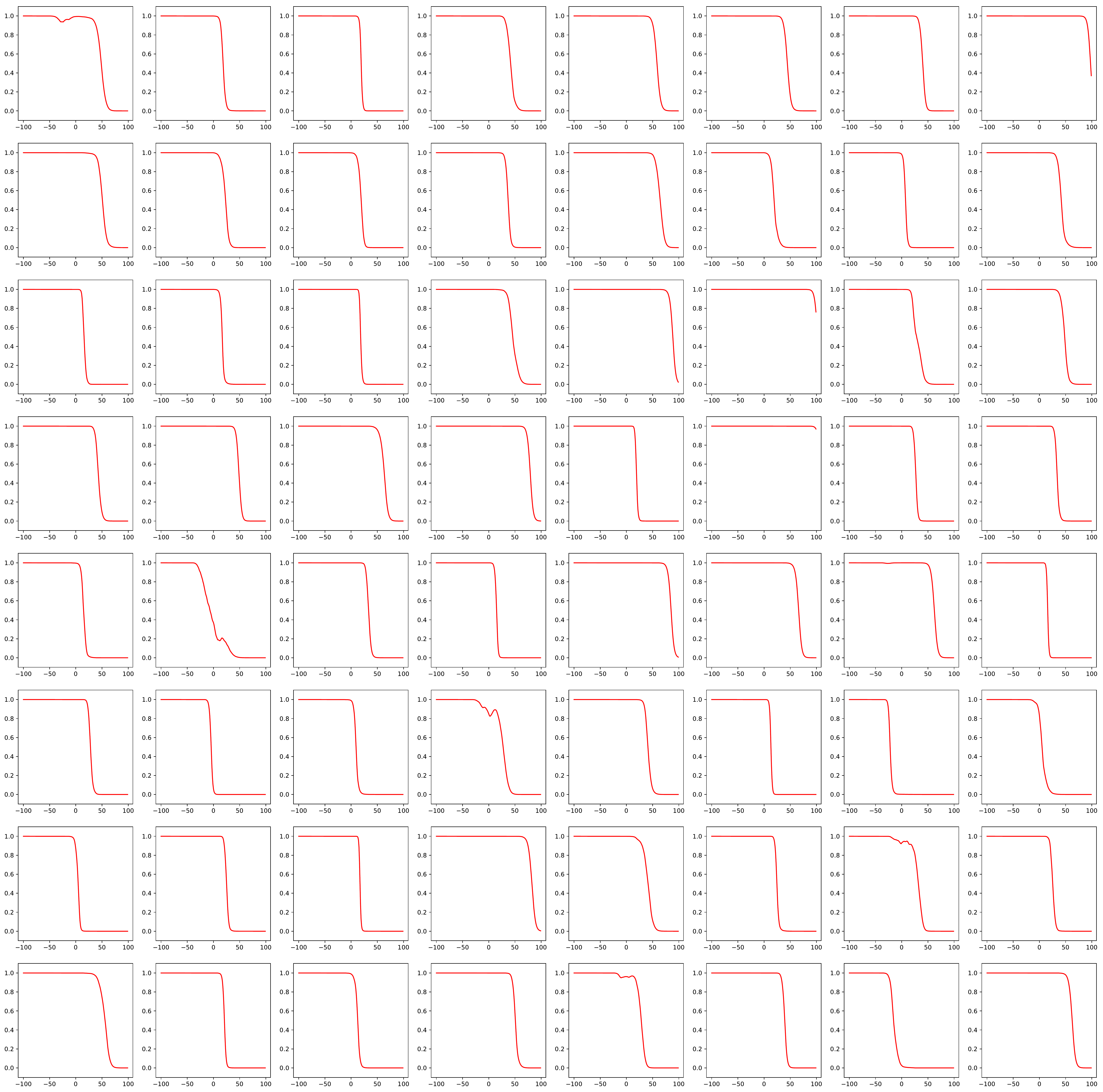}}
\phantomcaption
\end{figure*}

\begin{figure*}
\centering
\ContinuedFloat
\subfloat[Generated 200000]{\includegraphics[width=0.44\textwidth]{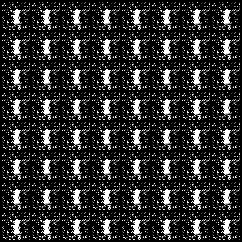}}
\subfloat[Landscape 200000]{\includegraphics[width=0.44\textwidth]{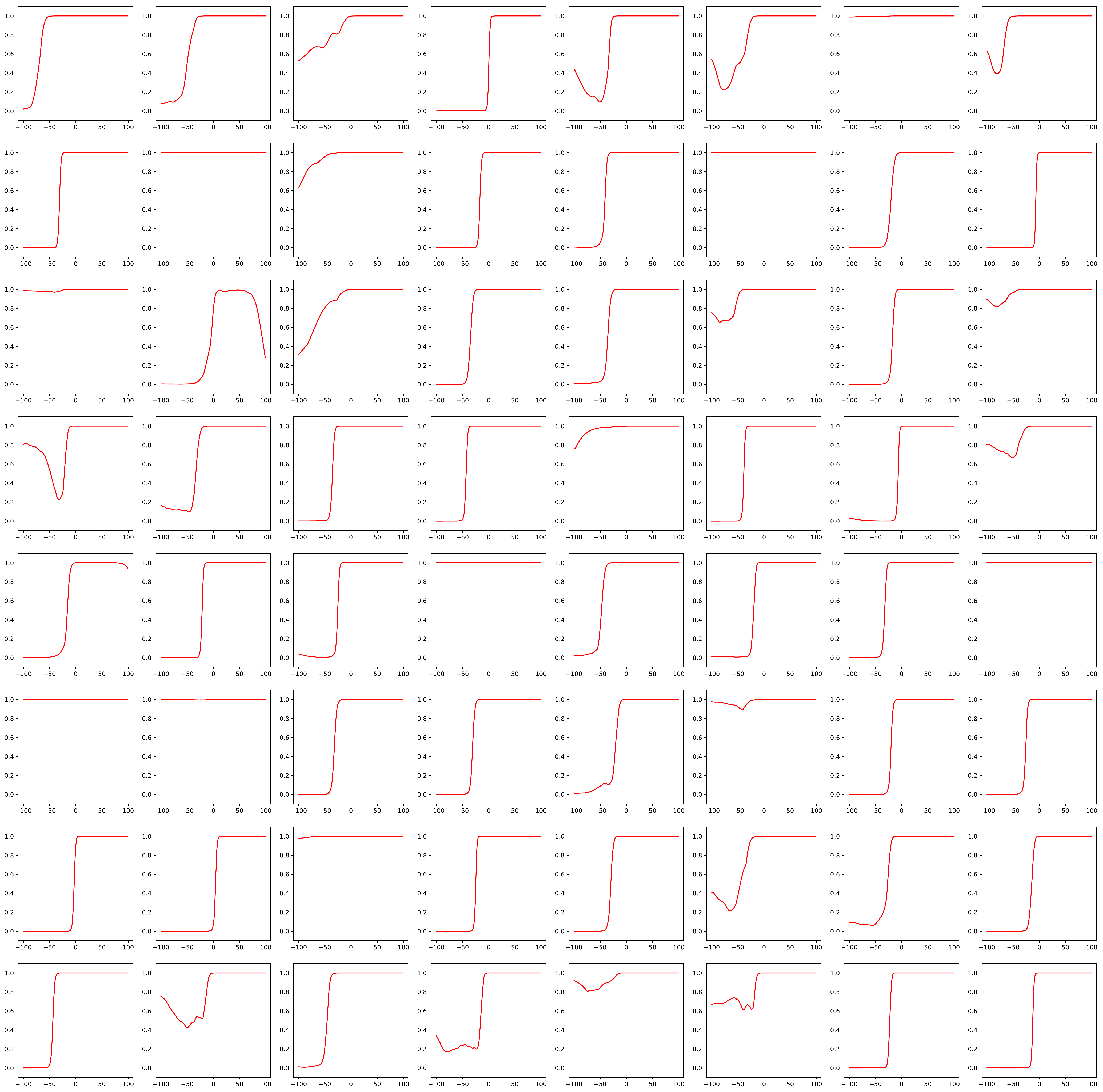}}

\caption{GAN-NS with SGD optimizer. The generator keep moving between regions of the data space and does not converge.}
\label{figappx:gannsSGD}
\end{figure*}

\end{document}